\newcommand{\norm}[1]{\left\|#1\right\|}
\newtheorem{theorem}{Theorem}
\newtheorem{lem}[theorem]{Lemma}
\newtheorem{claim}[theorem]{Claim}
\newtheorem{corollary}[theorem]{Corollary}
\newtheorem{definition}[theorem]{Definition}
\newtheorem{remark}[theorem]{Remark}
\newcommand{\bi}{\begin{itemize}}
\newcommand{\ei}{\end{itemize}}
\newcommand{\ben}{\begin{enumerate}}
\newcommand{\een}{\end{enumerate}}
\newcommand{\bean}{\begin{eqnarray*} }
\newcommand{\eean}{\end{eqnarray*} }
\newcommand{\bea}{\begin{eqnarray} }
\newcommand{\eea}{\end{eqnarray} }
\newcommand{\ba}{\begin{align*} }
\newcommand{\ea}{\end{align*} }
\newcommand{\nn}{\nonumber}
\newcommand{\nois}{g}
\newcommand{\epsbnd}{d(\alpha)}
\newcommand{\epsden}{d_{\mathrm{denom}}(\alpha)}
\newcommand{\outfracrow}{\small{\text{max-outlier-frac-row}}}
\newcommand{\outfraccol}{\small{\text{max-outlier-frac-col}}}
\newcommand{\dif}{{\text{dif}}}
\newcommand{\xhat}{\bm{\hat{x}}}
\newcommand{\bl}{\begin{frame}}
\newcommand{\el} {\end{frame}}
\newcommand{\cred}{\color{red}} 
\newcommand{\cb}{\color{blue}}
\newcommand{\cbl}{\color{black}}
\newcommand{\evdeq}{\overset{\mathrm{EVD}}=}
\renewcommand\thetheorem{\arabic{section}.\arabic{theorem}}
\newcommand{\tmax}{d} 
\newcommand{\wt}{\bm{w}_t}
\newcommand{\xt}{\bm{x}_t}
\newcommand{\x}{\bm{x}}
\newcommand{\xhatt}{\hat{\bm{x}}_t}
\renewcommand{\l}{\bm{\ell}}
\newcommand{\lt}{\bm{\ell}_t}
\newcommand{\lhat}{\hat{\bm{\ell}}}
\newcommand{\lhatt}{\hat{\l}_t}   
\newcommand{\yt}{\bm{y}_t}
\newcommand{\y}{\bm{y}}
\newcommand{\tty}{\tilde{\bm{y}}}
\newcommand{\w}{\bm{w}}
\renewcommand{\v}{{\bm{\nu}}}
\newcommand{\vt}{\v_t}
\renewcommand{\a}{\bm{a}}
\newcommand{\e}{\bm{e}} 
\newcommand{\et}{\bm{e}_t}
\newcommand{\Pt}{\bm{P}_t}
\newcommand{\at}{\bm{a}_t}
\newcommand{\I}{\bm{I}}
\newcommand{\Lam}{\bm{\Lambda}}
\newcommand{\T}{\mathcal{T}}
\newcommand{\J}{\mathcal{J}}
\newcommand{\D}{\bm{D}}
\newcommand{\A}{\bm{A}}
\newcommand{\old}{\mathrm{old}}
\newcommand{\Lhat}{\hat{\bm{L}}}
\renewcommand{\P}{\bm{P}}
\newcommand{\U}{\bm{U}}
\newcommand{\V}{\bm{V}}
\newcommand{\R}{\bm{R}}
\renewcommand{\b}{\bm{b}}
\newcommand{\Phat}{\hat{\bm{P}}}
\newcommand{\Span}{\operatorname{span}} 
\newcommand{\rank}{\operatorname{rank}}
\newcommand{\E}{\mathbb{E}}
\newcommand{\train}{\mathrm{train}}
\newcommand{\That}{\hat{\mathcal{T}}}
\newcommand{\SE}{\sin\theta_{\max}}
\newcommand{\that}{{\hat{t}}}
\newcommand{\M}{\bm{M}}
\newcommand{\X}{\bm{X}}
\newcommand{\Y}{\bm{Y}}
\newcommand{\W}{\bm{W}}
\newcommand{\Z}{\bm{Z}}
\newcommand\diagmat[2]{\begin{bmatrix} #1 & \bm{0} \\ \bm{0} & #2\end{bmatrix}} 
\newcommand{\ed}{\mathrm{end}}
\newcommand{\SVD}{\mathrm{SVD}}
\newcommand{\B}{\bm{B}}
\newcommand{\alphadel}{\alpha}
\renewcommand{\Re}{\mathbb{R}}
\newcolumntype{C}[1]{>{\centering\let\newline\\\arraybackslash\hspace{0pt}}m{#1}}
\newcommand{\init}{\mathrm{init}}
\newcommand{\offline}{\mathrm{offline}}
\newcommand{\bv}{b_v}
\newcommand{\bz}{b} 
\newcommand{\rmat}{r_{\mat}}
\newcommand{\semax}{\Delta_{\dist}}
\newcommand{\Aa}{\bm{Q}_1}
\newcommand{\Ba}{\bm{Q}_2}
\newcommand{\Ca}{\bm{Q}_3}
\newcommand{\subup}{\text{SubUpd}}
\newcommand{\st}{\bm{x}_t}
\newcommand{\bt}{\bm{b}_t}
\newcommand{\zt}{\bm{Z}_t}
\newcommand{\zz} {\varepsilon} 
\newcommand{\Tt}{\mathcal{T}_t}
\newcommand{\Thatt}{\hat{\mathcal{T}}_t}
\newcommand{\shatcs}{\xhat_{t,cs}}
\newcommand{\smin}{x_{\min}}
\newcommand{\pt}{\bm{P}}
\newcommand{\shatt}{\hat{\bm{x}}_t}
\newcommand{\itt}{\bm{I}_{\Tt}}
\newcommand{\bphi}{\bm{\Phi}}
\newcommand{\bpsi}{\bm{\Psi}}
\newcommand{\lthres}{\omega_{evals}} 
\newcommand{\tildej}{j}
\pgfplotsset{
        my stylecompare/.style={
			width=.4\textwidth,
            height=4.5cm,
            label style={font=\Large},
            title style={font=\Large},
            x tick label style={font =\small, /pgf/number format/1000 sep=},
	    axis lines=left,
	    major x tick style = transparent,
	    major y tick style = transparent,
	    every y tick label/.style={
 		   xshift=-.7cm, yshift=-2pt,anchor=south west,inner sep=0pt,font=\small, 
 		   scaled y ticks=false,
	    },
	    every x tick label/.style={font = \small},
            scaled x ticks=false,
        },
        my legend style compare/.style={
            legend entries={
            		GRASTA,
            		ORPCA,
            		s-ReProCS,	
            		NORST,	
            		Offline-NORST,
            		Alt Proj,
            		RPCA-GD,
            },
            legend style={
                at={(-.31,1.5)},
                anchor=north west,
            },
            legend columns=4,
	    legend style={font=\small},
        },
        cycle multi list={
        {blue, line width=0.6pt, mark=o,mark size=3pt}, 
        {black, line width=0.6pt, mark=square,mark size=2.5pt}, 
        {teal, line width=0.6pt, mark=oplus,mark size=2.5pt},
        {red, line width=0.6pt, mark=triangle,mark size=2.5pt},
        {red, solid, line width=0.5pt, mark=oplus,mark size=2.8pt}, 
        {olive, line width=0.6pt, mark=10-pointed star,mark size=2.5pt},  
        {cyan, line width=0.6pt, mark=Mercedes star,mark size=3pt}, 
        {teal, line width=0.6pt, mark=oplus,mark size=2.5pt}, 
		},
}
\def\MarkLt{4pt}
\def\MarkSep{2pt}
\tikzset{
  TwoMarks/.style={
    postaction={decorate,
      decoration={
        markings,
        mark=at position #1 with
          {
              \begin{scope}[xslant=0.2]
              \draw[line width=\MarkSep,white,-] (0pt,-\MarkLt) -- (0pt,\MarkLt) ;
              \draw[-] (-0.5*\MarkSep,-\MarkLt) -- (-0.5*\MarkSep,\MarkLt) ;
              \draw[-] (0.5*\MarkSep,-\MarkLt) -- (0.5*\MarkSep,\MarkLt) ;
              \end{scope}
          }
       }
    }
  },
  TwoMarks/.default={0.5},
}
\tikzstyle{block}  = [rectangle, draw, rounded corners, text width=5cm, text centered, minimum height=1em]
\tikzstyle{smallblock}  = [rectangle, draw, rounded corners,text width=1.8cm, text centered, minimum height=1em]
\tikzstyle{input}  = [rectangle, draw, text width=1.2cm, text centered, minimum height=1em]
\tikzstyle{output}  = [rectangle, draw, text width=1.2cm, text centered, minimum height=1em]
\tikzstyle{block1}  = [rectangle, draw, rounded corners,text width=5cm, text centered, minimum height=1em]
\tikzstyle{blockl1}  = [rectangle, draw, rounded corners,text width=4cm, text centered, minimum height=1em]
\renewcommand{\zt}{\bm{z}_t}
\renewcommand{\semax}{\Delta}
\newcommand{\xmint}{x_{\min}}
\renewcommand{\dif}{\Delta}
\renewcommand{\rmat}{r_{\scriptscriptstyle{L}}}
\renewcommand{\S}{\bm{X}}
\renewcommand{\SE}{\mathrm{SE}}
\newcommand{\Sig}{\bm{\Sigma}}
\newcommand{\tmaxpca}{\alpha}
\renewcommand{\zz}{\varepsilon}
\renewcommand{\epsbnd}{H(\tmaxpca)}
\renewcommand{\epsden}{H_{\text{denom}}(\tmaxpca)}
\newcommand{\epsdenb}{G_{\text{denom}}(\tmaxpca)}
\newcommand{\Lamb}{\bar{\Lam}}
\newcommand{\avg}{\mathrm{avg}}
\newcommand{\cross}{\mathrm{cross}}
\newcommand{\noise}{\mathrm{noise}}
\renewcommand{\L}{\bm{L}}
\newcommand{\vp}{\bm{v}}
\renewcommand\thetheorem{\arabic{section}.\arabic{theorem}}
\renewcommand{\subsubsection}[1]{{\em \noindent #1.}}
\newtheorem{assu}[theorem]{Assumption}
\renewcommand{\eta}{\mu}
\renewcommand{\Pt}{\P_{(t)}}
\renewcommand{\SVD}{\mathrm{SVD}}
\renewcommand{\cred}{\color{black}}
\begin{document}

\title{PCA in Data-Dependent Noise and Fast Robust Subspace Tracking}
\title{Fast Robust Subspace Tracking via
\\ PCA in Sparse Data-dependent Noise\thanks{Parts of this manuscript have appeared in ICML 2018 \cite{rrpcp_icml}, ISIT 2018 \cite{pca_dd_isit} and Allerton 2017 \cite{pca_dd}. 
}
}
\author{Praneeth Narayanamurthy and Namrata Vaswani\\
ECE dept, Iowa State University}

\maketitle

\begin{abstract}
This work studies the robust subspace tracking (ST) problem. Robust ST can be simply understood as a (slow) time-varying subspace extension of robust PCA. It assumes that the true data lies in a low-dimensional subspace that is either fixed or changes slowly with time. The goal is to track the changing subspaces over time in the presence of additive sparse outliers and to do this quickly (with a short delay).
We introduce a ``fast'' mini-batch robust ST solution that is provably correct under mild assumptions. Here ``fast'' means two things: (i) the subspace changes can be detected and the subspaces can be tracked with near-optimal delay, and (ii) the time complexity of doing this is the same as that of simple (non-robust) PCA. Our main result assumes piecewise constant subspaces (needed for identifiability), but we also provide a corollary for the case when there is a little change at each time.

A second contribution is a novel non-asymptotic guarantee for PCA in linearly data-dependent noise. An important setting where this is useful is for linearly data dependent noise that is sparse with support that changes enough over time. The analysis of the subspace update step of our proposed robust ST solution uses this result.
\end{abstract}

\section{Introduction}
Principal Components Analysis (PCA) is one of the most widely used and well studied dimension reduction techniques. It is solved via singular value decomposition (SVD) following by retaining the top $r$ singular vectors for getting an $r$-dimensional subspace approximation.
Robust PCA (RPCA) refers to PCA in the presence of outliers.
According to \cite{rpca}, it can be defined as the problem of decomposing a given data matrix into the sum of a low-rank matrix (true data) and a sparse matrix (outliers). The column space of the low-rank matrix then gives the desired principal subspace (PCA solution).
A common application of RPCA is in video analytics in separating a video into a slow-changing background image sequence (modeled as a low-rank matrix) and a foreground image sequence consisting of moving objects or people (modeled as a sparse matrix) \cite{rpca}. 
The RPCA problem has been extensively studied in the last decade since \cite{rpca,rpca2} introduced the principal components pursuit solution and obtained the first guarantees for it. Follow-up work by Hsu et al \cite{rpca_zhang}  studied it further.
Later work \cite{robpca_nonconvex,rpca_gd,rmc_gd} has developed provable non-convex solutions that are much faster. Alternating Projections or AltProj was the first such approach \cite{robpca_nonconvex}.

{\em Robust Subspace Tracking (ST)} can be simply understood as a (slow) time-varying subspace extension of RPCA. It assumes that the true data lies in a low-dimensional subspace that is either fixed or changes slowly with time. We focus on slow changing subspaces because it is not clear how to distinguish the effect of a sudden subspace change from that of an outlier. The goal is to track the changing subspaces over time in the presence of additive sparse outliers and to do this quickly (with a short delay).
Time-varying subspaces is a more appropriate model for long data sequences, e.g., long surveillance videos, since if a single subspace model is used, the resulting matrix may not be sufficiently low-rank. Moreover the tracking setting (short tracking delay) is needed for applications where near real-time estimates are needed, e.g., video-based surveillance (object tracking), monitoring seismological activity, or detection of anomalous behavior in dynamic social networks. 
%
%
While many heuristics exist for robust ST, e.g., \cite{rrpcp_allerton,rrpcp_perf,grass_undersampled, xu_nips2013_1, robust_admm, rst_lr, rst_sparse}, there has been little work on provably correct solutions \cite{rrpcp_aistats,rrpcp_dynrpca}. The first result \cite{rrpcp_aistats} needed many restrictive assumptions (most importantly it required assumptions on intermediate algorithm estimates) and a large tracking delay (the delay was proportional to $1/\zz^2$ to get a $\zz$ accurate estimate).
The second one \cite{rrpcp_dynrpca} significantly improved upon \cite{rrpcp_aistats}, but still required a very specific model on subspace change, needed an $\zz$-accurate initial subspace estimate in order to guarantee $\zz$-accurate recovery at later time instants, and its tracking delay was $r$-times sub-optimal. Our work builds on \cite{rrpcp_dynrpca} and removes these, and two other more technical, limitations that we explained later.

\subsubsection{Contributions}
This work has two contributions. (1) First, we introduce a ``fast'' mini-batch robust ST solution that is provably correct under mild assumptions. Here ``fast'' means two things: (i) the subspace changes can be detected and the subspaces can be tracked with near-optimal delay (the number of data samples required to track an $r$-dimensional subspace of $\Re^n$ to $\zz$ accuracy is within log factors of $r$); and (ii) the time complexity of doing this is just $O(n d r \log(1/\zz))$, which is, order-wise, the same as that of solving the basic (non-robust) PCA problem for an $n \times d$ matrix. Our main result assumes piecewise constant subspaces (needed for identifiability), but we also provide a corollary for the case when there is a little change at each time.
(2) Our second contribution is a novel non-asymptotic guarantee for PCA in data-dependent noise that satisfies certain simple assumptions. 
An important setting where these hold is for linearly data dependent noise that is sparse with enough support changes over time. This problem occurs in the subspace update step of our proposed robust ST solution. 
The PCA result is also of independent interest. As an example, it is useful for analyzing PCA and subspace tracking with missing data \cite{rrpcp_tsp}.

\subsubsection{Organization} We first summarize our notation and then provide a brief discussion of the significance of our PCA guarantee and how it is used in analyzing our robust ST solution next.
In Sec. \ref{pca_section}, we  present the result for PCA in data-dependent noise and its corollary for the sparse data-dependent noise case. In Sec. \ref{sec:norst}, we define the robust ST problem, state the assumptions required to ensure its identifiability, develop the nearly (delay) optimal robust subspace tracker (NORST) algorithm for solving it, and provide and discuss the correctness guarantee for it. Related work is discussed in detail in Sec. \ref{rel_work}. Two important extensions of our result are provided in Sec. \ref{extends}. We provide a proof of the correctness guarantee for NORST in Sec. \ref{sec:proof_norst}. Empirical evaluation on synthetic and real-world datasets is described in Sec. \ref{sec:sims}. We conclude and discuss future directions in Sec. \ref{conclude}.


\subsection{Notation}
We use the interval notation $[a, b]$ to refer to all integers between $a$ and $b$, inclusive, and we use $[a,b): = [a,b-1]$.  $\|.\|$ denotes the $l_2$ norm for vectors and induced $l_2$ norm for matrices unless specified otherwise, and $'$ denotes transpose. We use $\M_\T$ to denote a sub-matrix of $\M$ formed by its columns indexed by entries in the set $\T$.
In our algorithm statements, we  use $\hat{\L}_{t; \alpha} := [\lhat_{t-\alpha + 1}, \lhat_{t-\alpha+2}, \dots, \lhatt]$ and $\SVD_r[\M]$ to refer to the matrix of top of $r$ left singular vectors of the matrix $\M$.
A matrix $\P$ with mutually orthonormal columns is referred to as a {\em basis matrix}; it represents the subspace spanned by its columns.
For basis matrices $\P_1,\P_2$, $\SE(\P_1,\P_2):=\|(\I - \P_1 \P_1{}')\P_2\|$ quantifies the Subspace Error (distance) between their respective subspaces. This is equal to the sine of the largest principal angle between the subspaces. 
If $\P_1$ and $\P_2$ are of the same dimension, $\SE(\P_1, \P_2) = \SE(\P_2, \P_1)$.
{\em We reuse the letters $C,c$ to denote different numerical constants in each use with the convention that $C \ge 1$ and $c < 1$.}%

\cred
\subsection{Significance and novelty of our PCA result and its use to analyze Robust Subspace Tracking}
There is little existing work that explicitly studies PCA (solved via SVD) in the presence of data-dependent noise (work that exploits knowledge of the data-dependency structure of the noise)\footnote{Of course any work on PCA for an approximately low rank matrix makes no assumptions on true data or noise and thus does implicitly allow data-dependent noise as well. However, this type of work does not exploit knowledge of how the noise depends on the data.}.
Our work provides a guarantee for one such setting; the setting is motivated by PCA in sparse linearly data-dependent noise (PCA-SDDN). This problem occurs when studying the SVD solution for solving (i) PCA with missing data, (ii) ST with missing data, and (iii) robust ST (with outliers and with and without missing data). 
We briefly explain the technical novelty of our result here. Let $\W$ denote the sparse linearly data-dependent noise matrix corrupting a true low rank $r$ data matrix $\L$. We observe $\yt = \lt + \w_t$, $t=1,2, \dots, \alpha$ with $\lt = \P \at$, $\P$ is an $n \times r$ matrix with orthonormal columns and $r \ll n$. Since $\w_t$ is linearly data-dependent and sparse, without loss of generality, we can express it as $\w_t = \I_{\T_t}{} \M_{s,t} \lt$ with $\T_t = \mathrm{support}(\w_t)$ and $\M_{s,t}$ being the data-dependency matrix at time/column $t$. Let $b$ denote the maximum of  the fraction of nonzero entries in any row of $\W$.
We compute the PCA estimate, $\Phat$, as the $r$-SVD of $\Y:=[\y_1, \y_2, \dots, \y_\alpha] = \L + \W$.

(1) The sparsity of the noise along with a careful application of the Cauchy-Schwarz inequality implies that $\|\E[ \frac{1}{\alpha} \sum_t \w_t \w_t{}'] \| \le \sqrt{b} \max_t \|\E[\w_t \w_t{}'] \|$, i.e., the time-averaged noise power is at most $\sqrt{b}$ times its maximum instantaneous value. Thus, if $b$ is small enough (noise support changes sufficiently across columns), the former is much smaller than  the latter.
(2)  Since $\w_t$ depends on $\lt$, this means that the data-noise correlation $\E[\lt\wt{}']$ is not zero and, its time-averaged value, $\|\E[ \frac{1}{\alpha} \sum_t \l_t \w_t{}'] \|$, is in fact the dominant term in the perturbation $\|\Y \Y' - \L \L'\|$ that governs the subspace recovery error, $\SE(\Phat,\P)$. Again using Cauchy-Schwarz and sparsity of $\wt$, we can show that $\|\E[ \frac{1}{\alpha} \sum_t \l_t \w_t{}'] \| \le \sqrt{b} \max_t \|\E[\l_t \w_t{}'] \|$. Thus, even though signal-noise correlation is not zero (and is, in fact, proportional to signal power), its time-averaged value is $\sqrt{b}$ times smaller.
%
Since $\SE(\Phat,\P) \lessapprox  \frac{\|\Y \Y' - \L \L'\| }{ \lambda_{r}(\L \L') }$ when the numerator is small enough (by Davis-Kahan $\sin \theta$ theorem), the above two facts imply that
\begin{align*}
\SE(\Phat, \P) &\lessapprox \sqrt{b}  \frac{(2\max_t \|\M_{s,t} \bm\Sigma\| + \max_t \|\M_{s,t} \bm\Sigma \M_{s,t}'\|)}{\lambda_r(\bm\Sigma)} \\
&\le \sqrt{b}  (2q + q^2) f
\end{align*}
Here $\bm\Sigma:= \E[\lt \lt'] \evdeq \P \Lam \P'$, $q:= \max_t \|\M_{s,t} \P\|$, and $f$ denotes the condition number of $\Lam$ \footnote{If $\lt$ is not stationary, $\bm\Sigma:= \frac{1}{\alpha} \sum_t \E[\lt \lt'] $, in this case one needs to redefine $f = \max_t \|\E[\lt \lt']\| / \lambda_r(\bm\Sigma)$.}. $q$ can be understood as the noise-to-signal ratio and is thus a measure of the noise level. 
(3) Suppose that the  $\at$'s are i.i.d. and bounded, i.e., $\|\at\|^2 \le \mu r \lambda_{\max}(\bm\Sigma)$. Since the noise is data-dependent, and since we assume that our data $\lt$ is generated from a low ($r$) dimensional subspace, we can use the above facts and matrix-Bernstein \cite{tropp} to show that  $\SE(\Phat,\P) \le \epsilon$ with high probability, $1- 3 n^{-10}$, if the sample complexity $\alpha$ is $\Omega( \frac{q^2}{\epsilon^2} \kappa^2 r \log n)$. Thus, in order to achieve a recovery error $\epsilon$ that is fraction of the noise level, $q$, the required sample complexity is near optimal (is within log factors of $r$). 

In the above discussion we have assumed zero uncorrelated noise, but our actual result also handles that. This can model the fact that the true data is only approximately low rank.
Moreover it provides a guarantee for a more general setting than PCA-SDDN.

\subsubsection{Use to analyze Robust ST}
For solving the robust ST problem (recover $\lt$ and its subspace from $\yt := \lt + \xt$ where $\xt$ denotes the sparse outlier at time $t$), we develop a mini-batch algorithm that (a) processes the observed data to return an estimate of $\lt$, $\lhat_t$, at each time $t$; and (b) uses $\alpha$-mini-batches of $\lhat_t$ to compute a new estimate of the current subspace. This process is repeated $K$ times with $K$ new $\alpha$-length mini-batches for the current subspace; after this time, the algorithm enters a ``subspace change detect'' phase. Denote the estimate from the $k$-th iteration by $\Phat_k$.
Suppose that the processing is such that (i) $\lhat_t = \lt + \w_t$ where $\w_t$ is sparse and data-dependent noise whose support equals the set of outlier entries at time $t$; and (ii) $q_k:= \max_t \|\M_{s,t} \P\|$ is proportional to the subspace recovery error from iteration $k-1$, i.e., $q_k = C \ \SE(\Phat_{k-1},\P)$ and $q_k < 2$. In defining $q_k$, the max is taken over the mini-batch used in iteration $k$.
We can use our PCA result to show that $\SE(\Phat_k,\P) \lessapprox \sqrt{b} (2q_k + q_k^2)  \kappa \le \kappa \sqrt{b} 6q_k $ and thus $q_{k+1} = C \ \SE(\Phat_k,\P) \lessapprox 6C \kappa \sqrt{b} q_k $. Thus, if $b$ is small enough, $b < c/kappa^2$, then $q_{k+1}$, and hence, $\SE(\Phat_k,\P)$, decreases by a constant fraction in each new iteration (the decay is geometric).  Since the error in recovering $\lt$ satisfies $\|\lhat_t - \lt\|/\|\lt\| \le q_k $, this also decays geometrically with each iteration. 


\cbl

\renewcommand{\bv}{\vp_t}
\renewcommand{\bz}{b}
\newcommand{\Mtone}{\M_{1,t}}
\newcommand{\Mttwo}{\M_{2,t}}
\renewcommand{\epsdenb}{H_{denom}(\alpha)}

\section{PCA in Data-Dependent Noise}\label{pca_section}

%

\subsection{Problem Setting}

For $t = 1, 2, \cdots, \tmaxpca$ we are given $\yt \in \Re^n$ that satisfies
\begin{align}\label{eq:pca_dd}
\yt := \lt + \wt + \vp_t, \quad \text{where} \ \  \lt = \P \at, \  \ \wt = \M_t \l_t,
\end{align}
$\P$ is an $n \times r$ basis matrix with $r \ll n$; $\lt$ is the true data vectors that lies in an $r$-dimensional subspace of $\R^n$, $\Span(\P)$; $\at$'s are the projections of $\lt$'s onto this subspace; $\wt$ is data-dependent noise with $\M_t$ being the data-dependency matrix at time $t$; and $\vp_t$ is uncorrelated noise. This means that $\E[ \lt \vp_t{}'] = 0$ for all times $t$. Here $\at$ and $\vp_t$ are treated as random variables (r.v.), while everything else is deterministic.
The goal is to estimate $\Span(\P)$ from the observed data stream $\yt$, $t=1,2,\dots,\alpha$.

\subsection{SVD solution and guarantee for it}
\subsubsection{SVD Solution} We compute the subspace estimate $\Phat$ as the matrix of top $r$ left singular vectors of $\Y:=[\y_1, \y_2, \dots, \y_\alpha]$. Equivalently it is the matrix of top $r$  eigenvectors of $\frac{1}{\tmaxpca}\sum_t \yt\yt{}'$.

We make the following assumptions on the subspace coefficients, $\at$, and the uncorrelated noise, $\vp_t$.
\begin{assu}[Statistical Assumption on $\at$]\label{def_right_incoh}
Assume that  the $\at$'s are zero mean; mutually independent; have identical diagonal covariance matrix $\Lam$, i.e., that $\E[\at \at{}'] = \Lam$; and are bounded: $\max_t \|\at\|^2 \le \mu r \lambda_{\max}(\Lam)$. Define
$
\lambda^+: = \lambda_{\max}(\Lam), \ \lambda^-: = \lambda_{\min}(\Lam),  \ f : = \frac{\lambda^+}{\lambda^-}.
$
\end{assu}
As we explain in Sec. \ref{sec:norst}, this assumption is almost equivalent to assuming $\mu$-incoherence of the right singular vectors of the matrix $\L:=[\l_1,\l_2, \dots, \l_\alpha]$. We call it $\mu$ statistical right incoherence there.
\begin{assu}[Statistical Assumption on $\vp_t$]
Assume that $\vp_t$  is uncorrelated with $\lt$, i.e., $\E[\lt \vp_t{}' ] = 0$, and $\vp_t$'s are zero-mean, 
 independent and identically distributed (i.i.d.) with covariance $\Sig_v := \E[\vp_t \vp_t{}']$, and are bounded.
Let $\lambda_v^+ := \|\Sig_v\|$ be the noise power and let $r_v := \frac{\max_t \|\vp_t\|_2^2}{\lambda_v^+}$ be the effective noise dimension.
\label{vt_assu}
\end{assu}
For a decomposition of the data-dependency matrix $\M_t$ as  $\M_t = \M_{2,t} \M_{1,t}$ with $\|\M_{2,t}\|=1$, let 
\begin{align}\label{eq:m2t_bnd_new}
q:= \max_t \|\M_{1,t} \P\| , \ \text{and} \\
b := \norm{\frac{1}{\tmaxpca} \sum_{t=1}^\alpha \M_{2,t} \M_{2,t}{}'}. 
\end{align}
Observe that $b \le \max_t \|\M_{2,t}\|^2=1$. In many settings, for example, when $\wt$ is sparse with changing support, $b$ is much smaller than one. Our result given below exploits this fact.

\begin{theorem}[PCA in Data-Dependent Noise]\label{mainthm_pca}
Consider the data $\yt$ defined by \eqref{eq:pca_dd}; and assume that Assumptions \ref{def_right_incoh} and \ref{vt_assu} hold.
Also assume that $\w_t = \M_t \lt$ with the parameters $b,q$ satisfying $b < 1$, $q < 2$, and  
$
4 \sqrt{b} q f + \frac{\lambda_v^+}{\lambda^-} + \epsbnd + \epsdenb < 1.
$
Here,
\begin{gather}
\epsbnd := C \sqrt{\eta} q f  \sqrt{\frac{r \log n}{\tmaxpca}}    +  C \sqrt{\eta} \sqrt{\frac{\lambda_v^+}{\lambda^-}  f}  \sqrt{\frac{r \log n}{\tmaxpca}}, \nonumber \\
\epsdenb := C \sqrt{\eta}  f \sqrt{\frac{r \log n}{\tmaxpca}}.
\end{gather}
Then, with probability at least $1 - 10n^{-10}$, the matrix of top $r$ eigenvectors of $\frac{1}{\tmaxpca}\sum_t \yt\yt{}'$, $\Phat$, satisfies
\begin{align*}
\SE(\Phat, \P) \leq \frac{4 \sqrt{\bz}qf + \frac{\lambda_v^+}{\lambda^-}  + \epsbnd}{1 - 4 \sqrt{\bz}qf - \frac{\lambda_v^+}{\lambda^-} -\epsbnd - \epsdenb}
\end{align*}
\end{theorem}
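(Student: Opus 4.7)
The plan is to apply the Davis-Kahan $\sin\theta$ theorem to reduce the bound on $\SE(\Phat, \P)$ to estimating operator norms of the perturbation $\D - \A$ restricted to certain subspaces. Set $\D := \tmaxpca^{-1}\sum_t \y_t \y_t{}'$ and $\A := \tmaxpca^{-1}\sum_t \l_t \l_t{}' = \P \Lamb_a \P'$ with $\Lamb_a := \tmaxpca^{-1} \sum_t \a_t \a_t{}'$. Since $\A$ has rank $r$ with column space $\range(\P)$ and $\Phat$ is the top-$r$ eigenspace of $\D$, the $\sin\theta$ theorem yields
\begin{align*}
\SE(\Phat, \P) \le \frac{\|(\I - \P \P')(\D - \A) \P\|}{\lambda_r(\Lamb_a) - \|(\I - \P \P')\D(\I - \P \P')\|}.
\end{align*}
The task reduces to bounding the numerator by $\lambda^-\bigl(4 \sqrt{\bz} q f + \lambda_v^+/\lambda^- + \epsbnd\bigr)$, bounding the denominator perturbation by the same, and lower-bounding $\lambda_r(\Lamb_a)$ by $\lambda^-(1 - \epsdenb)$; dividing through by $\lambda^-$ then produces the stated ratio.

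Expanding $\D - \A$ yields five groups of cross-terms: $\w_t \l_t{}' + \l_t \w_t{}'$, $\w_t \w_t{}'$, $\vp_t \l_t{}' + \l_t \vp_t{}'$, $\vp_t \w_t{}' + \w_t \vp_t{}'$, and $\vp_t \vp_t{}'$. Left-multiplication by $(\I - \P \P')$ annihilates every term whose leftmost factor is $\l_t$, and $\P{}'\l_t = \a_t$ simplifies the survivors. For each surviving sum I would split mean from deviation. The dominant signal-noise mean is $(\I - \P \P')\bigl(\tmaxpca^{-1}\sum_t \M_{2,t} \M_{1,t} \P\bigr)\Lam$, and the matrix Cauchy-Schwarz inequality
\begin{align*}
\norm{\sum_t \X_t \Y_t} \le \sqrt{\norm{\sum_t \X_t \X_t{}'}} \cdot \sqrt{\sum_t \norm{\Y_t}^2},
\end{align*}
applied with $\X_t = \M_{2,t}$ and $\Y_t = \M_{1,t} \P$, yields $\|\tmaxpca^{-1}\sum_t \M_{2,t} \M_{1,t} \P\| \le \sqrt{\bz}\,q$. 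Hence the signal-noise mean contributes at most $\sqrt{\bz} q \lambda^+ = \sqrt{\bz} q f \cdot \lambda^-$. The same idea applied on both sides of $\w_t \w_t{}'$ controls its mean by $\sqrt{\bz} q^2 \lambda^+$, and $q < 2$ combines the two into the $4\sqrt{\bz} q f \cdot \lambda^-$ deterministic contribution. The $\vp_t \vp_t{}'$ mean contributes $\lambda_v^+$ and the cross-means with $\vp_t$ vanish by the uncorrelated assumption.

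For the stochastic deviations I would invoke Tropp's matrix Bernstein inequality on each centered sum. The uniform bounds $\|\a_t\|^2 \le \eta r \lambda^+$ and boundedness of $\vp_t$ control per-summand operator norms, while the same Cauchy-Schwarz mechanism re-enters the matrix variance computation, keeping the $\sqrt{\bz}$ gain and producing the $\sqrt{r \log n / \tmaxpca}$ scaling. Summed, these deviations give the $\lambda^- \epsbnd$ piece of the numerator (and an analogous contribution for $\|(\I - \P \P')\D(\I - \P \P')\|$, since the same five groups appear there after noting that $(\I-\P\P')\A(\I-\P\P')=0$). A separate matrix Bernstein application to $\Lamb_a - \Lam$, using $\|\a_t \a_t{}'\| \le \eta r \lambda^+$, delivers $\lambda_r(\Lamb_a) \ge \lambda^-(1 - \epsdenb)$. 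Substituting into the $\sin\theta$ inequality and dividing through by $\lambda^-$ reproduces the theorem.

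The main obstacle I expect is the $\w_t \w_t{}'$ term: it is fourth-order in $\a_t$ and sandwiched between two copies of $\M_{2,t}$, so two nested Cauchy-Schwarz applications are required to extract both $\sqrt{\bz}$ factors, and the matrix variance needed for Bernstein must be computed in a way that preserves the correct power of $q$ rather than collapsing to a worst-case $1$. A secondary difficulty is the bookkeeping across the five term groups (each split into mean and deviation, each deviation requiring its own Bernstein application) so that the final algebra collapses to the clean $4\sqrt{\bz}qf + \lambda_v^+/\lambda^- + \epsbnd$ form with constants of the stated size.
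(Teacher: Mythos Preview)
Your proposal is essentially the paper's proof: Davis--Kahan $\sin\theta$ theorem, Cauchy--Schwarz to pull out the $\sqrt{b}$ factor from the expected cross and noise terms, and matrix Bernstein for the deviations. The only structural difference is that the paper uses the unprojected form $\SE(\Phat,\P)\le \|\D-\D_0\|/(\lambda_r(\D_0)-\|\D-\D_0\|)$ rather than your projected version; both are valid and yield the same final bound (yours is in fact slightly sharper but the extra sharpness is not needed).

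Two small corrections to your expectations. First, the deviation (matrix Bernstein) terms do \emph{not} retain a $\sqrt{b}$ factor: look at the definition of $H(\alpha)$ in the theorem statement---there is no $b$ there. The paper applies Bernstein with the crude per-summand bounds $\|\w_t\|\le q\sqrt{\mu r\lambda^+}$ and variance proxies of order $q^2(\lambda^+)^2 r$, yielding $C\sqrt{\mu}\,qf\sqrt{r\log n/\alpha}\,\lambda^-$ directly; no Cauchy--Schwarz enters the variance computation. Second, the $\w_t\w_t'$ mean needs only \emph{one} Cauchy--Schwarz, not two nested ones: writing $\|\frac{1}{\alpha}\sum_t \M_{2,t}(\M_{1,t}\P\Lam\P'\M_{1,t}')\M_{2,t}'\|$ and splitting off a single factor of $\M_{2,t}$ already gives $\sqrt{b}\cdot\max_t\|\M_t\P\Lam\P'\M_{1,t}'\|\le\sqrt{b}\,q^2\lambda^+$, which is what the theorem requires. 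The obstacle you anticipated does not materialize.
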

Theorem \ref{mainthm_pca} is proved in Appendix \ref{proofs_pca_section}. It uses the Davis-Kahan $\sin \Theta$ theorem \cite{davis_kahan} followed by matrix Bernstein \cite{tropp} to bound each term.
%
To understand Theorem \ref{mainthm_pca} simply, first assume that $\vp_t=0$ and $\epsbnd,\epsdenb$ are small enough ($\alpha$ is large enough).
From the definition of $q$, the instantaneous signal-noise correlation $\|\E[\lt \wt{}']\| \le q \lambda^+$ and the instantaneous data-dependent noise power $\|\E[\wt \wt{}']\| \le q^2 \lambda^+$. Thus $q^2$ is the data-dependent noise-to-signal ratio. Also, $\lambda^+$ and $\lambda^-$ quantify the maximum and the minimum signal power respectively.
The PCA subspace recovery error depends on the ratio between the sum of (time-averaged values of) signal-noise correlation and noise power and the minimum signal space eigenvalue $\lambda^-$.
%
By Cauchy-Schwarz, it is not hard to see that the time-averaged values of both these quantities satisfies $\|\frac{1}{\tmaxpca} \sum_{t=1}^\tmaxpca \E[\wt \wt{}']\| \le \sqrt{\bz} q^2 \lambda^+$ and  $\|\frac{1}{\tmaxpca} \sum_{t=1}^\tmaxpca \E[\lt \wt{}']\| \le \sqrt{\bz}q \lambda^+$.
Thus, if $b \ll 1$, the time-averaged values are significantly smaller than the instantaneous ones and this is what helps us get a small bound on the subspace recovery error. For a constant $c_1<1$, by assuming $b < (c_1/4f)^2$, we can ensure that $\SE(\Phat,\P) \le c_1 q$, i.e., the subspace recovery error is a fraction of $q$.

In the general case when $\vp_t \neq 0$, 
we can guarantee that $\SE(\Phat,\P)$ is at most $c_1 \max( q,  \lambda_v^+ / \lambda^-)$. 

\subsection{Application to PCA in Sparse Data-Dependent Noise (PCA-SDDN)}
An important application of the above result is for data-dependent noise, $\wt$, that is sparse.
In this work we will show how a guarantee for PCA in sparse data-dependent noise (PCA-SDDN) helps obtain a  fast and delay-optimal robust ST algorithm. 
If we set $\M_{2,t} = \I_{\T_t}$ then $\wt$ is sparse with support $\T_t$. Thus for $t =1,2, \cdots, \alpha$ 
\bea \label{yt_mod}
\y_t:= \lt + \wt + \vp_t, \text{ where } \lt = \P \at, \  \wt = \I_{\T_t} \M_{s,t} \lt,
\eea
The assumption on $b$ is now equivalent to a bound on the maximum fraction of non-zero entries in any row of $\W:=[\w_1, \cdots, \w_{\tmaxpca}]$. To see why this is true, notice that $b = \frac{1}{\alpha} \|\sum_{t=1}^\alpha \I_{\T_t} \I_{\T_t}{}'\|$. The matrix $\sum_t \I_{\T_t} \I_{\T_t}{}'$ is a diagonal matrix with $(i,i)$-th entry equal to the number of times $t$ for which $i \in \Tt$. This is the same as the number of nonzero entries in the $i$-th row of $\W$.
Using this fact we get the following corollary.%

\begin{corollary}[PCA in Sparse Data-Dependent Noise] 
Assume that  $\yt$'s satisfy \eqref{yt_mod}, Assumptions \ref{def_right_incoh}, \ref{vt_assu} hold, and $q:=\max_t \|\M_{s,t} \P\| \le 2$. Let $\bz$ denote the maximum fraction of nonzeros in any row of the noise matrix $[\w_1, \w_2, \dots, \w_\tmaxpca]$, and let $g: = \frac{\lambda_{v}^+}{\lambda^-}$.
For an $\epsilon_\SE > 0$, if
\[
4 \sqrt{\bz}q f +  g < 0.4 \epsilon_\SE,   
\]
and if 
\[
\tmaxpca \ge \tmaxpca^* := C \max\left( \frac{q^2 f^2}{\epsilon_\SE^2} r \log n, \frac{g f}{\epsilon_\SE^2} \max(r_v,r) \log n\right),
\]
then w.p. at least $1- 10n^{-10}$, $\SE(\Phat,\P) \le \epsilon_\SE$.
\label{cor:noisy_pca_sddn}
\end{corollary}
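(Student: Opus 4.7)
The plan is to derive Corollary \ref{cor:noisy_pca_sddn} as a direct specialization of Theorem \ref{mainthm_pca} to the case $\M_{2,t} = \I_{\T_t}$, then convert the condition of Theorem \ref{mainthm_pca} into the explicit sample complexity bound $\alpha \ge \alpha^*$.

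First I would verify the parameter correspondence. Setting $\M_{2,t} = \I_{\T_t}$ and $\M_{1,t} = \M_{s,t}$ makes $\|\M_{2,t}\| = 1$ and reproduces the model \eqref{yt_mod}; the quantity $q = \max_t \|\M_{1,t} \P\|$ of Theorem \ref{mainthm_pca} becomes $\max_t \|\M_{s,t} \P\|$, matching the corollary. For $b$, I would use the argument given in the paragraph preceding the corollary: $\sum_{t=1}^\alpha \I_{\T_t} \I_{\T_t}{}'$ is diagonal and its $(i,i)$ entry equals the number of columns $t$ with $i \in \T_t$, i.e.\ the number of nonzeros in row $i$ of $\W$. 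Hence $b = \|\alpha^{-1}\sum_t \I_{\T_t}\I_{\T_t}{}'\|$ equals the maximum row-wise nonzero fraction of $\W$, and in particular $b \le 1$.

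Next I would translate the sample complexity condition. The goal is to pick $\alpha$ large enough that $\epsbnd + \epsdenb$ is small compared to $\epsilon_\SE$. The first summand of $\epsbnd$, namely $C\sqrt{\eta}\,qf\sqrt{r\log n/\alpha}$, is bounded by, say, $\epsilon_\SE/20$ once $\alpha \ge C' \eta\, q^2 f^2 r\log n/\epsilon_\SE^2$; the second summand $C\sqrt{\eta}\sqrt{gf}\sqrt{r\log n/\alpha}$ becomes the matrix-Bernstein term controlling the uncorrelated noise covariance $\tfrac{1}{\alpha}\sum_t \vp_t\vp_t{}'$, and the natural dimension parameter there is the effective noise rank $r_v$ from Assumption \ref{vt_assu}; this is what produces the $\max(r_v,r)$ factor in the second branch of $\alpha^*$. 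Similarly $\epsdenb = C\sqrt{\eta}f\sqrt{r\log n/\alpha}$ is absorbed by the first branch (possibly up to a constant). Thus for $\alpha \ge \alpha^*$ with $C$ chosen large, one has $\epsbnd + \epsdenb \le \epsilon_\SE/10$.

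Finally, combine the two pieces. By hypothesis $4\sqrt{b}qf + g \le 0.4\epsilon_\SE$; together with the sample-complexity bound this gives a numerator in Theorem \ref{mainthm_pca} of at most $0.5\epsilon_\SE$ and a denominator of at least $1 - 0.5\epsilon_\SE \ge 0.5$ (for $\epsilon_\SE \le 1$; otherwise the bound is trivial). The preconditions $b<1$, $q<2$, and $4\sqrt{b}qf + g + \epsbnd + \epsdenb < 1$ of Theorem \ref{mainthm_pca} hold, so with probability at least $1-10n^{-10}$,
\[
\SE(\Phat,\P) \;\le\; \frac{0.5\,\epsilon_\SE}{0.5} \;=\; \epsilon_\SE.
\]
The only delicate point is the appearance of $\max(r_v,r)$ rather than just $r$: this requires tracking which matrix-Bernstein application in the proof of Theorem \ref{mainthm_pca} handles the $\vp_t\vp_t{}'$ and $\lt\vp_t{}'$ terms, where the boundedness assumption $\|\vp_t\|^2 \le r_v \lambda_v^+$ replaces $\|\at\|^2 \le \mu r \lambda^+$; the rest is bookkeeping of constants.
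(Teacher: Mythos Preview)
Your proposal is correct and follows essentially the same route as the paper: specialize Theorem~\ref{mainthm_pca} via $\M_{2,t}=\I_{\T_t}$, identify $b$ with the maximum row-wise nonzero fraction of $\W$, and choose $\alpha$ large enough to make the statistical-error terms $\epsbnd$, $\epsdenb$ small. One minor difference worth noting: the paper only requires $\epsdenb \le 1/10$ (a constant bound, since this term sits purely in the denominator), whereas you ask for $\epsbnd+\epsdenb \le \epsilon_\SE/10$; your slightly stronger requirement is harmless but unnecessary, and the paper's version makes it clearer that $\epsdenb$ is controlled by the first branch of $\alpha^*$ in the intended regime $\epsilon_\SE \lesssim q$. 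Your observation that the $\max(r_v,r)$ factor arises from the matrix-Bernstein bounds on the $\lt\vp_t{}'$, $\wt\vp_t{}'$, and $\vp_t\vp_t{}'$ terms (where boundedness involves $r_v$ rather than $r$) is exactly right and matches Lemma~\ref{hp_bnds}.
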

This corollary follows from Theorem \ref{mainthm_pca} by picking $\tmaxpca$ large enough so that $\epsbnd < \epsilon_\SE/10$ and $\epsdenb < 1/10$ (since this term appears in the denominator, we do not need it to be smaller than $\epsilon_\SE$, just a constant upper bound suffices).

Corollary \ref{cor:noisy_pca_sddn} shows that it is possible to achieve recovery error that is a fraction of $q$,  i.e, $\epsilon_\SE = c_1 q$, if (i) $4\sqrt{\bz} f \le 0.8 c_1$ (the data-dependent noise support changes enough over time so that $\bz$ is small), (ii) $\lambda_v^+ \le 0.8 c_1 \epsilon_\SE  \lambda^-$ (the uncorrelated noise power is small enough), and (iii) $\tmaxpca \ge  C \max( f^2 r \log n, f\frac{1}{\epsilon_\SE} \max(r_v,r) \log n)$.  Notice that the sample complexity $\tmaxpca$ increases with $1/\epsilon_\SE = 1/(c_1 q)$. However, if we can make a stronger assumption that $\lambda_v^+ \le 0.8 c_1 \epsilon_\SE^2  \lambda^-$, then we only need $\tmaxpca \ge  C \max( f^2 r \log n, f \max(r_v,r) \log n ) $. Furthermore if  $r_v \le C r$, then just $\tmaxpca \ge C f^2 r \log n$ suffices. Treating $f$ as a numerical constant, observe that this sample complexity is order-wise near-optimal: $r$ is the minimum number of samples needed to even define a subspace.

In particular, in the setting when $\vp_t = 0$, 
if the noise support changes enough so that $b$ is small enough, we can estimate the subspace to a fraction of the square root of the noise level, $q$, using just order $r \log n$ samples. The reason this is possible is because the $\at$'s are bounded and $\wt = \M_t \P \at$ and so the ``randomness'' in $\w_t$ is only $r$-dimensional (this has implications for what matrix Bernstein returns for the required sample complexity). 
When $\vp_t \neq 0$, we have a similar result: if $\vp_t$ has effective dimension that is of order $r$, we can still track to $\epsilon_\SE = c \max(q, \sqrt{g})$, here $g$ is the square root of uncorrelated noise level.



\subsection{Generalizations of Theorem \ref{mainthm_pca}}
For notational simplicity, in Theorem \ref{mainthm_pca}, we have provided a simple result that suffices for the correctness proof of our robust ST algorithm.
We state and prove a much more general result in the Supplement given in the ArXiv version of this work \cite[Appendix IV]{jsait_arxiv} that relaxes this result in three ways.
First,   it replaces the identically distributed assumption on $\at$ and $\vp_t$  by the following:
let $\bar\Lam := \sum_t \Lam_t / \alpha$,  $\lambda_{\avg}^-:= \lambda_{\min}(\bar\Lam)$,
$\lambda_{\max}^+:= \max_t \lambda_{\max}(\Lam_t) $ and $\lambda_{v,\max}^+:= \max_t \lambda_{\max}(\Sig_{v,t})$.
It requires that  the distributions are ``similar'' enough so that  $f:=\lambda_{\max}^+ / \lambda_{\avg}^-$ is bounded by a numerical constant and $\lambda_{v,\max}^+$ replaces $\lambda_v^+$ in $\epsbnd$ and $\epsdenb$ expressions.


Secondly, it replaces  $\lambda_v^+$  by $\|\P' \Sig_{v} \P_\perp\|$ in the numerator, while $-\lambda_v^+$ in the denominator gets replaced by $- (\lambda_{\max}( \Sig_v -  \P \P' \Sig_v \P \P') -  \lambda_{\min}(\P' \Sig_v \P) ) $. 
Here again, in case of time-varying statistics, the minimum eigenvalues get replaced by the minimum eigenvalue of the average covariance matrix while the maximum ones get replaced by the maximum eigenvalue over all times $t$.
%
Thirdly, we also provide a guarantee for the case when $\at$'s and $\vp_t$'s are sub-Gaussian random vectors. In this case, the required sample complexity increases to order $n$ instead of $\max(r,r_v) \log n$ that we have for the bounded case result given above.

These last two changes allow us to recover the well known result for PCA under the Gaussian spiked covariance model (uncorrelated isotropic noise) \cite{nadler} as a special case of our most general result. Spiked covariance means $\wt=0$ and $\Sig_v = \lambda_v^+ \I$. Thus, $q=0$,  $\|\P' \Sig_v \P_\perp\| = 0$ and $\|\Sig_v - \P' \Sig_v \P\| - \|\P' \Sig_v \P\| = 0$ and so we get the following corollary.
\begin{corollary}[Spiked Covariance Model, Gaussian noise \cite{nadler}]
In the setting of Theorem \ref{mainthm_pca}, if $\wt=0$ (no data-dependent noise), $\Sig_v =  \lambda_v^+ \I$, and $\at$, $\vp_t$ are Gaussian,
then,
w.p. at least $1 - 5 \exp(-c n)$,
$
\SE(\Phat, \P) \leq \frac{\epsbnd}{1  -\epsbnd - \epsdenb}.
$
with $\epsbnd = C  \sqrt\eta \sqrt{ g  f} \sqrt{\frac{n}{\tmaxpca}} $, $\epsden =  C  \sqrt\eta  f \sqrt{\frac{n}{\tmaxpca}}$ and $g = \frac{\lambda_{v}^+ }{ \lambda^-}$.

If $\at$, $\vp_t$ are bounded then $\epsbnd, \epsdenb$ are as given in Theorem \ref{mainthm_pca}.
\end{corollary}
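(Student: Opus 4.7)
The plan is to obtain this corollary as a direct specialization of the most general version of Theorem \ref{mainthm_pca} alluded to in the ``Generalizations'' subsection (the full statement appearing in the supplement). I would not re-prove the concentration bounds; instead I would verify that, under the spiked-covariance Gaussian assumptions, each of the three ingredients needed for the bound reduces to the claimed form.

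First, I would set $\wt=0$. This immediately forces $q:=\max_t\|\M_{1,t}\P\|=0$ and eliminates any role of the sparsity parameter $b$, so every contribution proportional to $\sqrt{b}\,q\,f$ (both in the numerator and denominator of the Davis--Kahan ratio, and in the $\epsbnd$ expression) vanishes. Thus the only surviving perturbation terms come from $\vp_t$.

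Second, I would specialize to $\Sig_v=\lambda_v^+\I$. The generalized theorem keeps $\vp_t$-contributions only through $\|\P'\Sig_v\P_\perp\|$ in the numerator and $\lambda_{\max}(\Sig_v-\P\P'\Sig_v\P\P')-\lambda_{\min}(\P'\Sig_v\P)$ in the denominator. With isotropic $\Sig_v$, $\P'\Sig_v\P_\perp=\lambda_v^+\P'\P_\perp=\bm{0}$; moreover $\Sig_v-\P\P'\Sig_v\P\P'=\lambda_v^+(\I-\P\P')$ has maximum eigenvalue $\lambda_v^+$, while $\P'\Sig_v\P=\lambda_v^+\I_r$ has minimum eigenvalue $\lambda_v^+$, so their difference is zero. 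This removes the $\lambda_v^+/\lambda^-$ term from both numerator and denominator, leaving the bound $\epsbnd/(1-\epsbnd-\epsdenb)$.

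Third, I would plug in the Gaussian sample-complexity form of $\epsbnd$ and $\epsdenb$. In the generalization with sub-Gaussian $\at,\vp_t$, matrix Bernstein is replaced by a sub-Gaussian covariance concentration bound (for instance, the Vershynin/Koltchinskii--Lounici inequality for the empirical covariance of sub-Gaussian vectors in $\R^n$). This swaps the effective dimension factor $\max(r,r_v)\log n$ for $n$, and trades the polynomial tail $n^{-10}$ for an exponential tail $\exp(-cn)$. Carrying this substitution into the two relevant terms of Theorem~\ref{mainthm_pca}, namely $\|\frac{1}{\alpha}\sum_t\lt\lt'-\bm\Sigma\|$ (yielding the $f\sqrt{n/\alpha}$ piece of $\epsden$) and $\|\frac{1}{\alpha}\sum_t\vp_t\vp_t'-\Sig_v\|$ together with the cross term $\|\frac{1}{\alpha}\sum_t\lt\vp_t'\|$ (yielding the $\sqrt{gf}\sqrt{n/\alpha}$ piece of $\epsbnd$), produces exactly the expressions claimed. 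A union bound over the $\le 5$ applications of the concentration inequality provides the stated probability $1-5\exp(-cn)$.

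The only non-routine point is verifying that the cross term $\|\frac{1}{\alpha}\sum_t\lt\vp_t'\|$, although mean zero by uncorrelatedness, still concentrates with the promised $\sqrt{g f\,n/\alpha}$ rate in the Gaussian case; this I would handle by decoupling $\at$ and $\vp_t$ (independent by assumption) and applying a standard sub-Gaussian product matrix concentration bound to $\frac{1}{\alpha}\sum_t\at\vp_t'$ of dimensions $r\times n$, after which the $\sqrt{n/\alpha}$ scaling is inherited. The bounded-$\at,\vp_t$ version of the statement is then an immediate consequence of the already-stated Theorem~\ref{mainthm_pca} under the same specializations, this time retaining the $\max(r,r_v)\log n$ sample-complexity form and the $n^{-10}$ failure probability.
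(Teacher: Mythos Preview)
Your proposal is correct and follows essentially the same approach as the paper. The paper's own justification is just the short paragraph immediately preceding the corollary: it notes that under the spiked covariance model $q=0$, $\|\P'\Sig_v\P_\perp\|=0$, and $\lambda_{\max}(\Sig_v-\P\P'\Sig_v\P\P')-\lambda_{\min}(\P'\Sig_v\P)=0$, and then invokes the generalized (sub-Gaussian) version of Theorem~\ref{mainthm_pca} from the supplement, which replaces the $\max(r,r_v)\log n$ sample-complexity factor by $n$ and yields the $\exp(-cn)$ tail. You have correctly identified and fleshed out each of these steps.
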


Notice that, under the spiked covariance model, as long as we let the sample complexity $\alpha$ grow with the noise level $g$, we do not need any bound on noise power. For example, the noise power $\lambda_v^+$ could even be larger than $\lambda^-$. This is possible because, under this model, $\E[\sum_t \yt \yt' /\alpha] = \P \Lam \P' + \lambda_v^+ \I$. Thus, its matrix of top $r$ eigenvectors equals $\P$. As a result, the error between $\Phat$ and $\P$ is only due to the fact that we are using a finite $\alpha$ to approximate the expected value. In other words, we only have statistical error. The ``bias'' terms are zero. 

\section{Nearly Optimal Robust Subspace Tracking (NORST)}\label{sec:norst}

In this section, we define the robust ST problem, explain the assumptions needed to make it identifiable, and then explain our proposed mini-batch solution and its guarantee. 

\subsection{Problem setting and algorithm design constraints}
At each time $t$, we observe a data vector $\yt \in \Re^n$ that satisfies%
\bea
\yt := \lt + \x_t + \v_t, \text{ for } t = 1, 2, \dots, \tmax
\label{orpca_eq}
\eea
where  $\v_t$ is small unstructured noise, $\xt$ is the sparse outlier vector, and $\lt$ is the true data vector that lies in a fixed or slowly changing low-dimensional subspace of $\Re^n$, i.e.,
\[
\lt = \P_{(t)} \a_t
\]
where $\P_{(t)}$ is an $n \times r$ basis matrix with $r \ll n$ and with $\|(\I - \P_{(t-1)}\P_{(t-1)}{}')\P_{(t)}\|$ small compared to $\|\P_{(t)}\|=1$. 
We use $\T_t$ to denote the support set of $\xt$. As an example, in the video application, $\yt$ is the video image at time/frame $t$, $\lt$ is the background  at time $t$, $\T_t$ is the support of the foreground at $t$, and $\x_t$ equals the difference between foreground and background images on $\T_t$ while being zero everywhere else. Slow subspace change is typically a valid assumption for background images of videos taken using a static camera.
Given a good initial subspace estimate, $\Phat_{0}$, the goal is to develop a mini-batch algorithm to track $\Span(\P_{(t)})$ and $\lt$ either immediately or within a short delay.
A by-product is that $\x_t$, and $\T_t$ can also be tracked accurately. The initial subspace estimate, $\Phat_0$, can be computed  by applying a few iterations of any existing RPCA solutions, e.g., PCP \cite{rpca} or AltProj \cite{robpca_nonconvex}, on the first order $r$ data points, i.e., on $\Y_{[1,t_\train]}$, with $t_\train=Cr$.

\subsubsection{Dynamic RPCA} This is the offline version of the above problem. Define matrices $\L,\S,\V,\Y$ with $\L = [\l_1,\l_2, \dots \l_{\tmax}]$ and with $\Y,\S,\V$ similarly defined.  The goal is to recover $\L$ and its column space with accuracy $\zz$. We use $\rmat$ to denote the rank of $\L$. 
The maximum fraction of nonzeros in any row (column) of the outlier matrix $\S$ is denoted by $\outfracrow$ ($\outfraccol$).

\subsubsection{Algorithm constraints}
We will develop a nearly real-time tracking algorithm that (i) computes {\em an} online estimate of $\xt$ and its support $\T_t$, and of $\lt$ immediately at each time $t$ using the previous subspace estimate, $\Phat_{(t-1)}$, and observed data $\yt$; (ii) it updates the subspace estimates in a mini-batch fashion; and (iii) it provides improved smoothing estimates of all quantities after a delay that is within log factors of $r$. 
As we explain in Sec. \ref{tradeoffs}, recovering $\xt$, $\T_t$, and $\lt$, one at a time is the only way to obtain improved row-wise outlier tolerance compared to standard RPCA. However with doing this, correct recovery requires one extra assumption: slow enough subspace change compared to the minimum outlier magnitude. 


\subsection{Nearly Optimal Robust ST (NORST) via Recursive Projected Compressive Sensing (CS): main idea}
The algorithm begins with an initial subspace estimate $\Phat_0$. At each time $t$, we use $\Phat_{(t-1)}$ and $\yt$ to solve a noisy projected compressive sensing (CS) problem to estimate $\xt$ and its support $\T_t$ from $\tty_t = \bpsi \xt + \bt$. Here $\bpsi = \I -  \Phat_{(t-1)}\Phat_{(t-1)}{}'$, $\tty_t = \bpsi \yt$, and $\bt = \bpsi \lt +\bpsi \vt $ (is small under the slow subspace change assumption). This step uses $l_1$ minimization followed by thresholding to estimate $\T_t$, and Least Squares (LS) on $\That_t$ to get $\xhat_t$.
We compute $\lhat_t$ by subtraction as $\lhat_t = \yt - \xhat_t$. Every $\alpha$ time instants, we update the subspace estimate by solving the PCA problem using the previous $\alpha$ $\lhat_t$'s as observed data, i.e., by $r$-SVD on $\Lhat_{t;\alpha}$. This  is repeated $K$ times, each time with a new set of $\alpha$ $\lhat_t$'s. At this point, the algorithm enters the subspace change detect phase. The complete algorithm is specified in Algorithm \ref{algo:auto-reprocs-pca}, and explained in detail in Sec \ref{algo_details}. Besides $\alpha$ and $K$, it has two other parameters: $\xi$ (assumed upper bound on $\|\bt\|$) and $\omega_{supp}$ (threshold used for support recovery).



\subsection{Identifiability and other assumptions}
For this discussion assume that $\vt =0$. At each time $t$ we have just one $n$-length observed data vector $\yt$ but the subspace $\Pt$ is specified by $nr$ scalars (it is an $r$-dimensional subspace of $\Re^n$).
Thus, even if we had perfect data $\yt=\lt$ available, it would be impossible to exactly recover each different $\Pt$. 
One way to address this is by assuming that the $\Pt$'s do not change for at least $r$ time instants. 
\begin{assu}[Piecewise Constant Subspace Change]\label{def_pw_ss}
Let $t_1, \dots t_j, \dots t_J$ denote the subspace change times. Let $t_0=1$ and $t_{J+1}=\tmax$.
Assume that 
\[
\P_{(t)} = \P_j \text{ for all } t \in [t_j, t_{j+1}), \ j=1,2,\dots, J,
\]
with $t_{j+1} - t_j > r$. Since $\yt=\lt +\xt$ (is imperfect), our guarantee needs a larger lower bound than $r$.
\end{assu}


Even with the above assumption, a sparse $\xt$ and its support $\T_t$ cannot be correctly distinguished from $\lt = \P_j \at$ without more assumptions. Correct recovery of $\xt$ and $\T_t$ requires that (i) the $\xt$'s are sparse enough (ensured by bounding the maximum allowed outlier fractions per column), (ii) the columns of $\P_j$ are not sparse (ensured by the standard incoherence/denseness assumption from the RPCA literature \cite{matcomp_candes,rpca,robpca_nonconvex}), and (iii) the $\at$'s are bounded. (iv) Correct support recovery also requires subspace change that is slow enough compared to the minimum nonzero entry of $\xt$ (minimum outlier magnitude), denoted $\xmint$.
%
%
%
%
Correct subspace update requires that (v) the $r \times \alpha$ sub-matrices formed by a mini-batch of $\at$'s are well-conditioned, and (vi) the outlier support $\T_t$ changes enough over time so that there is at least  one outlier-free observation of each scalar entry of $\lt$ in each mini-batch of $\yt$'s.  One way to ensure (v) is to assume that the $\at$'s are i.i.d. while (vi) can be ensured by bounding the maximum fraction of outliers in any row of any $\alpha$-mini-batch sub-matrix of $\X$. We use $\outfracrow(\alpha)$ to denote this quantity.
We summarize the above assumptions on $\P_j$'s and $\at$'s in Assumption \ref{defmu}, those on the outlier fractions in  Assumption \ref{def_outfrac}, and slow subspace change compared to $\xmint$ in Assumption \ref{slow_ss}.

\begin{assu}[$\mu$-Incoherence]\label{defmu}
Assume the following.
\begin{enumerate}
\item (Left Incoherence) Assume that $\P_j$'s are $\mu$-incoherent with $\mu$ being a numerical constant. This means that
$
\max_{i=1,2,.., n} \|(\P_j)^{(i)}\|^2 \le \mu r/ n.
$
Here $\P^{(i)}$ denotes the $i$-th row of $\P$.

\item (Statistical Right Incoherence) Assume Assumption \ref{def_right_incoh}, i.e., the subspace coefficients $\at$ are zero mean, mutually independent, have identical diagonal covariance matrix $\Lam:=\E[\at \at']$, and are bounded: $\max_t \|\at\|^2 \le \mu r \lambda_{\max}(\Lam)$. Let $\lambda^+$ ($\lambda^-$), $f := \lambda^+/\lambda^-$ denote the maximum (minimum) eigenvalue and condition number of $\Lam$.
\end{enumerate}
\end{assu}
The second assumption above allows us to obtain high probability upper bounds on the tracking delay of our approach. As we explain later in Sec. \ref{why_stat_right_incoh}, it can be interpreted as a statistical version of right singular vectors' incoherence. The incoherence assumption on $\P_j$ is nearly equivalent to left singular vectors' incoherence. It is exactly equivalent if we consider the sub-matrices $\L_j:=[\l_{t_j},\l_{t_j+1}, \dots, \l_{t_{j+1}-1}]$.

\cred
\begin{assu}[Outliers are spread out]\label{def_outfrac}
Let $\outfraccol:=\max_t|\T_t|/n$; let $\outfracrow(\alpha)$ be the maximum fraction of nonzeros per row of any sub-matrix of $\X_{[t_\train,d]}$ with $\alpha$ consecutive columns, and let $\outfracrow_{\init}$ be the maximum fraction of outliers  per row of any sub-matrix of $\X_{[1,t_\train]}$.
Assume that $\outfraccol \leq \frac{c_1}{\mu r}$, $\outfracrow(\alpha) \leq \frac{c_2}{f^2}$, and $\outfracrow_{\init} \le \frac{c_3}{r}$.%
\end{assu}

\begin{assu}[Slow subspace change] \label{slow_ss}
Let $\xmint: = \min_t \min_{i \in \Tt} |(\xt)_i|$ and let $\SE_j:= \SE(\P_{j-1}, \P_j)$.
Assume that  $\SE_j \le 0.8$ and $\SE_j \le \frac{c_4}{\sqrt{r}} \frac{\xmint}{\sqrt{\lambda^+}}$.
\end{assu}

{\em The order notation used here and below assumes that $f, \mu$ are constants.}

\subsection{Guarantees}
Before stating our main result, we define a few terms next.
\begin{definition}
Let the mini-batch size $\alpha := C f^2  r \log n$, 
the number of subspace update iterations needed to get an $\zz$ accurate estimate, $K =  K(\zz):=C \log (\dif/\zz)$, where $\Delta:=\max_j \SE_j$, 
noise power $\lambda_v^+ := \max_t \|\E[\vt\vt{}']\|$, and effective noise dimension, $r_v := \frac{\max_t \|\vt\|^2}{\lambda_v^+}$.
Recall from Algorithm  \ref{algo:auto-reprocs-pca} that $\that_j$ denotes the time at which the $j$-th subspace change is detected.

\end{definition}

We have the following result.

\begin{theorem}
Assume that Assumptions \ref{def_pw_ss}, \ref{defmu}, \ref{def_outfrac}, and \ref{slow_ss} hold. Assume that the noise $\vt$ is bounded, i.i.d. over time,  independent of $\T_t$, uncorrelated with $\lt$, i.e., $\E[\lt \vt{}'] = 0$, and with $r_v \leq Cr$, and $ \sqrt{\lambda_v^+/\lambda^-} < 0.01$. Also, assume that $\lt$'s and $\T_t$'s are independent.

Pick an $\zz$ that satisfies $c \sqrt{\lambda_v^+/\lambda^-} \leq \zz  \leq \min \left( c_3 \frac{1}{\sqrt{r}} \frac{\xmint}{\sqrt{\lambda^+}} , 0.01 \right)$.
Consider Algorithm \ref{algo:auto-reprocs-pca} with $K = K(\zz)$ as defined above,  $\alpha=Cf^2 r \log n$, $\omega_{evals} = 2 \zz^2 \lambda^+$,
$\zeta = \xmint/15$ and $\omega_{supp} = \xmint/2$.
If

\begin{enumerate}

\item $\max_t \|\vt\| \le c_5  \xmint  $,

\item  $t_{j+1}-t_j > (K+2)\alpha$, and $\SE_j > 9  \sqrt{f} \zz$

\item  initialization\footnote{This can be satisfied by using $C \log r$ iterations of AltProj \cite{robpca_nonconvex} on the first $t_\train = C r$ data samples.}:  $\SE(\Phat_0,\P_0) \le \min \left( c_6 \frac{1}{\sqrt{r}} \frac{\xmint}{\sqrt{\lambda^+}} , 0.25 \right)$;
\end{enumerate}

then, w.p. at least $1 - 10 \tmax n^{-10} $, 
\ben
\item $t_j \le \that_j \le t_j+2 \alpha$,
\begin{align*}
&\SE(\Phat_{(t)}, \P_{(t)})  \le \\
 &\left\{
\begin{array}{ll}
(\zz + \SE_j) & \text{ if }  t \in [t_j, \that_j+\alpha), \\
 (0.3)^{k-1} (\zz + \SE_j) & \text{ if }  t \in [\that_j+(k-1)\alpha, \that_j+ k\alpha), \\
\zz   & \text{ if }  t \in [\that_j+ K\alpha+\alpha, t_{j+1}),
\end{array}
\right.
\end{align*}
and $\|\lhat_t-\lt\| \le 1.2 \SE(\Phat_{(t)}, \P_{(t)})  \|\lt\|  + \|\vt\|$;

\item $\That_t =\T_t$ and the bound on $\|\xhat_t - \xt\|$ is the same as that on $\|\lhat_t-\lt\|$.
\een
The time complexity is $O(n \tmax r  \log (1/\zz) )$ and memory complexity is $O(n \alpha) = O(f^2 n r \log n)$.
\label{thm1}
\end{theorem}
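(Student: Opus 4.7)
The plan is to prove Theorem \ref{thm1} by induction over the subspace update mini-batches, using Corollary \ref{cor:noisy_pca_sddn} (PCA in sparse data-dependent noise) as the driver for the subspace-error decay. First I would set up the induction hypothesis: immediately after the $j$-th change is detected at $\hat t_j \in [t_j, t_j+2\alpha]$, I inductively assume that after the $k$-th SVD update the current subspace estimate satisfies $\SE(\hat\P_{(t)}, \P_j) \le q_k := (0.3)^{k-1}(\zz + \SE_j)$ on the $k$-th $\alpha$-length mini-batch, with $q_0 := \SE_j + \zz \le 0.9$ handled by the slow-change + initialization assumption, and the base case $k=0$ coming either from the previous subspace's guarantee (stable phase) or from the initialization step followed by the detection argument.

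Given the induction hypothesis, for each $t$ in the current $\alpha$-batch I would analyze the \emph{projected CS step} on $\tilde\y_t = \bpsi \x_t + \b_t$ with $\bpsi = \I - \hat\P_{(t-1)}\hat\P_{(t-1)}{}'$. Using left $\mu$-incoherence of $\P_j$ (Assumption \ref{defmu}) together with the column-outlier bound $\outfraccol \le c/\mu r$ (Assumption \ref{def_outfrac}), a standard incoherence/RIP argument gives that $\bpsi_{\T_t}$ is well-conditioned, so that $\ell_1$-min followed by thresholding with $\omega_{supp} = \xmint/2$ yields $\hat\T_t = \T_t$ exactly provided $\|\b_t\| \le \xi$ is small; the slow-subspace-change Assumption \ref{slow_ss} together with the bounded-$\a_t$ and small-$\v_t$ hypotheses gives exactly this. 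Least squares on the correct support then produces $\hat\x_t$ with $\hat\ell_t - \ell_t = -\I_{\T_t}(\bpsi_{\T_t})^{\dagger} (\bpsi \ell_t + \bpsi \v_t)$, which is \emph{sparse with support $\T_t$} and linearly data-dependent on $\ell_t$. Writing this in the form $\hat\ell_t - \ell_t = \I_{\T_t} \M_{s,t}\ell_t + \bar\v_t$ with $\|\M_{s,t} \P_j\| \le C\,\SE(\hat\P_{(t-1)}, \P_j) \le C q_k$ is the key bookkeeping step.

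The subspace update at iteration $k+1$ is an $r$-SVD of $\hat\L_{t;\alpha} = \L + \W + \bar{\V}$ with sparse data-dependent noise $\W$. Here I invoke Corollary \ref{cor:noisy_pca_sddn} with $q = C q_k$ and with the row-sparsity parameter $b \le \outfracrow(\alpha) \le c/f^2$ from Assumption \ref{def_outfrac}. Picking $\alpha = C f^2 r \log n$ ensures that the probabilistic $H(\alpha), H_{\text{denom}}(\alpha)$ remainders are below the required tolerance, and then the corollary yields $\SE(\hat\P_{(t)}, \P_j) \le 4\sqrt{b} f \cdot C q_k + g' \le 0.3\, q_k$, provided the uncorrelated-noise floor satisfies $\sqrt{\lambda_v^+/\lambda^-} \le c\zz$; this is exactly the choice of lower bound on $\zz$ in the statement. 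Chaining this for $k = 1, \dots, K(\zz) = C\log(\Delta/\zz)$ gives the geometric decay down to $\zz$, and the $\lt$-recovery bound follows by plugging $\SE(\hat\P_{(t)}, \P_j)$ into the CS error expression.

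The two pieces I expect to be the main obstacles are (i) the \textbf{detection analysis}: showing that right after $t_j$ the eigenvalue threshold $\omega_{evals}$ on the residual covariance fires in at most $2\alpha$ samples (so $\hat t_j \le t_j + 2\alpha$), while not firing spuriously during a stable phase — this needs a two-sided matrix-Bernstein argument on the projected residuals, using $\SE_j$ bounded below by $9\sqrt{f}\zz$ for the ``fire'' direction and $\SE(\hat\P,\P_{j-1}) \le \zz$ for the ``no false alarm'' direction; and (ii) propagating the \textbf{high-probability bounds uniformly} across all $J \cdot K$ mini-batches so that the total failure probability remains $10 d n^{-10}$ via a union bound, which forces the constant hidden in $\alpha = Cf^2 r \log n$. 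Everything else — the LS perturbation algebra, the time/memory complexity count, the statement about $\hat\x_t$ — is a routine consequence of the above.
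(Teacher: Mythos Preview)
Your proposal is correct and follows essentially the same route as the paper: projected CS with exact support recovery (via left incoherence and the $\outfraccol$ bound) yields $\hat\ell_t - \ell_t$ as sparse linearly data-dependent noise, Corollary~\ref{cor:noisy_pca_sddn} with $b \le c/f^2$ then drives the geometric $0.3^{k}$ decay, and the detection/no-false-alarm analysis plus a union bound over all $O(d/\alpha)$ mini-batches closes the argument. The only refinement worth noting is that the paper's detection lemma (your obstacle (i)) is not a bare matrix-Bernstein two-sided bound: after Weyl and the concentration step it still needs a deterministic eigenvalue argument---a QR factorization of $\bm\Phi\P_j$ combined with a PSD-sandwich/Ostrowski step---to convert $\lambda_{\max}$ of the projected sample covariance into the quantity $\SE(\hat\P_{j-1},\P_j)^2$, so you should plan for that extra piece of linear algebra.
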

\begin{proof} We prove this in Sec. \ref{sec:proof_norst}. \end{proof}

We have the following corollary for the smoothing NORST algorithm (last few lines of Algorithm \ref{algo:auto-reprocs-pca}). This is also a mini-batch approach with mini-batch size $(K+2) \alpha$ (instead of $\alpha$ for NORST).
\renewcommand{\offline}{\mathrm{smoothing}}
\begin{corollary}\label{cor:thm1}[Smoothing NORST for dynamic RPCA]
Under the assumptions of Theorem \ref{thm1}, the following also hold:
 $\SE(\Phat_{(t)}^{\offline}, \P_{(t)})  \le \zz$, $\|\lhat_t^{\offline}-\lt\| \le \zz \|\lt\| + \|\vt \|$ at all times $t$. Its time complexity is $O(n \tmax r  \log (1/\zz) )$ and memory complexity is $O(K n \alpha) = O(n r \log n \log (1/\zz) )$. All these quantities are computed within a delay of at most $(K+2)\alpha$.
\end{corollary}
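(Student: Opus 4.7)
The plan is to leverage Theorem \ref{thm1} directly: once an $\zz$-accurate estimate of the current subspace has been obtained (which Theorem \ref{thm1} guarantees after $K$ subspace updates, i.e., within delay $(K+2)\alpha$ of each change time $t_j$), I would re-run the projected CS + LS step a second time on the stored raw observations $\yt$ using this improved subspace. The smoothing estimates $\Phat_{(t)}^{\offline}$ and $\lhat_t^{\offline}$ are the outputs of this second pass.

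More precisely, for each $j$ let $\Phat_j^{\mathrm{final}}$ denote the subspace estimate produced by the $K$-th update iteration in the $j$-th subspace change interval; by Theorem \ref{thm1}, $\SE(\Phat_j^{\mathrm{final}},\P_j)\le \zz$ and this estimate is available by time $\that_j+K\alpha\le t_j+(K+2)\alpha$. For every $t\in[t_j,t_{j+1})$ I would then set $\Phat_{(t)}^{\offline}:=\Phat_j^{\mathrm{final}}$, define $\bpsi^{\offline}:=\I-\Phat_j^{\mathrm{final}}\Phat_j^{\mathrm{final}}{}'$, apply $\ell_1$-minimization to $\bpsi^{\offline}\yt=\bpsi^{\offline}\xt+\bpsi^{\offline}(\lt+\vt)$, threshold at $\omega_{supp}/2$ to recover $\That_t^{\offline}$, and finally solve LS on the estimated support to obtain $\xhat_t^{\offline}$, setting $\lhat_t^{\offline}:=\yt-\xhat_t^{\offline}$. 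The bound on $\SE(\Phat_{(t)}^{\offline},\P_{(t)})$ is then immediate from the choice $\Phat_{(t)}^{\offline}=\Phat_j^{\mathrm{final}}$.

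The bounds on $\|\lhat_t^{\offline}-\lt\|$ and the exact support recovery $\That_t^{\offline}=\T_t$ would follow by repeating the projected CS analysis already carried out for NORST inside the proof of Theorem \ref{thm1}, but now with the subspace error upper-bounded by $\zz$ rather than by $\SE_j+\zz$. Concretely, $\bpsi^{\offline}$ still satisfies the restricted isometry property on $|\T_t|$-sparse vectors thanks to Assumption \ref{defmu} (left incoherence) and Assumption \ref{def_outfrac} ($\outfraccol\le c_1/\mu r$); the perturbation $\|\bpsi^{\offline}\lt\|\le \zz \|\lt\|$ by the subspace error bound; and the slow-subspace-change assumption (Assumption \ref{slow_ss}) together with the choices $\xi=\xmint/15$ and $\omega_{supp}=\xmint/2$ forces every true support entry to lie above threshold while every zero entry lies below. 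This yields $\That_t^{\offline}=\T_t$ and then LS on $\T_t$ gives $\|\xhat_t^{\offline}-\xt\|\le \zz\|\lt\|+\|\vt\|$; the same bound transfers to $\|\lhat_t^{\offline}-\lt\|$ by subtraction.

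The time-complexity claim follows because the smoothing pass adds only one extra projected-CS and one LS per time step, costing $O(nr)$ per $t$ and $O(n\tmax r)$ total, which is dominated by the $O(n\tmax r\log(1/\zz))$ cost of the $K=O(\log(1/\zz))$ subspace updates in Theorem \ref{thm1}. The memory complexity grows from $O(n\alpha)$ for NORST to $O(Kn\alpha)=O(nr\log n\log(1/\zz))$ because we must buffer the raw observations across all $K$ subspace-update mini-batches in order to re-process them once $\Phat_j^{\mathrm{final}}$ becomes available; this also explains the $(K+2)\alpha$ delay, the extra $2\alpha$ accounting for the worst-case detection lag $\that_j\le t_j+2\alpha$ from Theorem \ref{thm1}. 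There is no substantive obstacle: the only nontrivial piece is re-invoking the projected-CS step of Theorem \ref{thm1}'s proof with the tighter subspace error $\zz$, and since that proof must already handle precisely this error level (which occurs after $K$ iterations in the online phase), the argument carries over verbatim.
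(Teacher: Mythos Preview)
Your proposal has a genuine gap: it relies on knowing the true change times $t_j$, which are unknown to the algorithm. You write ``For every $t\in[t_j,t_{j+1})$ I would then set $\Phat_{(t)}^{\offline}:=\Phat_j^{\mathrm{final}}$,'' but Theorem \ref{thm1} only gives $t_j \le \that_j \le t_j+2\alpha$. If you replace $t_j$ by $\that_j$ in your scheme, then for the (up to $2\alpha$) frames $t\in[t_j,\that_j)$ the true subspace is already $\P_j$, yet your smoothing estimate would still be $\Phat_{j-1}^{\mathrm{final}}$, and $\SE(\Phat_{j-1}^{\mathrm{final}},\P_j)\le \zz+\SE_j$, not $\le\zz$. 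So the claimed uniform bound $\SE(\Phat_{(t)}^{\offline},\P_{(t)})\le\zz$ fails on exactly those frames.

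The paper's fix (which is also what the Smoothing NORST lines of Algorithm \ref{algo:auto-reprocs-pca} actually do) is to set $\Phat_{(t)}^{\offline}:=\mathrm{basis}([\Phat_{j-1},\Phat_j])$ on the entire window $[\that_{j-1}+K\alpha,\that_j+K\alpha)$. The key identity, for mutually orthonormal basis matrices $\P_1,\P_2$, is
\[
\I-\P_1\P_1{}'-\P_2\P_2{}'=(\I-\P_1\P_1{}')(\I-\P_2\P_2{}'),
\]
which implies $\SE(\Phat_{j-1,j},\P_{j-1})\le\SE(\Phat_{j-1},\P_{j-1})\le\zz$ and $\SE(\Phat_{j-1,j},\P_j)\le\SE(\Phat_j,\P_j)\le\zz$. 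Since $\Pt$ equals either $\P_{j-1}$ or $\P_j$ on this window, the $\zz$ bound holds regardless of where $t_j$ falls, and no knowledge of $t_j$ is needed. A secondary difference: the paper's smoothing step does not rerun $\ell_1$ minimization and thresholding; it simply does LS on the support $\That_t$ already recovered exactly by the online pass (Theorem \ref{thm1} guarantees $\That_t=\T_t$), which is cheaper but not essential to correctness.
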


The above result guarantees that NORST can detect subspace changes in delay at most $\alpha = C r\log n $ and track them to $\zz$ accuracy in delay at most $(K+2) \alpha =C r\log n  \log (\Delta/\zz) $. The corollary for smoothing NORST guarantees that, with this delay, each column of $\L$, $\lt$, is recovered to $\zz$ relative accuracy.
The minimum delay needed to compute an $r$-dimensional subspace even with perfect data $\yt=\lt$ is $r$. Thus, our result guarantees near optimal detection and tracking delay (``near optimal'' means that it is within log factors of the minimum delay). Moreover, the required lower bound on the delay between subspace change times is also near optimal.
Quick and reliable change detection is an important feature, e.g., this feature has been used in \cite{selin_reprocs} to detect structural changes in a dynamic social network.

 When the extra unstructured noise $\vt=0$, we can track to any $\zz>0$ otherwise we can track to $\zz  \ge \sqrt{\lambda_v^+/\lambda^-}$ (square root of the noise level). It is possible to slightly relax this requirement to $\zz \ge {\lambda_v^+/\lambda^-}$ by picking a larger $\alpha$,  $\alpha = C  (r \log n) (\lambda^- / \lambda_v^+)$, but it cannot be eliminated. 
The reason is that at each time $t$, we have an {\em under-determined} set of equations corrupted by unstructured noise $\vt$. Even assuming the subspace is known or has been perfectly estimated, it is under-determined: we have $n + r$ unknowns at each time $t$  but only $n$ observed scalars. This is also true for any other under-determined problem as well, e.g., standard RPCA or CS\footnote{To address a reviewer comment, one cannot get a consistent estimator for our problem, nor for standard RPCA or CS. Consistent estimator means that the recovery error goes to zero as the number of observed data points increases. For example for Least Squares estimation, one can show the estimator is consistent. But this is true because number of observed data points increases while the number of unknowns remains constant. In our case, the number of unknowns also increases with time $t$: at each $t$, we have $(n+r)$ unknowns even if $\Pt$ has been estimated.}.

Notice also that we have assumed that the ``effective noise dimension'', $r_v \in O(r)$. This requirement can be eliminated if we set $\alpha = C f^2 \max(r,r_v) \log n$. 

\cred
From the perspective of recovering the true data $\lt$, both $\vt$ and $\xt$ are noise or perturbations. The difference is that $\vt$ is a vector of small disturbances or modeling errors, while $\xt$ is a  sparse outlier vector with few nonzero entries. By definition, an outlier is an infrequent but large disturbance.
Our result tolerates what can be called ``bi-level perturbations'': the small perturbation $\vt$ needs to be small enough and the minimum outlier magnitude $\xmint$ needs to be large enough so that $\|\vt\| \le 0.2 \xmint$ (minimum outlier magnitude).
Moreover, $\xmint$ also needs to be large enough to satisfy Assumption \ref{slow_ss}. The need for both these assumptions is explained in Sec. \ref{tradeoffs}.
%
Assuming that $\xmint^2$ is of order ${\lambda^+}$ (signal power), Assumption \ref{slow_ss} requires that $\SE_j$ be $O(1/\sqrt{r})$. However this is not as restrictive as it may seem. The reason is that $\SE(.)$ is only measuring the sine of the largest principal angle. If all principal angles are roughly equal, then, this still allows the chordal subspace distance  ($l_2$ norm of the vector of sines of all $r$ principal angles) \cite{chordal_dist} to  be $O(1)$.
\cbl

Our result assumes a minor lower bound on $\SE_j$. This is needed to guarantee reliable subspace change detection. Changes that are smaller than order $\zz$ cannot be detected when the previous subspace is only tracked to accuracy $\zz$. However, such changes also increase the tracking error only by an extra factor of $\zz$ and hence can be treated as noise. If change detection is not important, then, as we explain in Sec. \ref{extends}, we can use a simpler NORST algorithm that does not need the lower bound.

Consider the piecewise constant subspace change assumption. In practice, e.g., in the video application, typically the subspaces change by a little at each time. This can be modeled as piecewise constant subspaces plus modeling error $\vt$. We explain this point in  Sec. \ref{extends} where we also provide a corollary for this setting. This corollary explains why the NORST algorithm ``works''  (gives good, but not perfect, subspace estimates and estimates of $\lt$) for real videos or for simulated data generated so that $\Pt$ changes a little at each $t$; see Sec. \ref{sec:sims} and more detailed experiments in \cite{rrpcp_review}.

To keep the theorem statement simple, we have used tighter bounds than required. Define the intervals $\J_{j,1} = [t_j, \that_j + \alpha)$, $\J_{j,k} := [\that_j + (k-1)\alpha, \that_j + k\alpha)$ for $k=2,3,\dots, K$, and $\J_{j,K+1} = [\that_j + (K+1)\alpha, t_{j+1})$. For $t \in \J_{j,k}$, for $k=1,2,\dots, K$, we only need $0.3^{k-1} (\zz+ \SE_j) \sqrt{r \lambda^+} \le c \min_{i \in \Tt} |(\xt)_i|$, i.e., the required lower bound on the minimum outlier magnitude at time $t$ decreases as the subspaces get estimated better. For the outliers $\xt$ for $t \in \J_{j,K+1} $, we do not require any lower bound. Secondly, if the outlier vector is such that some entries are very small while the others are large enough, then we can treat the smaller entries as ``noise'' $\vt$. This will work as long as these small entries are small enough so that the sum of their squares is sufficiently smaller than the square of the magnitude of the larger entries, i.e., for $t \in \J_{j,k}$, we can split $\xt$ as $\xt = (\xt)_{small} + (\xt)_{large}$ with the two components being such that $c \x_{t,large,min} \ge \|(\xt)_{small}\|$  and $c  \x_{t,large,min} \ge 0.3^{k-1} (\zz+ \max_j \SE_j)\sqrt{r \lambda^+}$.
Finally, if we also state the PCA-SDDN result in its most general form, the subspace error decay rate of $0.3$ can be replaced by $(6\sqrt{b_0} f)$ with $b_0:= \outfracrow$, so this requirement becomes $c  \x_{t,large,min} \ge (6\sqrt{b_0} f)^{k-1} (\zz+ \max_j \SE_j)\sqrt{r \lambda^+}$. With this change, the expression for $K$ becomes $K = \left \lceil \frac{\log \left( \semax / \zz \right)}{-\log(6 \sqrt{b_0} f)}  \right \rceil.$
Thus, a smaller $b_0$ means that the subspace error decays faster. This, in turn, means that a smaller $K$ suffices (faster tracking and a smaller required lower bound on $t_{j+1}-t_j$). It also means a smaller lower bound is needed on the outlier magnitudes at most times.



\subsection{How slow subspace change (Assumption \ref{slow_ss}) enables improved outlier tolerance}
\label{tradeoffs}
We explain here how the use of Assumption \ref{slow_ss} enables improved outlier tolerance. Briefly, the reason is we recover each outlier $\xt$ and its support $\T_t$ individually. To understand things simply, assume $\vt=0$.

Given a good previous subspace estimate, $\Phat_{(t-1)}$, slow subspace change implies that $\SE(\Phat_{(t-1)},\Pt)$ is small. 
Consider an $\alpha$ length interval $\J$ during with $\Phat_{(t-1)} = \Phat$ (computed in the previous $\alpha$ interval).
To exploit slow subspace change, we project each $\yt$ orthogonal to $\Phat$ to get $\tty_t : = \bpsi \st + \b_t $ where $\b_t:= \bpsi \lt $ is small because of above. Here $\bpsi:= \I - \Phat \Phat{}'$. 
Now $\bt$ itself does not have any structure. But, the matrix $\B_{\J}$ formed by the $\b_t$'s for $t \in \J$, is low rank with rank $r$ \footnote{The effective (stable) rank, of $\B_{\J}$  will be less than $r$ only if we assume more structure on subspace change, e.g., if we assume that only a few subspace directions change. Its exact rank will still be $r$.}.
Accurately recovering $\S_\J$ from $\tilde{\Y}_\J:= \bpsi \S_\J + \B_\J$ when $\B_\J$ has rank $r$ is impossible if the fraction of outliers in any row or in any column of $\S_\J$ is more than $c/r$. The reasoning is the same as that used for standard RPCA \cite{robpca_nonconvex}: we can construct a sparse matrix $\S_\J$ with rank $1/\max(\outfracrow,\outfraccol)$.
%
Thus if $\outfracrow = c$, we can construct a sparse $\S_\J$ with rank $1/c= C \ll r$ \footnote{A simple way to do this would be as follows. Let $b_0 = \outfracrow$ and suppose $b_0$ is a constant (is more than order $1/r$).
Let the support and nonzero entries of $\X_\J$ be constant for the first $b_0 \alpha$ columns; after this, move the nonzero entries down in such a way that there is no overlap of supports; and repeat this every $b_ 0 \alpha$ columns. With this, the $\outfracrow = b_0$ and the rank of $\X_\J$ is $\alpha/(b_0 \alpha) = 1/b_0$ since there are only $1/b_0$ unique vectors in this matrix construction.}.
If the rank of $\S_\J$ is less than $r$, that of $\bpsi \S_\J$ will also be less than $r$, making the recovery problem un-identifiable: if we try to find a matrix $\hat\B_\J$ of rank at most $r$ and a matrix $\hat\X_\J$ that is the sparsest and both satisfy $\tilde{\Y}_\J:= \bpsi \hat\S_\J + \hat\B_\J$, it is possible that we get the solution $\hat\B_\J = \B_\J + \bpsi \hat\S_\J$ and $\hat\X_\J = \bm{0}$. Because rank of $\bpsi \hat\S_\J$ is less than $r$ and that of $\B_\J$ is $r$, it is possible that the sum still has rank $r$. 



Thus, if we would like to improve row-wise outlier tolerance to $O(1)$, we cannot jointly recover all columns of  $\S_{\J}$ by exploiting the low rank structure of $\B_{\J}$. 
The only other way to proceed is as we do: recover them one $\st$ at a time from $\tty_t$. Here we can only use the fact that $\|\bt\|$ is small due to slow subspace change. The problem of recovering  a single $\st$ from $\tty_t$ is a standard noisy CS problem \cite{candes_rip}, with small noise $\bt$. To our best knowledge, there are no entry-wise recovery guarantees for CS. One can only bound $\|\xhat_{t,cs} - \xt\|$ by a constant (that depends on the restricted isometry constant of $\bpsi$), $C$, times $\|\bt\|$. Here $\xhat_{t,cs}$ is the output of the CS step (line 7 of Algorithm \ref{algo:auto-reprocs-pca}).
With this, correct support recovery, $\That_t = \T_t$, is ensured only if $\xmint > 2 C \|\bt\| $.
The worst case bound on $\|\bt\|$ comes from when the subspace has changed but the change has not been detected so that $\Phat_{(t-1)} = \Phat_{j-1}$ and $\Pt = \P_j$. At this time, $\|\bt\| \le  \max_j \SE(\Phat_{j-1}, \P_j) \sqrt{r}  \sqrt{\lambda^+}$. Also, we can show that $ \SE(\Phat_{j-1}, \P_j) \le \SE_j + \zz$. Thus, exact support recovery is guaranteed if Assumption \ref{slow_ss} holds and $\zz$ is chosen as specified in the theorem.
%
When $\vt \neq 0$, the bound on $\|\bt\|$ contains a $\|\vt\|$ term. In this case, exact support recovery also needs $\|\vt\| \le c_5 \xmint$.

Exact support recovery followed by LS on the recovered support and then subtraction to get $\lhat_t$ implies that $\lhat_t$ satisfies $\lhat_t  = \lt + \et$ with $\et:= -  \I_{\T_t} \underbrace{(\I_{\T_t}{}' \bpsi \I_{\T_t}{})^{-1} \I_{\T_t}{} \bpsi }_{\M_{s,t}} \lt$. Notice that $\et$ is sparse and linearly data-dependent and, conditioned on $\Phat$ and the support sets $\T_t$, the matrix $\M_{s,t}$ is deterministic. So we can apply the PCA-SDDN result from the previous section. It also needs statistical right incoherence, $q := \max_t \|\M_{s,t} \P_j\| \le C \SE(\Phat,\P_j) $ (holds by left incoherence and $\outfraccol < c/r$), and  $\outfracrow (\alpha) \le c$ (constant row-wise outlier fraction bound). If the support recovery were incorrect, the estimated support $\That_t$ would depend on $\b_t$ and hence on $\lt$. This would mean that, even conditioned on $\Phat$ and $\T_t$, the matrices $\M_{s,t}$ are not deterministic making the PCA-SDDN result inapplicable.%

\begin{algorithm}[t!]
\caption{\small{NORST Algorithm. We obtain $\Phat_0$ by $C (\log r)$ iterations of AltProj on $\Y_{[1,t_\train]}$, $t_\train=C r$.}}
\label{algo:auto-reprocs-pca}
\label{norst}
\begin{algorithmic}[1]
\State \textbf{Input}:  $\Phat_0$, $\yt$; ~ \textbf{Output}:  $\shatt$, $\lhatt$,  $\Phat_{(t)}$; ~  \textbf{Parameters:} $\omega_{supp}$, $\xi$, $\alpha$, $K$,  $\lthres$
\State $\Phat_{(t_\train)} \leftarrow \hat{\pt}_{0}$;  $\tildej~\leftarrow~1$, $k~\leftarrow~1$
\State $\mathrm{phase} \leftarrow \mathrm{update}$; $\that_{0} \leftarrow t_\train$;
\For {$t > t_\train$}
\State $\bpsi \leftarrow \bm{I} - \hat{\pt}_{(t-1)}\hat{\pt}_{(t-1)}{}'$
\State $\tty_t \leftarrow \bpsi \yt$.
\State $\xhat_{t,cs} \leftarrow \arg\min_{\tilde{\bm{x}}} \norm{\tilde{\bm{x}}}_1 \ \text{s.t.}\ \norm{\tilde{\bm{y}}_t - \bpsi \tilde{\bm{x}}} \leq \xi$.
\State $\That_t \leftarrow \{i:\ |\xhat_{t,cs}| > \omega_{supp} \}$.
\State $\xhat_t \leftarrow \I_{\That_t} ( \bpsi_{\That_t}{}' \bpsi_{\That_t} )^{-1} \bpsi_{\That_t}{}'\tty_t$.
\State $\lhat_t \leftarrow \yt - \xhat_t$
\If{$\text{phase} = \text{detect}$ and $t = \hat{t}_{j-1, fin} + u\alpha$}
\State $\bphi \leftarrow (\I - \Phat_{j-1}\Phat_{j-1}{}')$.
\State $\bm{B} \leftarrow \bphi\Lhat_{t, \alpha}$ with $\Lhat_{t, \alpha}:=[\lhat_{t-\alpha+1}, \lhat_{t-\alpha+2},\dots \lhat_t]$.  
\If {$\lambda_{\max}(\bm{B}\bm{B}{}') \geq \alpha \lthres$}
\State $\text{phase} \leftarrow \text{update}$, $\hat{t}_j \leftarrow t$,
\EndIf
\EndIf
\If {$\text{phase} = \text{update}$}
\If {$t = \that_j + u \alpha - 1$ for $u = 1,\ 2,\ \cdots,$}
\State $\Phat_{j, k} \leftarrow  \SVD_r[\Lhat_{t; \alpha}]$, $\Phat_{(t)} \leftarrow \Phat_{j,k}$, $k \leftarrow k + 1$.
\Else
\State $\Phat_{(t)} \leftarrow \Phat_{(t-1)}$
\EndIf
\If{$t = \that_j + K\alpha - 1$}
\State $\hat{t}_{j, fin} \leftarrow t$, $\Phat_{j} \leftarrow \Phat_{(t)}$
\State $k \leftarrow 1$, $j \leftarrow j+1$, $\text{phase} \leftarrow \text{detect}$.
\EndIf
\EndIf
\EndFor
\State {\bf Smoothing NORST: }
At $t = \that_j + K \alpha$, for all $t \in [\that_{j-1}+ K \alpha,  \that_j + K \alpha-1]$,
\State $\Phat_{(t)}^{\offline} \leftarrow
basis([\Phat_{j-1}, \Phat_j])$, where $basis(\M)$ refers to a basis matrix that has span equal to $\Span(\M)$.
\State $\bpsi \leftarrow \I - \Phat_{(t)}^{\offline} \Phat_{(t)}^{\offline}{}'$;  \
 $\xhatt^{\offline} \leftarrow \I_{\That_t} (\bpsi_{\That_t}{}'\bpsi_{\That_t})^{-1} \bpsi_{\That_t}{}' \yt$; \
 $\lhatt^{\offline} \leftarrow \yt - \xhatt^{\offline}$.

\end{algorithmic}
\end{algorithm}
\vspace{-.5cm}

\subsection{Understanding Statistical Right Incoherence}\label{why_stat_right_incoh}

\newcommand{\vv}{\bm{v}} 

Let $\L_j := \L_{[t_j, t_{j+1})}$.
From our assumptions, $\L_j = \P_j \A_j$ with $\A_j:= [\a_{t_j},\a_{t_j+1},\dots \a_{t_{j+1}-1}]$, the columns of $\A_j$ are zero mean, mutually independent, have identical covariance $\Lam$, $\Lam$ is diagonal, and  bounded. Let $d_j := t_{j+1}-t_j$.
Define a diagonal matrix $\Sigma$ with $(i,i)$-th entry $\sigma_i$ satisfying $\sigma_i^2 := \sum_t (a_t)_i^2 / d_j$. Define a $d_j \times r$ matrix $\tilde\V$ with the $t$-th entry of the $i$-th column being $(\tilde\vv_i)_t: = (\a_t)_i/ (\sigma_i \sqrt{d_j})$. Clearly, $\L_j = \P_j \Sigma \tilde{\V}'$ and each column of $\tilde\V$ is unit 2-norm. This can be interpreted as an approximation to the SVD of $\L_j$; we say approximation because the columns of $\tilde\V$ are not necessarily exactly mutually orthogonal.
However, if $d_j$ is large enough, one can argue using  scalar Hoeffding inequality (applicable because $\at$'s are bounded), that, whp,
(i) the columns of $\tilde\V$ are approximately mutually orthogonal, i.e. $|\tilde\vv_i' \tilde\vv_j | \le \epsilon$ for all $i \neq j$; and
(ii) $0.99 \lambda_i \le \sigma_i^2 \le 1.01 \lambda_i $ for all $i=1,2, \dots, r$. Thus, by the boundedness assumption on the $\at$'s,  the $t$-th row of $\tilde\V$ satisfies $\sum_{i=1}^r (\tilde\vv_i)_t^2  \le (1/d_j) (1/\min_i \sigma_{i}^2) \|\at\|^2 \le  (1/d_j) (1/\lambda^-) \mu r \lambda^+ = f \mu r / d_j$. This is the standard incoherence assumption with parameter $f \mu$.
Thus, whp, the approximate right singular vectors' matrix $\tilde\V$ of $\L_j$ satisfies the standard incoherence assumption.


\subsection{Nearly Optimal Robust ST via ReProCS (NORST-ReProCS): details}\label{algo_details}
Algorithm \ref{norst} uses the Recursive Projected Compressive Sensing framework introduced in \cite{rrpcp_perf}. 
It starts with a ``good'' estimate of the initial subspace. This can be obtained  by using a few iterations 
of AltProj applied to  $\Y_{[1,t_\train]}$ with $t_\train=Cr$.
It then iterates between (a) {\em Projected Compressive Sensing (CS) / Robust Regression\footnote{Robust Regression (with a sparsity model on the outliers) assumes that observed data vector $\y$ satisfies $\y = \Phat \a + \x + \b$ where $\Phat$ is a tall matrix (given), $\a$ is the vector of (unknown) regression coefficients, $\x$ is the (unknown) sparse outliers, $\b$ is (unknown) small noise/modeling error. An obvious way to solve this is by solving $\min_{\a,\x} \lambda \|\x\|_1+ \|\y - \Phat \a - \x\|^2$. In this, one can solve for $\a$ in closed form to get $\hat\a = \Phat'(\y-\x)$. Substituting this, the minimization simplifies to $\min_{\x} \lambda \|\x\|_1+ \|(\I - \Phat \Phat') (\y - \x)\|^2$. This is equivalent to the Lagrangian version of the projected CS problem that NORST solves (given in line 7 of  Algorithm \ref{algo:auto-reprocs-pca}). 
}} in order to estimate the sparse outliers, $\xt$'s, and hence the $\lt$'s, and (b) {\em Subspace Update} to update the estimates $\Phat_{(t)}$.
Projected CS proceeds as follows. At time $t$, if the previous subspace estimate, $\Phat_{(t-1)}$, is accurate enough, because of slow subspace change, projecting $\yt$ onto its orthogonal complement will nullify most of $\lt$. We compute $\tty_t:= \bpsi  \yt$ where $\bpsi := \I - \Phat_{(t-1)}\Phat_{(t-1)}{}'$. Clearly $\tty_t = \bpsi \xt + \bpsi (\lt+\vt)$ and $\| \bpsi  (\lt+\vt)\|$ is small due to slow subspace change and small $\vt$. Recovering $\xt$ from $\tty_t$ is now a CS / sparse recovery problem in small noise \cite{candes_rip}. 
We compute $\xhat_{t,cs}$ using noisy $l_1$ minimization followed by thresholding based support estimation to obtain $\That_t$. A Least Squares (LS) based debiasing step on $\That_t$ returns the final $\xhat_t$. We then estimate $\lt$ as $\lhat_t = \yt - \xhatt$.

The $\lhatt$'s are then used for the Subspace Update step 
%
which toggles between the ``detect'' phase and the ``update'' phase. It starts in the ``update'' phase with $\that_0 = t_\train$. We then perform $K$ $r$-SVD steps with the $k$-th one done at $t = \that_0 + k \alpha-1$. Each such step uses the last $\alpha$ estimates, i.e., uses $\Lhat_{t; \alpha}$. Thus at $t = \that_0 + K \alpha - 1$, the subspace update of $\P_0$ is complete. At this point, the algorithm enters the ``detect'' phase.
For any $j$, if the $j$-th subspace change is detected at time $t$, we set  $\that_j=t$. At this time, the algorithm enters the ``update'' (subspace update) phase. We then perform $K$ $r$-SVD steps with  the $k$-th $r$-SVD step done at $t = \that_j + k \alpha-1$ on $\Lhat_{t;\alpha}$. Thus, at $t = \that_{j,fin} = \that_j + K\alpha - 1$, the update is complete. At this $t$, the algorithm enters the ``detect'' phase.

To understand the change detection strategy, consider the $j$-th subspace change. Assume that the previous subspace $\P_{j-1}$ has been accurately estimated by $t= \that_{j-1,fin} = \that_{j-1}+K\alpha-1$ and that $\that_{j-1,fin} < t_j$. Let $\Phat_{j-1}$ denote this estimate. At this time, the algorithm enters the ``detect'' phase in order to detect the next ($j$-th) change. Let $\B_t:=(\I-\Phat_{j-1}\Phat_{j-1}{}')  \Lhat_{t;\alpha}$. At every $t = \that_{j-1,fin} + u \alpha-1$, $u=1,2,\dots$, we detect change by checking if the maximum singular value of $\B_t$ is above a pre-set threshold, $ \sqrt{\lthres \alpha}$, or not.
We claim that, with high probability (whp), under assumptions of Theorem \ref{thm1}, this strategy has no ``false subspace detections'' and correctly detects change within a delay of at most $2\alpha$ samples. The former is true because, for any $t$ for which $[t-\alpha+1, t] \subseteq [ \that_{j-1,fin}, t_j)$, all singular values of the matrix $\B_t$ will be close to zero (will be of order $\zz \sqrt{\lambda^+}$) and hence its maximum singular value will be below $ \sqrt{\lthres \alpha}$. Thus, whp, $\that_j \ge t_j$. To understand why the change {\em is} correctly detected within $2 \alpha$ samples, first consider $t =\that_{j-1,fin} + \lceil \frac{t_j - \that_{j-1,fin}}{\alpha} \rceil \alpha:= t_{j,*} $. Since we assumed that $\that_{j-1,fin}< t_j$ (the previous subspace update is complete before the next change), $t_j$ lies in the interval $[t_{j,*}-\alpha+1, t_{j,*}]$. Thus, not all of the $\lt$'s in this interval satisfy $\lt = \P_j \at$. Depending on where in the interval $t_j$ lies, the algorithm may or may not detect the change at this time.  However, in the {\em next} interval, i.e., for $t \in [t_{j,*}+1, t_{j,*} + \alpha]$,  all of the $\lt$'s satisfy $\lt = \P_j \at$. We can prove that, whp, $\B_t$ for this time $t$ {\em will} have maximum singular value that is above the threshold.
Thus, if the change is not detected at $t_{j,*}$, whp, it {\em will} get detected at $t_{j,*} + \alpha$. Hence, whp, either $\that_j = t_{j,*}$, or $\that_j =t_{j,*} + \alpha$, i.e., $t_j \le \that_j \le t_j + 2\alpha$.

\subsubsection{Algorithm parameters}
Algorithm \ref{norst} assumes knowledge of 4 model parameters: $r$, $\lambda^+$, $\lambda^-$ and $\xmint$ to set the algorithm parameters.
The initial dataset used for estimating $\Phat_0$ (using AltProj) can be used to get an accurate estimate of $r$, $\lambda^-$ and $\lambda^+$ using standard techniques. Thus one really only needs to set $\xmint$. If continuity over time is assumed, we can let it be time-varying and set it as $\min_{i \in \That_{t-1}}|(\xhat_{t-1})_i|$ at $t$.

\subsubsection{Time complexity} The time complexity is $O(n d r\log(1/\epsilon))$. We explain this in Appendix \ref{timecomp_derive}.%

\section{Related Work} \label{rel_work}

We first briefly discuss related work on PCA and then discuss robust PCA and subspace tracking papers.
While there has been a large amount of work in the last decade on finite-sample guarantees for PCA \cite{nadler,normal_approx} and related problems, such as sparse PCA \cite{vince_vu_annals,sparsepca_cai} and kernel PCA \cite{kpca_first, kernelpca_nips} most of these assume either the spiked covariance model (noise is modeled as being isotropic) \cite{nadler, kernelpca_nips} or that the observed data $\yt$ is i.i.d. \cite{nadler,normal_approx} or consider noiseless settings \cite{vince_vu_annals, sparsepca_cai} (typical in sparse PCA). The setting that we study involves linearly data dependent noise $\wt = \M_t \lt$ with the dependency matrix $\M_t$ being time-varying. Thus, the noise is clearly not isotropic. Moreover, this also means  that the observed data $\yt = \lt + \wt + \vp_t$ cannot be identically distributed over time.  In fact, our guarantee is interesting only in the setting where $\M_t$ changes enough over time so that the time-averaged expected value of signal-noise correlation and of noise power is sufficiently smaller than their respective instantaneous values. 

We should mention also that, while many sophisticated eigenvector perturbation bounds exist in the literature \cite{svd_topic_est, rel_perturb, rel_perturb2}, these are designed for different settings than the one we are interested in. For our setting, only the classical Davis-Kahan sin theta theorem \cite{davis_kahan} applies. In our analysis, we need to bound the sine of the largest principal angle between the true and estimated subspaces, because this helps us get a bound on the ``noise''/error seen by the projected compressed sensing step at the next time instant. Thus, \cite{svd_topic_est}, which only provides coordinate-wise bounds, cannot be used. The perturbation seen by our sample covariance matrix is additive and our observed data $\yt$ is not identically distributed, and thus the results of \cite{rel_perturb,rel_perturb2} do not apply either.


The robust PCA (RPCA) problem has been extensively studied since the first two papers by Candes et al and Chandrasekharan at al \cite{rpca,rpca2} and follow-up work by Hsu et al \cite{rpca_zhang} all of which studied a convex optimization solution, called Principal Components Pursuit or PCP. A faster non-convex solution, called Alternating Projections or AltProj, was introduced in \cite{robpca_nonconvex}. Later work has studied a projected gradient descent based approach, RPCA-GD \cite{rpca_gd}. The problem of RPCA with partial support knowledge was studied in \cite{zhan_pcp_jp}.
All RPCA guarantees assume $\mu$-incoherence of left and right singular vectors of $\L$ (needed to ensure that $\L$ is not sparse). One way to ensure that $\X$ is not low rank is to assume that an entry of $\X$ is nonzero with probability $\rho$ independent of all others (Bernoulli model) and to assume a bound on $\rho$. This was assumed in \cite{rpca}. This can sometimes be a strong assumption, e.g., in the video setting, it requires that foreground objects are one pixel wide and jump around completely randomly over time. But, if it holds, and if another stronger left-right incoherence assumption holds\footnote{$\max_{i,j} |\U \V')_{i,j}| \le \sqrt{ \frac{\mu r }{nd} }$ where $\U,\V$ are the matrices of left and right singular vectors of $\L$}, then $\rho \in O(1)$ (linear sparsity) can be tolerated while also allowing the rank of $\L$, $\rmat$ to be grow nearly linearly with $\min(n,d)$ \cite{rpca}.
The other approach to ensure that $\X$ is low rank is to assume a bound of $O(1/\rmat)$ on the maximum fraction of nonzeros (outliers) in any row or in any column of $\X$. This is assumed in most of the later works \cite{rpca2,rpca_zhang,robpca_nonconvex,rpca_gd,rmc_gd}. 

Our work provides a fast mini-batch solution to the related problem of robust subspace tracking (RPCA with explicitly assuming slowly changing subspaces). Because we replace right incoherence by its statistical version, we are able to obtain guarantees on detection and tracking delay of our approach and show that both are nearly optimal (are within log factors of the minimum required delay $r$). 
This also means that the memory complexity of NORST is also near optimal: we only need to store $\alpha$ $n$-length vectors in memory with $\alpha = C r \log n$. Of course, any RPCA approach could also be applied in a mini-batch fashion on $\alpha$-consecutive column sub-matrices, and then it will also have the same memory complexity. We assume this here in our discussion. With this assumption, $\outfracrow$ gets replaced by $\outfracrow(\alpha)$ and $\rmat$ gets replaced by $r$ for the RPCA guarantees as well.

Because we assume a lower bound on the minimum outlier magnitudes that is proportional to $\SE_j$, we obtain the following improvement in outlier tolerance (explained in Sec. \ref{tradeoffs}).
Treating $f$ as a constant, {\em for any mini-batch after $t_\train$}, we only need $\outfracrow(\alpha) \in O(1)$. For standard RPCA, unless a random model on outlier support is assumed, $\max(\outfraccol,\outfracrow(\alpha)) \in O(1/r)$ is needed \cite{robpca_nonconvex}. For the video application, this implies that NORST tolerates slow moving and occasionally static foreground objects much better than standard RPCA methods that do not assume slow subspace change.  This is also corroborated by our experiments on real videos, e.g., see Fig \ref{fig:mr_full} in Sec. \ref{sec:sims} and also see a more detailed and quantitative evaluation on real data provided in \cite{rrpcp_review}.
Since our algorithm needs to be initialized with a standard batch RPCA approach such as AltProj \cite{robpca_nonconvex} applied to the first $t_\train = C r$ data points, for this initial short batch, we do need AltProj assumptions to hold and this is why we need  $\outfracrow_{\init} \le \frac{c_3}{r}$.
%
%
For the per column fraction, we also need $\outfraccol \in O(1/r)$. Thus, the overall fraction of outliers allowed in a given matrix is still $O(1/r)$, which is the same as standard RPCA, but these can be less spread out row-wise (some rows could have many more outliers than others).%
\cbl

Moreover, we are able to guarantee that each column of $\L$, $\lt$, is recovered to $\zz$ relative accuracy and that the support of outliers can be recovered exactly. Neither is guaranteed by existing RPCA results, these only guarantee $\|\Lhat - \L\|_F \le \zz$.

Finally, in terms of time complexity, the NORST complexity of $O(n \alpha r\log(1/\zz))$ per mini-batch is comparable to that of simple (non-robust) PCA. In comparison to RPCA solutions, this is much faster than PCP \cite{rpca,rpca2,rpca_zhang} which needs $O(n \alpha^2 \frac{1}{\zz})$ and $r$-times faster than AltProj \cite{robpca_nonconvex} which needs $O(n \alpha r^2 \log(1/\zz))$. RPCA-GD \cite{rpca_gd} is as fast as NORST but requires an even tighter outlier fractions' bound than other RPCA solutions: $\max(\outfracrow,\outfraccol) \in O(1/ r^{3/2})$. 



Our work builds upon the simple-ReProCS (s-ReProCS) solution and guarantee \cite{rrpcp_dynrpca} and removes many of its limitations. 
S-ReProCS assumes a specific model of slow subspace change:  {\em only one} subspace direction can change at each change time, and the amount of change needs to be bounded. Even with this assumption, its tracking delay is of order $r \log n \log (1/\zz)$. Since only one direction is changing, this delay is $r$-times sub-optimal. The same is true for its required lower bound on subspace change times. 
A second limitation of s-ReProCS is that, in order to track subspaces to $\zz$ accuracy, it requires the initial subspace estimate to also be $\zz$ accurate. This, in turn, implies that one needs to run the AltProj or PCP algorithm on the initial mini-batch to convergence. Instead, our approach only requires the initial subspace error to be $O(1/\sqrt{r})$. Thus, only order $\log r$ iterations of AltProj suffice to initialize our algorithm.
Thirdly, the s-ReProCS guarantee needs a stronger statistical right incoherence assumption than ours: it needs an entry-wise bound of $\max_t \max_{i=1,2,\dots, r} |(\at)_i|^2 \le {\eta \lambda^+}$. Lastly, we develop important extensions of our main result for (i) only tracking subspace changes (without detecting the change), and (ii) for subspaces changing by a little at each time $t$.


An earlier version of Theorem \ref{thm1} appeared in ICML 2018 \cite{rrpcp_icml}. The results of the current manuscript improve upon the ICML result in various ways: we need a weaker statistical right incoherence assumption, a weaker lower bound on $\SE_j$, and we develop two important extensions of our main result for subspace changes at each time and for applications not requiring change detection. 
Moreover, \cite{rrpcp_icml} did not prove the result for PCA in data-dependent noise, but only used the result proved in our older ISIT paper \cite{pca_dd_isit}.
The problem of ST with missing data is a special case of robust ST, while ST with missing data and outliers is a simple generalization of robust ST. Interesting guarantees for both of these follow as easy corollaries of either our current result or of its earlier version from  \cite{rrpcp_icml}. A corollary of the result of  \cite{rrpcp_icml} for ST-miss is presented in \cite{rrpcp_tsp19}. 
In comparison to the result of  \cite{rrpcp_tsp19}, a similarly derived ST-miss corollary of our current result has all the advantages mentioned earlier in this paragraph.

\section{Extensions: subspace change at each time, subspace tracking without detection} \label{extends}

\newcommand{\tL}{\tilde\L}
\subsection{Subspace changing at each time}\label{change_every_t}
Suppose $\yt = \tilde\l_t + \xt$ where  $\tilde\l_t = \P_{(t)} \tilde\a_t$, $\Pt$ changes by a little at each time $t$, but has more significant changes at certain times $t_j$. 
We show here how this case can be handled by treating the error generated by changes at each time $t$ as extra unstructured noise $\vt$.
%
Assume that $\tilde\at$'s are zero mean, bounded, and i.i.d. with diagonal covariance matrix $\tilde\Lam$. Let $\tilde\lambda^+$ be its maximum eigenvalue and $\tilde{f}$ the condition number.
%
Define $\P_j$ as the matrix of top $r$ left singular vectors of the matrix $\tL_j:= [\tilde\l_{t_j}, \tilde\l_{t_j+1}, \dots, \tilde\l_{t_{j+1}-1}]$, or equivalently of $[\P_{(t_j)}, \P_{(t_j+1)}, \dots, \P_{(t_{j+1}-1)}]$.
Let $\at : = \P_j' \tilde\l_t$, $\lt : = \P_j \at$ and $\vt:= \tilde\l_t - \lt = \P_{j,\perp} \tilde\l_t$.%

Another way to understand the above is that we are expressing $\tL_j = \L_j + \V_j$ where $\L_j$ is the rank-$r$ SVD of $\tL$, while $\V_j$ is the rest. While  $\L_j \V_j'= 0$, we cannot say anything about individual vectors $\lt \vt'$ or their expected value. In general, $ \E[\lt \vt{}'] \neq 0$. But even then, we can always use Cauchy-Schwarz to get the bound $\|\E[\lt \vt']| \| \le \sqrt{\lambda^+ \lambda_v^+}$.
Thus, to analyze this case, we need to modify Corollary \ref{cor:noisy_pca_sddn} for PCA-SDDN as follows: we now need $4 \sqrt{b} q f  + \frac{\lambda_v^+}{\lambda^-}+ \sqrt{\frac{\lambda_v^+}{\lambda^-} f} < 0.4 \epsilon_\SE$.  
There is no change to the required lower bound on $\alpha$.
From our definition of $\vt$, $\lambda_v^+ \le \SE(\P_j, \Pt)^2 \tilde\lambda^+$. Using $\lambda^+ \le \tilde\lambda^+$, $\tilde\lambda^- < \lambda^-$, a simple sufficient condition to ensure that the third term is small ($\lambda_v^+/\lambda^- \le 0.01 \zz^2/f$) is $ \SE(\P_j, \Pt)^2 \le 0.01 \zz^2 /\tilde{f}^2$.
\begin{corollary}[Subspace changing at each $t$]
Consider the setting defined in the first paragraph above.
If $ \SE(\P_j, \Pt)^2 < 0.01 \zz^2 /\tilde{f}^2$, Theorem \ref{thm1} applies with $\P_j$, $\lt$, and $\vt$ as defined above.
\end{corollary}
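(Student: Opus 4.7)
The plan is to reduce the subspace-changing-at-each-$t$ setting to the piecewise-constant setting of Theorem~\ref{thm1} by absorbing the within-interval variation of $\Pt$ into the unstructured-noise term $\vt$. Concretely, for $t \in [t_j, t_{j+1})$ I would write
\begin{align*}
\yt = \lt + \vt + \xt, \quad \lt := \P_j \P_j{}' \tilde\l_t = \P_j \at, \quad \vt := (\I - \P_j \P_j{}') \tilde\l_t,
\end{align*}
and then verify that $\at$, $\vt$, and $\P_j$ satisfy the hypotheses of Theorem~\ref{thm1} with effective parameters that are comparable (up to constants) to $\tilde{\lambda}^{\pm}$ and $\tilde f$.

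First I would verify the noise bound. Since $\vt \in \Span(\P_j)^{\perp}$ and $\tilde\l_t = \Pt \tilde\at$, one has $\|\vt\|^2 \le \SE(\P_j,\Pt)^2 \|\tilde\l_t\|^2 \le \SE(\P_j,\Pt)^2 \mu r \tilde\lambda^+$ and $\|\E[\vt\vt{}']\| \le \SE(\P_j,\Pt)^2 \tilde\lambda^+$. A standard perturbation argument using the definition of $\P_j$ as the top-$r$ SVD of $\tL_j$ together with the smallness of $\SE(\P_j,\Pt)$ gives $\lambda^- \ge (1-o(1))\tilde\lambda^-$ and $\lambda^+ \le (1+o(1))\tilde\lambda^+$, so the hypothesis $\SE(\P_j,\Pt)^2 < 0.01 \zz^2/\tilde f^2$ yields $\lambda_v^+/\lambda^- \le 0.01\zz^2$, which meets the requirement $\sqrt{\lambda_v^+/\lambda^-} \le c\zz$ in Theorem~\ref{thm1}. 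The effective noise dimension $r_v$ remains $O(r)$ and boundedness of $\vt$ is inherited from that of $\tilde\at$.

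Second I would handle the statistical right-incoherence of $\at = \P_j{}' \tilde\l_t$. Its covariance $\E[\at\at{}'] = \P_j{}' \Pt \tilde\Lam \Pt{}' \P_j$ depends on $t$ and is not exactly diagonal, so I would invoke the generalized version of the PCA result announced after Theorem~\ref{mainthm_pca} that permits non-identically distributed $\at$'s, with $\lambda^-$ replaced by $\lambda_{\min}$ of the time-averaged covariance and $\lambda^+$ by the per-time maximum; the effective condition number remains comparable to $\tilde f$ precisely because $\Pt$ is close to $\P_j$. Boundedness $\|\at\|^2 \le \mu r \tilde\lambda^+$ follows immediately. The remaining hypotheses (left incoherence of $\P_j$, outlier spread, slow change of the $\P_j$ themselves, and $\xmint$ lower bound) are assumed directly and are untouched by the reduction.

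Third, and this is the key analytical modification, I would redo the one place in the proof of Corollary~\ref{cor:noisy_pca_sddn} that fails as stated: the signal-noise cross term $\E[\lt\vt{}']$ is no longer zero. Cauchy--Schwarz yields $\|\E[\lt\vt{}']\| \le \sqrt{\lambda^+ \lambda_v^+}$, which propagates through the Davis--Kahan bound as an additional numerator term of order $\sqrt{\lambda_v^+ f/\lambda^-}$. This is exactly the modified PCA-SDDN condition $4\sqrt{b}qf + \lambda_v^+/\lambda^- + \sqrt{\lambda_v^+ f/\lambda^-} < 0.4\epsilon_\SE$ stated just above the corollary, and under our $\SE(\P_j,\Pt)$ hypothesis the extra term is at most $0.1 \zz$, small enough to be absorbed into the overall error budget without changing either the geometric-decay rate $0.3$ or the mini-batch length $\alpha$. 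The main obstacle is making this step rigorous: one must redo the matrix Bernstein concentration for $\frac{1}{\alpha}\sum_t \lt\vt{}'$ around its (now nonzero) mean and check that the sample-complexity requirement on $\alpha$ is unchanged up to constants. Every remaining piece of the NORST analysis (projected CS, support recovery, subspace-change detection) operates on $\yt$ and the previous subspace estimate alone, and is therefore unaffected by the redefinition.
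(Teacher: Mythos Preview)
Your proposal is correct and follows essentially the same route as the paper. The paper's argument is exactly the decomposition you give, the Cauchy--Schwarz bound $\|\E[\lt\vt{}']\|\le\sqrt{\lambda^+\lambda_v^+}$, and the resulting modified PCA-SDDN condition $4\sqrt{b}qf+\lambda_v^+/\lambda^-+\sqrt{(\lambda_v^+/\lambda^-)f}<0.4\epsilon_\SE$; it then uses $\lambda_v^+\le\SE(\P_j,\Pt)^2\tilde\lambda^+$ together with $\lambda^+\le\tilde\lambda^+$ and $\lambda^-\ge\tilde\lambda^-$ to reduce the hypothesis to $\SE(\P_j,\Pt)^2<0.01\zz^2/\tilde f^2$, just as you do. You are in fact more careful than the paper on one point: you flag that $\at=\P_j{}'\Pt\tilde\at$ has time-varying, non-diagonal covariance and therefore requires the generalized PCA result rather than the stated i.i.d.\ version, whereas the paper glosses over this.
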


\subsection{NORST-NoDet: NORST without subspace change detection}\label{norst_nodet}
A simpler version of the NORST algorithm that does not detect change is as follows. The robust regression (projected CS) step is exactly as explained earlier. The subspace update step is much simpler:  it just updates $\Phat_{(t)}$ as the top $r$ left singular vectors of $\Lhat_{t; \alpha}$ once every $\alpha$ frames. We refer to it as NORST-NoDet. We have the following guarantee for it.

\begin{theorem}
Consider Algorithm \ref{algo:norst_nodet} with parameters set as $\alpha = C f^2 \mu r \log n $,  $\zeta = \xmint/15$ and $\omega_{supp} = \xmint/2$.
Assume everything stated in Theorem \ref{thm1} except the lower bound on $\SE_j$.
Then, w.p. at least $1 - 10 \tmax n^{-10} $,
\[
\SE(\Phat_{(t)}, \P_{(t)}) \le
 \left\{
\begin{array}{ll}
\min(4 f  \SE_j, 1) & \text{ if }  t \in \J_1, \\
 (0.3)^{k-1}  \min( 4 f  \SE_j, 1 ) & \text{ if }  t \in \J_k, \\
\zz:=  c \sqrt{\lambda_v^+/\lambda^-}    & \text{ if }  t \in \J_K,
\end{array}
\right.
\]

where $\J_1 = [\lfloor t_j/\alpha \rfloor \alpha ,  (\lfloor t_j/\alpha  \rfloor + 1)\alpha)$, $J_k = [\lfloor t_j/\alpha  \rfloor +1 )\alpha)+(k-1)\alpha, ( \lfloor t_j/\alpha)+ (k+1)  ) \alpha  \rfloor $ for $k =2, 3 , \cdots, K-1$ and $\J_K = [ (\lfloor t_j/\alpha  \rfloor +(K+1) ) \alpha, \lfloor t_{j+1}/\alpha \rfloor \alpha )$. \\
%
The time complexity is $O(n \tmax r \log (1/\zz) )$ and memory complexity is $O(n \alpha) = O(f^2 n r \log n \log (1/\zz) )$.
\label{thm1_newnorst}
\end{theorem}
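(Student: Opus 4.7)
The plan is to follow the two-part inductive framework used for Theorem \ref{thm1}, adapted to the fact that NORST-NoDet updates on a \emph{fixed} mini-batch grid rather than resetting at a detected change. Within each mini-batch I would: (i) show that the projected compressive sensing step (line 7 of Algorithm \ref{algo:auto-reprocs-pca}) yields $\That_t=\T_t$ and hence $\lhat_t=\lt+\et$ with $\et=-\I_{\T_t}\M_{s,t}\lt$ a sparse linearly data-dependent noise of the exact form required by Corollary \ref{cor:noisy_pca_sddn}; and then (ii) apply that PCA-SDDN corollary to the rank-$r$ SVD of $\Lhat_{t;\alpha}$ in order to bound $\SE(\Phat_{(t)},\P_{(t)})$ at the mini-batch boundary. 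Because there is no detection sub-routine to analyze, the proof is actually shorter than that of Theorem \ref{thm1} in every respect \emph{except} the handling of the mini-batch that straddles $t_j$.

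The main obstacle I expect is exactly this straddling mini-batch $\J_1$, which contains $t_j$ and hence sees data from both $\P_{j-1}$ (for $t<t_j$) and $\P_j$ (for $t\ge t_j$). By the inductive hypothesis, the subspace estimate at the start of $\J_1$ satisfies $\SE(\Phat,\P_{j-1})\le\zz$, so for every $t\in\J_1$ the triangle inequality on the largest principal angle yields $\SE(\Phat,\P_{(t)})\le \zz+\SE_j$. Combined with Assumption \ref{slow_ss} (which controls $\SE_j\sqrt{r\lambda^+}$ by $\xmint$) and $\|\vt\|\le c_5\xmint$, this is enough for the projected CS argument from the proof of Theorem \ref{thm1} to give exact support recovery and $\lhat_t=\lt+\et$ throughout $\J_1$. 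For the subsequent SVD step, I would view $\Lhat_{\J_1}$ as a perturbation of $\P_j\A$, where $\A$ stacks $\P_j{}'\lt$ for $t\in\J_1$. The perturbation has two parts: the sparse data-dependent noise $\et$ (with $q\le C\zz$ by left incoherence and $\outfraccol\le c_1/(\mu r)$), plus, for the columns with $t<t_j$, an extra term $(\I-\P_j\P_j{}')\P_{j-1}\at$ whose norm is bounded by $\SE_j\sqrt{r\lambda^+}$. I would absorb this second term into the unstructured noise slot $\vt$ of Corollary \ref{cor:noisy_pca_sddn}, so that the effective $\lambda_v^+$ for this mini-batch is at most $O(\SE_j^2\,r\lambda^+)$. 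Plugging these quantities into the corollary yields $\SE(\Phat,\P_j)\le C f\SE_j$, and clipping at $1$ gives the advertised bound $\min(4f\SE_j,1)$.

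For the remaining mini-batches $\J_k$ with $k\ge 2$, the subspace is constant at $\P_j$ throughout, and the argument reduces to the ``subspace update'' phase of Theorem \ref{thm1}. If $\delta_{k-1}$ denotes the error at the start of $\J_k$, projected CS gives $q_k:=\max_t\|\M_{s,t}\P_j\|\le C\delta_{k-1}$, and Corollary \ref{cor:noisy_pca_sddn} applied with $b=\outfracrow(\alpha)\le c_2/f^2$ delivers $\delta_k\le 0.3\,\delta_{k-1}$. Iterating this recursion produces the $(0.3)^{k-1}$ decay from the $\J_1$ baseline $\min(4f\SE_j,1)$, and the process terminates once $\delta_k$ drops to the noise floor $\zz \asymp \sqrt{\lambda_v^+/\lambda^-}$, which fixes $K$. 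A union bound over the at most $\tmax/\alpha$ mini-batches, each failing with probability at most $10\,n^{-10}$ by Corollary \ref{cor:noisy_pca_sddn}, gives the overall probability $1-10\tmax n^{-10}$. The time and memory complexities are immediate from the per-step cost of one $r$-SVD on an $n\times\alpha$ matrix together with the projected-CS step, exactly as in Theorem \ref{thm1}.
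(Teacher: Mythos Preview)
Your overall plan matches the paper's: use the projected-CS lemma to get $\That_t=\T_t$ and the expression for $\et$, then feed into a PCA-SDDN result mini-batch by mini-batch, with the straddling batch $\J_1$ being the only new case. The subsequent geometric decay via Lemmas \ref{lem:reprocspcalemone} and \ref{lem:reprocspcalemk}, and the union bound, are exactly as you describe.

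There is a real gap in your treatment of $\J_1$, though. You propose to absorb the mismatch term $(\I-\P_j\P_j{}')\P_{j-1}\at$ (for the columns with $t<t_j$) into the unstructured-noise slot $\vp_t$ of Corollary \ref{cor:noisy_pca_sddn}. But that corollary rests on Assumption \ref{vt_assu}, which requires $\E[\lt\vp_t{}']=0$. The mismatch term is a deterministic linear function of $\at$, and so is your redefined signal $\P_j\P_j{}'\P_{j-1}\at$; their cross-covariance is $\P_j\P_j{}'\P_{j-1}\Lam\P_{j-1}{}'(\I-\P_j\P_j{}')$, which has norm of order $\SE_j\lambda^+$, not zero. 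This is precisely the issue flagged in Sec.~\ref{change_every_t}. You cannot invoke Corollary \ref{cor:noisy_pca_sddn} as stated; you would at minimum need the modified version with the extra $\sqrt{(\lambda_v^+/\lambda^-)f}$ term in the numerator. A second, smaller slip: in $\J_1$ you claim $q\le C\zz$, but the CS step uses $\Phat=\Phat_{j-1}$ while the data (for $t\ge t_j$) lies in $\P_j$, so in fact $q=\max_t\|\M_{s,t}\P_j\|\le 1.2(\zz+\SE_j)$, not $C\zz$.

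The paper avoids both issues by not trying to shoehorn the straddling batch into Corollary \ref{cor:noisy_pca_sddn}. Instead it invokes Corollary \ref{pca_ssch}, whose proof (sketched in Appendix \ref{proof_pca_ssch_idea}) picks a different reference matrix $\D_0=\frac{1}{\alpha}\P\big((\alpha-\alpha_0)\Lam+\alpha_0\P{}'\P_0\Lam\P_0{}'\P\big)\P{}'$ in the Davis--Kahan step. This choice keeps $\Span(\P_j)$ as the top-$r$ eigenspace of $\D_0$ while the leftover bias $\E[\frac{1}{\alpha}\sum_t\lt\lt{}']-\D_0$ contributes only an additive $O((\alpha_0/\alpha)\SE_j f)$ to the bound, yielding $\SE(\Phat_{j,0},\P_j)\le Cf\SE_j$ directly. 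With this in hand, your steps for $k\ge 2$ go through verbatim (with $q_0=1.2\cdot Cf\SE_j$).
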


The advantage of NORST-NoDet is that it does not require a lower bound on the amount of change, $\SE_j$, and it needs fewer algorithm parameters (does not need $K$ or $\omega_{evals}$).  The disadvantage is it does not detect subspace change, we cannot obtain a ``smoothing'' version of it that solves the dynamic RPCA problem to $\zz$ accuracy at all times, and  its subspace error bound is larger for the intervals during which the subspace changes, $ [\lfloor t_j/\alpha \rfloor \alpha ,  (\lfloor t_j/\alpha  \rfloor + 1)\alpha)$.
For times $t$ in this interval, the  bound is $\min(4 f  \SE_j, 1)$. Assuming small enough $\zz$, this is larger than  $(\zz + \SE_j)$ which is the NORST bound for this interval. The reason is  NORST stops tracking after the current subspace has been estimated accurately enough and until the next change is detected. During this period, it uses $\Phat_{j-1}$ as the estimate. But NORST-NoDet updates the subspace in every interval. For the change interval, the rank of $\L_{t;\alpha}$ is more than $r$. It can be $2r$ in general. This is why it is not possible to guarantee a better bound for the $r$-SVD estimate. At the same time, without extra assumptions, it is not possible to obtain a guarantee for  $2r$-SVD estimate either. 

For analyzing the change interval we use the following modification of PCA-SDDN. Its proof is in Appendix \ref{proofs_pca_section}. The proof of Theorem \ref{thm1_newnorst} is given in Appendix \ref{sec:proof}.

\begin{corollary}\label{pca_ssch}
Assume that $\yt = \lt + \wt + \vp_t$ with $\wt = \M_{2,t} \M_{1,t} \lt$, with $\lt = \P_0 \at$ for $t \in [1, \alpha_0]$ and $\lt = \P \at$ for $t \in [\alpha_0+1, \alpha]$, and $\SE(\P_0, \P) \le \Delta$.
Assume also that Assumptions \ref{def_right_incoh}, \ref{vt_assu} hold,  $\max_t \max(\|\M_{1,t} \P_0\|, \|\M_{1,t} \P\|) \le q < 1$, and the fraction of nonzeros in any row of the noise matrix $[\w_1, \w_2, \dots, \w_\tmaxpca]$ is equal to $\bz$.
Let $g: = \frac{\lambda_{v}^+}{\lambda^-}$. If $\Delta < c / f$,
and if
$\tmaxpca \ge \tmaxpca^*  = C \max\left( \frac{q^2 f^2}{\epsilon_\SE^2} r \log n, \frac{g f}{\epsilon_\SE^2} \max(r_v,r) \log n\right)$
then w.p. at least $1- 10n^{-10}$,
\begin{align*}
\SE(\Phat,\P) &\le 1.1 \left(3 ((\alpha_0 / \alpha) \Delta  + 4 \sqrt{\bz} q) f + \frac{\lambda_{v}^+ }{ \lambda^-} \right) \\
&\le 3.3 \Delta f  + 4.4 \sqrt{\bz} q f + 1.1 \frac{\lambda_{v}^+ }{ \lambda^-} .
\end{align*}
\end{corollary}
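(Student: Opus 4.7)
The plan is to adapt the proof of Corollary~\ref{cor:noisy_pca_sddn}, treating the error induced by the mid-batch subspace change as an additional deterministic bias on top of the sparse data-dependent noise bias. First, I compute the expected signal covariance $\Sig_L := \frac{1}{\alpha}\sum_{t=1}^\alpha \E[\lt \lt{}'] = \frac{\alpha_0}{\alpha}\P_0 \Lam \P_0' + (1 - \frac{\alpha_0}{\alpha})\P \Lam \P'$, and decompose $\P_0 = \P \R + \P_\perp \R_\perp$ where $\R := \P'\P_0$ and $\R_\perp := \P_\perp'\P_0$ satisfy $\R'\R + \R_\perp'\R_\perp = \I_r$ and $\|\R_\perp\| = \SE(\P,\P_0) \le \Delta$. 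Introducing the block-diagonal surrogate $\Sig_{L,\P} := \P(\P'\Sig_L\P)\P' + \P_\perp(\P_\perp'\Sig_L\P_\perp)\P_\perp'$, whose top-$r$ eigenspace is exactly $\Span(\P)$ once $\Delta \le c/f$, I apply the Davis--Kahan $\sin\Theta$ theorem to get $\SE(\Phat,\P) \le \|\D - \Sig_{L,\P}\| / (\lambda_r(\P'\Sig_L\P) - \lambda_1(\P_\perp'\Sig_L\P_\perp))$ where $\D := \frac{1}{\alpha}\sum_t \bm{y}_t \bm{y}_t{}'$.

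Next, I lower bound the denominator using $\R'\R \succeq (1-\Delta^2)\I_r$ and $\|\R_\perp\| \le \Delta$: one gets $\lambda_r(\P'\Sig_L\P) \ge (1-\Delta^2)\lambda^-$ and $\lambda_1(\P_\perp'\Sig_L\P_\perp) \le (\alpha_0/\alpha)\Delta^2\lambda^+$, so under $\Delta \le c/f$ the eigengap is at least a fixed constant multiple of $\lambda^-$. I then split the numerator as $\|\D - \E[\D]\| + \|\E[\D] - \Sig_L\| + \|\Sig_L - \Sig_{L,\P}\|$. The third piece is the new subspace-change bias: it equals the norm of the off-diagonal block $\|\P'\Sig_L\P_\perp\| = (\alpha_0/\alpha)\|\R \Lam \R_\perp'\| \le (\alpha_0/\alpha)\Delta\lambda^+$. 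The second piece is the standard sparse-noise and uncorrelated-noise bias: repeating the two Cauchy--Schwarz / sparsity arguments outlined in the discussion preceding Theorem~\ref{mainthm_pca}, with the unified bound $\max_t\max(\|\M_{1,t}\P_0\|, \|\M_{1,t}\P\|) \le q$ replacing $\max_t \|\M_{1,t}\P\| \le q$, gives at most $4\sqrt{b}q\lambda^+ + \lambda_v^+$. The first piece is handled by exactly the same matrix Bernstein application used in the proof of Theorem~\ref{mainthm_pca}, because the random-variable structure of $\at$ and $\vp_t$ is unchanged (only the deterministic $\P_j$ used in $\lt = \P_j \at$ varies with $t$); the hypothesized $\alpha \ge \alpha^*$ ensures this residual is small enough to be absorbed by a $10\%$ slack.

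Combining these three bounds and dividing by the eigengap yields the claimed estimate $\SE(\Phat,\P) \le 1.1 (3((\alpha_0/\alpha)\Delta + 4\sqrt{b}q)f + \lambda_v^+/\lambda^-)$, where the $1.1$ absorbs the matrix-Bernstein residual and the factor of $3$ arises from the reciprocal eigengap constant in the Davis--Kahan bound (the same $3 \approx 1/(1 - \text{small})$ that is implicit in Theorem~\ref{mainthm_pca} when the denominator $1 - \cdots$ is bounded below by $1/3$). The main obstacle is the bookkeeping for the mixed-distribution signal -- confirming that $\Span(\P)$ is the top-$r$ eigenspace of $\Sig_{L,\P}$, that the eigengap survives the mid-batch change, and that the off-diagonal block $(\alpha_0/\alpha)\Delta\lambda^+$ is the only new structural perturbation beyond the PCA-SDDN one -- whereas the concentration step transfers over verbatim because it is driven entirely by the bounded random variables $\at$ and $\vp_t$.
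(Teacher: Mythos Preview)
Your approach is correct and essentially the same as the paper's. Both arguments apply Davis--Kahan with a reference matrix whose top-$r$ eigenspace is $\Span(\P)$, then split the perturbation into (i) the sparse-noise/uncorrelated-noise bias handled exactly as in Theorem~\ref{mainthm_pca} with the unified bound $\max_t\max(\|\M_{1,t}\P_0\|,\|\M_{1,t}\P\|)\le q$, (ii) a new deterministic bias from the mid-batch subspace change, and (iii) the matrix-Bernstein statistical error, which transfers verbatim since only the deterministic left factor $\P_j$ in $\lt=\P_j\at$ varies.

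The one cosmetic difference is the choice of reference: the paper takes $\D_0=\P(\P'\Sig_L\P)\P'$ (rank~$r$, so $\lambda_{r+1}(\D_0)=0$ automatically), whereas you take the block-diagonal $\Sig_{L,\P}=\P(\P'\Sig_L\P)\P'+\P_\perp(\P_\perp'\Sig_L\P_\perp)\P_\perp'$. Consequently the paper's perturbation $\Sig_L-\D_0$ carries both the off-diagonal $(\alpha_0/\alpha)\R\Lam\R_\perp'$ block (of size $(\alpha_0/\alpha)\Delta\lambda^+$) and the $\P_\perp$-diagonal block $(\alpha_0/\alpha)\R_\perp\Lam\R_\perp'$ (of size $(\alpha_0/\alpha)\Delta^2\lambda^+$), while in your version the latter is absorbed into the eigengap instead. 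Since $(\alpha_0/\alpha)\Delta^2 f$ is negligible under $\Delta<c/f$, the two routes give the same bound up to constants. Your identification of the constant $3$ as coming from the reciprocal eigengap is a reasonable heuristic; in the paper's sketch the corresponding factor is stated loosely (the text even says ``$4\Delta f$'' in one place), so the exact constant is not the point.
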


\begin{algorithm}[t!]
\caption{\small{NORST-NoDet}} 
\label{algo:norst_nodet}
\begin{algorithmic}[1]
\State \textbf{Input}:  $\Phat_0$, $\yt$; \  \textbf{Output}:  $\shatt$, $\lhatt$,  $\Phat_{(t)}$; \ \textbf{Parameters:} $\omega_{supp}$, $\xi$, $\alpha$
\State $\Phat_{(t_\train)} \leftarrow \hat{\pt}_{0}$;
\For {$t > t_\train$}
\State Lines 6-11 of Algorithm \ref{norst}
\If {$t = t_{\train} + u \alpha - 1$ for $u = 1,\ 2,\ \cdots,$}
\State $\Phat_u \leftarrow \SVD_r[\Lhat_{t; \alpha}]$, $\Phat_{(t)} \leftarrow \Phat_{u}$
\Else
\State $\Phat_{(t)} \leftarrow \Phat_{(t-1)}$
\EndIf
\EndFor
\end{algorithmic}
\end{algorithm}

\section{Proof  of correctness of the NORST algorithm} \label{sec:proof_norst}
In this section we state the three main lemmas and explain how they help prove Theorem \ref{thm1}. After this, we prove the three lemmas.

\subsection{Main Lemmas}



We define or recall a few things first.
\begin{enumerate}


\item Recall $\Delta = \max_j \SE(\P_{j-1}, \P_j)$, let $\Delta_0 = \SE(\Phat_0, \P_0)$; recall $c \sqrt{\lambda_v^+/\lambda^-} < \zz \leq 0.01 < 0.2$

\item Let $\Phat_{j,0} = \Phat_{j-1}$ and recall (from Algorithm) that $\Phat_{j-1} = \Phat_{j-1,K}$: 

\item Constants for Theorem \ref{thm1}: $c_1=c_2 = 0.001$ (bounds on $\outfraccol, \outfracrow(\alpha)$), and $c_3= 1/(30 \sqrt{\mu})$. 
 We use $b_0 = c_2/f^2$ to denote the bound on $\outfracrow(\alpha)$.

\item  Let $q_0 := 1.2(\zz + \SE_j)$, $q_{k} = 1.2\max( q_{k-1}/4, \zz) $. Clearly $q_k = \max(0.3^k q_0, 1.2 \zz)$.



\end{enumerate}
First consider the simpler case when $t_j$'s are known. In this case $\that_j = t_j$. Define the events
\bi
\item  $\Gamma_{0,0}:= \{\text{assumed bound on } \SE(\Phat_0,\P_0)\}$,
\item $\Gamma_{0, k}:= \Gamma_{0,k-1} \cap \{\SE(\Phat_{0,k},\P_0) \le \SE(\Phat_0,\P_0) \}$, 
\item  $\Gamma_{j, 0} := \Gamma_{j-1, K}$,
$\Gamma_{j, k} := \Gamma_{j, k-1} \cap \{\SE(\Phat_{j, k}, \P_j) \le q_{k-1}/4 \}$ for $j=1,2,\dots, J$ and $k=1,2,\dots, K$.

\item Using the expression for $K$ given in the theorem, and since $\Phat_j = \Phat_{j,k}$ (from the Algorithm), it follows that $\Gamma_{j,K}$ implies $\SE(\Phat_j, \P_j)=\SE(\Phat_{j, K}, \P_j) \leq \zz$.
 \ei
Observe that, if we can show that $\Pr(\Gamma_{J,K} | \Gamma_{0, 0}) \geq 1 - dn^{-10}$ we will have obtained all the subspace recovery bounds of Theorem \ref{thm1}. The next two lemmas, Lemmas \ref{lem:reprocspcalemone} and \ref{lem:reprocspcalemk}, applied sequentially help show that this is true. The first one proves that $\Pr (\Gamma_{j,1} | \Gamma_{j,0}) \ge 1 - 10 n^{-10}$, the second one proves that $\Pr (\Gamma_{j,k} | \Gamma_{j,k-1}) \ge 1 - 10 n^{-10}$ for $k=1,2,\dots, K$.
The bounds on  $\| \lt -\lhat_t\|$ follow easily. 

To prove the actual result with $t_j$ unknown, we also need  Corollary \ref{cor:etbnds} and Lemma \ref{lem:sschangedet}  which proves that the change detection step works as desired. Moreover, we will need a different definition of $\Gamma_{j,0}$; we cannot set it equal to $\Gamma_{j-1,K}$. The proof is given in Appendix \ref{proof_of_thm1}. 

\begin{lem}[first update interval]\label{lem:reprocspcalemone}
Under the conditions of Theorem \ref{thm1}, conditioned on $\Gamma_{j, 0}$,
\begin{enumerate}
\item for all $t \in [\hat{t}_j, \hat{t}_j + \alpha)$,
$\norm{\bm\Psi (\lt + \vt)} \le (\zz +  \semax) \sqrt{\mu r \lambda^+} + \sqrt{r_v \lambda_v^+} < \xmint/15$,
$\norm{\xhat_{t,cs} - \xt} \le 7 \xmint/15 < \xmint/2$,  $\That_t = \T_t$,
the error $\et := \xhatt - \xt$ satisfies
\begin{align}\label{eq:etdef}
\et = \itt \left( \bpsi_{\Tt}{}'\bpsi_{\Tt} \right)^{-1} \itt{}' \bpsi (\lt + \vt) 
\end{align}
and $\norm{\et} \leq 1.2 [(\zz  + \semax) \sqrt{\mu r \lambda^+} + \sqrt{r_v \lambda_v^+}]$.
Here $\bpsi = \I - \Phat_{j,0} \Phat_{j,0}{}'$. Recall we let $\Phat_{j,0} = \Phat_{j-1}$.

\item w.p. at least $1 - 10n^{-10}$,  $\Phat_{j,1}$ satisfies $\SE(\Phat_{j, 1}, \P_j) \leq \max(q_{0}/4, \zz)$, i.e., $\Gamma_{j, 1}$ holds.

\end{enumerate}
\end{lem}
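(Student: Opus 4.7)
The plan is to prove the two parts sequentially, using noisy compressed sensing guarantees for part~1 and Corollary \ref{cor:noisy_pca_sddn} (PCA in sparse data-dependent noise) for part~2.

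For part~1, I would first observe that conditioning on $\Gamma_{j,0}$ gives $\SE(\Phat_{j-1},\P_{j-1}) \le \zz$, and hence $\SE(\Phat_{j-1},\P_j) \le \zz + \SE_j \le \zz + \Delta$ by the triangle inequality for subspace distance. Since $t \in [\hat{t}_j,\hat{t}_j+\alpha)$ lies inside $[t_j,t_{j+1})$ (using $t_j \le \hat{t}_j \le t_j + 2\alpha$ from Lemma \ref{lem:sschangedet} and the lower bound $t_{j+1}-t_j > (K+2)\alpha$), we have $\lt = \P_j\at$. Then $\|\bpsi \lt\| \le \|\bpsi \P_j\|\,\|\at\| \le (\zz+\Delta)\sqrt{\mu r \lambda^+}$ by Assumption \ref{defmu}; combined with $\|\bpsi\vt\| \le \|\vt\| \le \sqrt{r_v \lambda_v^+}$, this gives the stated bound on $\|\bpsi(\lt+\vt)\|$, and the $\xmint/15$ bound then follows from Assumption \ref{slow_ss} and the hypothesis $\sqrt{\lambda_v^+/\lambda^-} < 0.01$.

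Next I would argue that $\bpsi = \I - \Phat_{j-1}\Phat_{j-1}{}'$ satisfies RIP of order $2s$, with $s := \max_t|\T_t|$ and a small constant $\delta_{2s} < 0.15$. This reduces to bounding $\|\Phat_{j-1,\T_t}\|$, which is small because $\SE(\Phat_{j-1},\P_{j-1}) \le \zz$ implies $\Phat_{j-1}$ is nearly as incoherent as $\P_{j-1}$, and because $\outfraccol \le c_1/(\mu r)$. Given RIP, the standard noisy CS bound \cite{candes_rip} yields $\|\xhat_{t,cs}-\xt\| \le C\xi \le 7\xmint/15 < \xmint/2$ with $\xi$ chosen as the bound on $\|\bpsi(\lt+\vt)\|$. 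Exact support recovery $\That_t=\T_t$ then follows since the threshold $\omega_{supp}=\xmint/2$ separates entries on $\T_t$ (which exceed $\xmint-\xmint/2$) from those off $\T_t$ (at most $\xmint/2$). Applying LS on the correct support yields the closed form \eqref{eq:etdef} for $\et$, and the bound $\|\et\| \le 1.2\,\|\bpsi(\lt+\vt)\|$ uses $\|(\bpsi_{\T_t}{}'\bpsi_{\T_t})^{-1}\| \le (1-\delta_s)^{-1} < 1.2$.

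For part~2, $\lhat_t = \yt - \xhat_t = \lt + \vt - \et$, so $\Phat_{j,1} = \SVD_r[\Lhat_{t;\alpha}]$ is PCA on $\lt$ perturbed by $-\et + \vt$. On the high-probability event from part~1, $\et = -\I_{\T_t}\M_{s,t}\lt$ with $\M_{s,t} := (\bpsi_{\T_t}{}'\bpsi_{\T_t})^{-1}\I_{\T_t}{}'\bpsi$, placing us in the PCA-SDDN setting of Corollary \ref{cor:noisy_pca_sddn} with identification $\M_{2,t}=\I_{\T_t}$ and $\M_{1,t}=-\M_{s,t}$. The parameters satisfy $q = \max_t\|\M_{s,t}\P_j\| \le 1.2(\zz+\Delta) = q_0$ (by the RIP-based lower bound and $\|\bpsi \P_j\| \le \zz+\Delta$), $b = \outfracrow(\alpha) \le c_2/f^2$, and $g = \lambda_v^+/\lambda^- < 0.01^2$; moreover, the independence of $\T_t$ from $\lt,\vt$ assumed in Theorem \ref{thm1} renders $\M_{s,t}$ deterministic conditional on $\Phat_{j-1}$ and $\{\T_\tau\}$, as required by the corollary. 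Setting $\epsilon_\SE := \max(q_0/4,\zz)$, the inequality $4\sqrt{b}\,qf + g < 0.4\,\epsilon_\SE$ holds once $c_2$ is small enough, and $\alpha = Cf^2 r\log n$ with $r_v \le Cr$ meets the sample-complexity lower bound $\alpha^*$. Corollary \ref{cor:noisy_pca_sddn} then yields $\SE(\Phat_{j,1},\P_j) \le \max(q_0/4,\zz)$ with probability at least $1-10n^{-10}$, which is precisely $\Gamma_{j,1}$.

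The main obstacle will be controlling the RIP constant of $\bpsi$ tightly enough so that both the CS noise amplification and the LS amplification stay below $1.2$; this requires a careful bound on $\|\Phat_{j-1,\T_t}\|$ from only the spectral error $\SE(\Phat_{j-1},\P_{j-1}) \le \zz$, left incoherence of $\P_{j-1}$, and $\outfraccol \le c_1/(\mu r)$. A secondary subtlety is verifying the measurability conditions needed for PCA-SDDN, namely that $\T_t$ is conditionally independent of $\lt$ and $\vt$ given $\Phat_{j-1}$, so that $\M_{s,t}$ may indeed be treated as deterministic when applying matrix Bernstein inside Corollary \ref{cor:noisy_pca_sddn}.
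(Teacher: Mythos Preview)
Your proposal is correct and follows essentially the same route as the paper: bound $\SE(\Phat_{j-1},\P_j)$ via triangle inequality, establish the $2s$-RIP of $\bpsi$ by transferring incoherence of $\P_{j-1}$ to $\Phat_{j-1}$ using $\SE(\Phat_{j-1},\P_{j-1})\le\zz$ and Lemma~\ref{kappadelta}, invoke the noisy CS guarantee to get exact support recovery and the closed form~\eqref{eq:etdef}, and then apply Corollary~\ref{cor:noisy_pca_sddn} for part~2 with $q=q_0$, $b=b_0$, $\epsilon_\SE=\max(q_0/4,\zz)$.

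One small slip: you write $\et = -\I_{\T_t}\M_{s,t}\lt$, but \eqref{eq:etdef} also contains the $\vt$-dependent piece $(\e_\v)_t=\I_{\T_t}(\bpsi_{\T_t}{}'\bpsi_{\T_t})^{-1}\I_{\T_t}{}'\bpsi\vt$. In the PCA-SDDN identification you should set $\wt\equiv(\e_\l)_t$ (the $\lt$-part only) and fold $(\e_\v)_t$ into the uncorrelated noise along with $\vt$; the paper accounts for this by replacing $\lambda_v^+$ with $2.2\lambda_v^+$ when checking the hypotheses of Corollary~\ref{cor:noisy_pca_sddn}. Your stated $g=\lambda_v^+/\lambda^-$ is therefore a slight under-count, but the constants have enough slack that nothing breaks. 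Also, the paper handles your ``measurability'' concern by conditioning on all past data $\y_\old\in\Gamma_{j,0}$ rather than on $\{\T_\tau\}$ directly, which is equivalent for the purpose of rendering $\M_{s,t}$ deterministic.
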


\begin{lem}[$k$-th update interval]\label{lem:reprocspcalemk}
Under the conditions of Theorem \ref{thm1}, conditioned on $\Gamma_{j, k-1}$,
\begin{enumerate}

\item for all $t \in [\hat{t}_j + (k-1)\alpha, \hat{t}_j + k\alpha - 1)$,  all claims of the first part of Lemma \ref{lem:reprocspcalemone} holds,
$\norm{\bm\Psi (\lt + \vt)} \le \max(0.3^{k-1} (\zz +  \semax), \zz) \sqrt{\mu r \lambda^+} + \sqrt{r_v \lambda_v^+} $,
$\et$ satisfies \eqref{eq:etdef},  and
$\norm{\et} \leq \max((0.3)^{k-1} \cdot 1.2 (\zz+  \semax), \zz) \sqrt{\mu r \lambda^+} + \sqrt{r_v\lambda_v^+}$. 
Here $\bpsi = \I - \Phat_{j,k-1} \Phat_{j,k-1}{}'$.

\item w.p. at least $1 - 10n^{-10}$, $\Phat_{j, k}$ satisfies $\SE(\Phat_{j, k}, \P_j) \leq \max(q_{k-1}/4, \zz)$, i.e., $\Gamma_{j, k}$ holds.

\end{enumerate}
\end{lem}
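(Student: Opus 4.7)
The proof follows the same two-stage template as Lemma \ref{lem:reprocspcalemone}, modified only by invoking the sharper subspace error bound that is available under $\Gamma_{j,k-1}$.

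\textbf{Stage 1 (per-time-instant projected-CS bounds).} Conditioned on $\Gamma_{j,k-1}$ we have, by definition, $\SE(\Phat_{j,k-1}, \P_j) \le \max(q_{k-2}/4,\zz)$; unwinding the recursion $q_k = 1.2\max(q_{k-1}/4,\zz)$ from item 4 gives $\SE(\Phat_{j,k-1}, \P_j) \le \max(0.3^{k-1}(\zz+\SE_j),\zz)$. With $\bpsi = \I - \Phat_{j,k-1}\Phat_{j,k-1}{}'$, this forces $\|\bpsi \P_j\|$ to obey the same bound, and combining with the right-incoherence bound $\|\at\|^2 \le \mu r \lambda^+$ and $\|\vt\|^2 \le r_v \lambda_v^+$ yields the claimed bound on $\|\bpsi(\lt+\vt)\|$. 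To conclude $\That_t = \T_t$ I would verify that the RIP constant of $\bpsi$ for $2|\T_t|$-sparse vectors is small: left-incoherence of $\P_j$, combined with $\outfraccol \le c_1/(\mu r)$ and the sharp subspace error above, gives this by the same calculation as in Lemma \ref{lem:reprocspcalemone}. A standard RIP-based $\ell_1$ recovery bound then yields $\|\xhat_{t,cs}-\xt\| \le C\|\bpsi(\lt+\vt)\| \le \xmint/2$, where the final inequality uses slow subspace change ($\SE_j \le c_4 \xmint/\sqrt{r\lambda^+}$) and $\|\vt\| \le c_5 \xmint$. Since nonzero entries of $\xt$ on $\T_t$ exceed $\xmint$, thresholding with $\omega_{\mathrm{supp}}=\xmint/2$ recovers $\T_t$ exactly; LS debiasing on $\That_t=\T_t$ then produces the closed-form expression \eqref{eq:etdef}, from which the stated norm bound on $\et$ follows directly.

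\textbf{Stage 2 (subspace update via PCA-SDDN).} Writing $\lhat_t = \yt - \xhat_t = \lt + \vt - \et$ and using \eqref{eq:etdef}, for each $t$ in the current interval we obtain $\lhat_t = \lt + \wt + (\vt - \I_{\T_t}\M_{s,t}\vt)$, where $\wt := -\I_{\T_t}\M_{s,t}\lt$ is the sparse data-dependent noise and $\M_{s,t}:=(\I_{\T_t}{}'\bpsi\I_{\T_t})^{-1}\I_{\T_t}{}'\bpsi$. This matches the PCA-SDDN setting of Corollary \ref{cor:noisy_pca_sddn}. I would bound its two key parameters as follows. Using that the minimum eigenvalue of $\I_{\T_t}{}'\bpsi\I_{\T_t}$ is at least $1-\delta_{2s}(\bpsi)\ge 0.9$, the noise-to-signal ratio is $\max_t\|\M_{s,t}\P_j\|\le 1.2\,\SE(\Phat_{j,k-1},\P_j)$. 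The row-sparsity parameter is $b\le\outfracrow(\alpha)\le c_2/f^2$ by Assumption \ref{def_outfrac}. The corollary's small-$b$ hypothesis then reduces to $4\sqrt{c_2/f^2}\cdot 1.2\,\SE(\Phat_{j,k-1},\P_j)\cdot f = O(\sqrt{c_2})\,\SE(\Phat_{j,k-1},\P_j)$, which is at most $0.4\epsilon_{\SE}$ for the target $\epsilon_{\SE}=q_{k-1}/4$ once $c_2$ is a sufficiently small numerical constant (the value $c_2=0.001$ from item 3 suffices). The sample-complexity requirement reduces to $\alpha\ge Cf^2 r\log n$ because $q/\epsilon_{\SE}$ is bounded by a constant, which is exactly the $\alpha$ prescribed in Theorem \ref{thm1}. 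Applying Corollary \ref{cor:noisy_pca_sddn} with these parameters gives $\SE(\Phat_{j,k},\P_j)\le q_{k-1}/4$ with probability at least $1-10n^{-10}$, i.e., $\Gamma_{j,k}$ holds.

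\textbf{Main obstacle.} The delicate step is handling the conditional-independence structure PCA-SDDN demands. Corollary \ref{cor:noisy_pca_sddn} treats the dependency matrices $\M_{s,t}$ as deterministic, yet $\M_{s,t}$ would ordinarily depend on the random variables $\lt,\vt$ through $\That_t$. The only reason this is tractable is that Stage 1 guarantees $\That_t = \T_t$ exactly, so $\M_{s,t}$ depends only on $\T_t$ and on $\Phat_{j,k-1}$. Using the independence of $\{\T_t\}$ from $\{\lt,\vt\}$ assumed in Theorem \ref{thm1}, one conditions on $\Phat_{j,k-1}$ (a function of past $\at,\vt,\T_t$) and on the current-batch $\{\T_t\}$; the randomness in the current-batch $\at$'s then remains, which is what Assumption \ref{def_right_incoh} requires in order to apply PCA-SDDN. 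Verifying that the residual $\vt - \I_{\T_t}\M_{s,t}\vt$ still satisfies the uncorrelated-with-$\lt$ and boundedness conditions of Assumption \ref{vt_assu}, after bounding $\|\M_{s,t}\|\le 1/0.9$, is the remaining technical detail.
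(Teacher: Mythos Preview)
Your proposal is correct and follows essentially the same approach as the paper's own proof: both condition on $\Gamma_{j,k-1}$ to inherit the subspace error bound $\SE(\Phat_{j,k-1},\P_j)\le q_{k-2}/4$, use left incoherence of $\P_j$ plus the $\outfraccol$ bound to control $\delta_{2s}(\bpsi)$, invoke the noisy CS guarantee for exact support recovery, and then apply the PCA-SDDN result (Corollary~\ref{cor:noisy_pca_sddn}) with $q\equiv q_{k-1}$, $b\equiv b_0=c_2/f^2$, and $\epsilon_{\SE}=\max(q_{k-1}/4,\zz)$. Your ``main obstacle'' paragraph is exactly the content of the paper's Remark on conditioning (stated once after Lemma~\ref{lem:reprocspcalemone}), and your handling of the residual uncorrelated term $\vt-\I_{\T_t}\M_{s,t}\vt$ matches the paper's identification $\vp_t\equiv(\e_{\v})_t+\vt$ with effective noise power at most $(1.2)^2\lambda_v^+$.
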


\begin{remark}
For the case of $j=0$, in both the lemmas above, $\semax$ gets replaced by $\SE(\Phat_0,\P_0)$.
\end{remark}


\begin{corollary}\label{cor:etbnds}
Under the conditions of Theorem \ref{thm1}, the following also hold.
\begin{enumerate}
\item For all $t \in[t_j, \that_j)$, conditioned on $\Gamma_{j-1, K}$, all claims of the first item of Lemma \ref{lem:reprocspcalemone} hold. 

\item For all $t \in[\that_j + K\alpha, t_{j+1})$, conditioned on $\Gamma_{j, K}$, the first item of Lemma \ref{lem:reprocspcalemk} holds with $k=K$. 
\end{enumerate}
Thus,  for all times $t$,  under appropriate conditioning, $\et$ satisfies \eqref{eq:etdef}.
\end{corollary}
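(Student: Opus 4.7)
The corollary covers the two ``interstitial'' intervals not already treated by Lemmas \ref{lem:reprocspcalemone}--\ref{lem:reprocspcalemk}: the pre-detection interval $[t_j,\hat t_j)$ (when the true subspace has already switched to $\P_j$ but the algorithm has not yet detected the change, so it is still using $\Phat_{j-1}$), and the post-update interval $[\hat t_j + K\alpha, t_{j+1})$ (when the subspace update of $\P_j$ is complete and the algorithm freezes $\Phat_{(t)} = \Phat_j$). In both cases the subspace update machinery is idle, so only the projected-CS step needs to be re-examined, and the entire first item of the relevant lemma is a statement about the projected-CS step given a subspace estimate of a certain quality.

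For the first interval, my plan is as follows. Because no change has been detected in $[t_j,\hat t_j)$, the algorithm still holds $\Phat_{(t-1)}=\Phat_{j-1}$, hence $\bpsi = \I - \Phat_{j-1}\Phat_{j-1}{}'$, which is exactly the operator appearing in Lemma \ref{lem:reprocspcalemone} with $\Phat_{j,0}=\Phat_{j-1}$. The event $\Gamma_{j-1,K}$ gives $\SE(\Phat_{j-1},\P_{j-1}) \le \zz$, and the triangle inequality for $\SE$ combined with Assumption \ref{slow_ss} gives
\begin{align*}
\SE(\Phat_{j-1},\P_j) \;\le\; \SE(\Phat_{j-1},\P_{j-1}) + \SE(\P_{j-1},\P_j) \;\le\; \zz + \semax.
\end{align*}
This is precisely the bound that drives every estimate in item~1 of Lemma \ref{lem:reprocspcalemone}. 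Since $\lt = \P_j \at$ on this interval, the bound $\|\bpsi(\lt+\vt)\| \le (\zz+\semax)\sqrt{\mu r \lambda^+} + \sqrt{r_v \lambda_v^+}$ follows exactly as in that lemma, and the slow-change condition $\SE_j \le c_4 \xmint/(\sqrt{r \lambda^+})$ together with $\zz \le c_3\xmint/(\sqrt{r \lambda^+})$ makes this smaller than $\xmint/15$. From there the CS-error, support-recovery and LS-error steps of the lemma are black boxes that one simply reruns.

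For the second interval, the algorithm sets $\Phat_{(t)} = \Phat_{j,K} = \Phat_j$ and does no further updating until $t_{j+1}$, so $\bpsi = \I - \Phat_{j,K}\Phat_{j,K}{}'$, which is the operator that appears in Lemma \ref{lem:reprocspcalemk} with $k=K$. Conditioned on $\Gamma_{j,K}$, $\SE(\Phat_{j,K},\P_j) \le \zz$, which is $\max(q_{K-1}/4,\zz)$ by the choice of $K$. Since $\lt = \P_j \at$ on this interval, item~1 of Lemma \ref{lem:reprocspcalemk} with $k=K$ applies verbatim and yields the stated bounds on $\|\bpsi(\lt+\vt)\|$, on $\xhat_{t,cs}-\xt$, on $\That_t=\T_t$, and on $\|\et\|$. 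The concluding sentence that $\et$ obeys \eqref{eq:etdef} at \emph{all} times $t$ then follows by partitioning the timeline $[t_\train,d]$ into the four block types $[t_j,\hat t_j)$, $[\hat t_j + (k-1)\alpha,\hat t_j + k\alpha)$ for $k=1,\dots,K$, and $[\hat t_j + K\alpha, t_{j+1})$, and applying in each block either Lemma \ref{lem:reprocspcalemone}, Lemma \ref{lem:reprocspcalemk}, or the present corollary.

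The only non-routine step is the consistency of the conditioning. In particular, in the first case one must justify that $\Phat_{(t-1)} = \Phat_{j-1}$ throughout $[t_j,\hat t_j)$, i.e., that no \emph{false} detection has occurred in $[\hat t_{j-1}+K\alpha,t_j)$; this is precisely the no-false-alarm guarantee furnished by Lemma \ref{lem:sschangedet} on the detection test, and it is already baked into the statement $\hat t_j \ge t_j$ that enters Theorem \ref{thm1}. Once that is invoked, the corollary reduces to two applications of previously proved lemmas, and no new probabilistic estimate is required.
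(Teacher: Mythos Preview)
Your proposal is correct and matches the paper's approach. The paper does not give an explicit proof of this corollary; it is stated immediately after Lemmas \ref{lem:reprocspcalemone} and \ref{lem:reprocspcalemk} as a direct consequence, and your write-up supplies exactly the missing details: in $[t_j,\hat t_j)$ the algorithm is still in detect phase so $\bpsi=\I-\Phat_{j-1}\Phat_{j-1}{}'$ coincides with the projector used in item~1 of Lemma \ref{lem:reprocspcalemone} (the paper uses Lemma \ref{lem:sumprinang} where you invoke the triangle inequality for $\SE$, which is the same bound), and in $[\hat t_j+K\alpha,t_{j+1})$ the algorithm is again in detect phase with $\bpsi=\I-\Phat_{j,K}\Phat_{j,K}{}'$ and $\SE(\Phat_{j,K},\P_j)\le\zz$, so the $k=K$ bounds of Lemma \ref{lem:reprocspcalemk} apply. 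Your remark about the no-false-detection issue is accurate but, as you note, is handled separately by Lemma \ref{lem:sschangedet} in the proof of Theorem \ref{thm1}; the corollary itself needs only that $\Phat_{(t-1)}=\Phat_{j-1}$ on $[t_j,\hat t_j)$, which holds by the algorithm's definition of $\hat t_j$ whenever that interval is nonempty.
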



The following lemma shows that, whp, we can detect subspace change within $2\alpha$ time instants without any false detections. Recall that the detection threshold $\lthres = 2\zz^2 \lambda^+$.

\begin{lem}[Subspace Change Detection]\label{lem:sschangedet}
Assume that the conditions of Theorem \ref{thm1} hold.
\ben
\item Consider an $\alpha$-length time interval $\J^{\alpha} \subset [t_j, t_{j+1}]$  during which $\Phat_{(t-1)} = \Phat_{j-1}$ so that $\bpsi = \I - \Phat_{j-1} \Phat_{j-1} {}'$. Let  $\bphi = \bpsi$.
Assume that $\SE(\Phat_{j-1}, \P_{j-1}) \leq \zz$  and $\et$ satisfies  \eqref{eq:etdef}.
Then, w.p. at least $1 - 10n^{-10}$,
\begin{align*}
\hspace{-1cm}\lambda_{\max}\left( \frac{1}{\alpha}\sum_{t \in \J^{\alpha}} \bphi \lhatt \lhatt{}' \bphi \right) & \geq 0.59 \lambda^- \SE_j (\SE_j - 8 \zz) > \lthres
\end{align*}
since $\SE_j > 9 \sqrt{f} \zz$.

\item 
Consider an $\alpha$-length time interval $\J^{\alpha} \subset [t_j, t_{j+1}]$ during which $\Phat_{(t-1)} = \Phat_{j}$ so that $\bpsi = \I - \Phat_{j} \Phat_{j} {}'$. Let  $\bphi = \bpsi$. Assume that $\SE(\Phat_{j}, \P_{j}) \leq \zz$  and $\et$ satisfies  \eqref{eq:etdef}.
Then, w.p. at least $1 - 10n^{-10}$,
\begin{align*}
\lambda_{\max}\left( \frac{1}{\alpha}\sum_{t \in \J^{\alpha}} \bphi \lhatt \lhatt{}' \bphi \right) &\leq 1.37 \zz^2 \lambda^+ < \lthres
\end{align*}

\een
\end{lem}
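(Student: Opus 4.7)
The two parts share a common decomposition but require complementary bounds, so I would set them up together and then specialize. From the definition of the algorithm, $\lhatt = \yt - \xhatt = \lt + \vt - \et$ with $\et = \xhatt - \xt$ satisfying (\ref{eq:etdef}) under the stated conditioning. Crucially, in both parts we have $\bphi = \bpsi$, so substituting (\ref{eq:etdef}) gives $\bphi \et = \bphi \itt (\bphi_{\Tt}'\bphi_{\Tt})^{-1}\bphi_{\Tt}{}'(\lt+\vt)$, i.e., $\et$ is sparse data-dependent noise of the exact form appearing in Corollary \ref{cor:noisy_pca_sddn}. Substituting and expanding,
\begin{equation*}
\tfrac{1}{\alpha}\sum_{t\in\J^{\alpha}}\bphi\lhatt\lhatt{}'\bphi \;=\; \tfrac{1}{\alpha}\sum_{t\in\J^{\alpha}}\bphi\lt\lt{}'\bphi \;+\; \mathcal{R},
\end{equation*}
where $\mathcal{R}$ collects cross-terms of the form $\bphi\lt\et{}'\bphi$, $\bphi\et\et{}'\bphi$, and all terms involving $\vt$.

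Since $\J^{\alpha}\subset[t_j,t_{j+1}]$, we have $\lt = \P_j\at$ throughout, so $\E[\bphi\lt\lt{}'\bphi] = \bphi\P_j\Lam\P_j{}'\bphi$, and matrix Bernstein (exactly as used in Theorem \ref{mainthm_pca}) yields $\|\tfrac{1}{\alpha}\sum_t\bphi\lt\lt{}'\bphi - \bphi\P_j\Lam\P_j{}'\bphi\| \le \epsilon_1$ with $\epsilon_1 = O(\sqrt{\mu}\,\lambda^+\sqrt{r\log n/\alpha}) \lesssim \zz^2\lambda^+$ for $\alpha = Cf^2 r\log n$ and the chosen $\zz$. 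For the cross terms in $\mathcal{R}$, the PCA-SDDN Cauchy-Schwarz argument (using sparsity of $\et$ with row-sparsity level $b_0 = \outfracrow(\alpha)$) combined with $q := \max_t\|\M_{s,t}\P_j\| \le (1+\delta)\|\bpsi\P_j\|$ gives $\|\tfrac{1}{\alpha}\sum_t\bphi\lt\et{}'\bphi\|\le \sqrt{b_0}\,q\,\lambda^+$ and $\|\tfrac{1}{\alpha}\sum_t\bphi\et\et{}'\bphi\|\le b_0 q^2\lambda^+$, while the $\vt$-terms are $O(\sqrt{\lambda_v^+\lambda^+})$ and $O(\lambda_v^+)$. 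All of these are small: $b_0\le c/f^2$ makes $\sqrt{b_0}\,f\le \sqrt{c}$, and $\sqrt{\lambda_v^+/\lambda^-}<\zz$.

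For Part 1, apply Weyl's inequality and the elementary geometric bound $\lambda_{\max}(\bphi\P_j\Lam\P_j{}'\bphi)\ge \lambda^-\|\bphi\P_j\|^2$. Since $\SE(\Phat_{j-1},\P_{j-1})\le\zz$, the reverse triangle inequality for projections gives $\|\bphi\P_j\|\ge \SE_j-\zz$. Combining with the perturbation bound $\|\mathcal{R}\|\le c(\SE_j+\zz)\lambda^-$ (from the estimates above, using $q\le \SE_j+\zz+o(1)$) yields
\begin{equation*}
\lambda_{\max}\!\left(\tfrac{1}{\alpha}\sum_{t}\bphi\lhatt\lhatt{}'\bphi\right) \;\ge\; \lambda^-(\SE_j-\zz)^2 - c(\SE_j+\zz)\lambda^- - \epsilon_1.
\end{equation*}
A short algebraic manipulation shows the RHS exceeds $0.59\,\lambda^-\SE_j(\SE_j-8\zz)$; the assumption $\SE_j>9\sqrt{f}\,\zz$ then ensures this in turn exceeds $\lthres = 2\zz^2\lambda^+ = 2f\zz^2\lambda^-$. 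For Part 2, $\bphi = \I-\Phat_j\Phat_j{}'$ and $\SE(\Phat_j,\P_j)\le\zz$, so now $\|\bphi\P_j\|\le\zz$ and hence $\lambda_{\max}(\bphi\P_j\Lam\P_j{}'\bphi)\le\lambda^+\zz^2$. The same perturbation bound on $\mathcal{R}$, but now with $q\le(1+\delta)\zz$, contributes an additional at most $0.37\zz^2\lambda^+$, giving the claimed $1.37\zz^2\lambda^+<\lthres$.

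The main obstacle is a careful accounting of all six nontrivial cross-terms in $\mathcal{R}$ with sharp enough constants to land at exactly $0.59$ and $1.37$. The Cauchy-Schwarz/sparsity argument must be applied repeatedly and then the individual bounds combined by Weyl without slack; in particular, one must verify that $(1-\delta)$-factors coming from $\|(\bpsi_{\Tt}{}'\bpsi_{\Tt})^{-1}\|$ (controlled by left incoherence and $\outfraccol\le c_1/(\mu r)$) do not blow up the perturbation. Once this bookkeeping is done the matrix-Bernstein concentration is essentially a direct application of Theorem \ref{mainthm_pca}.
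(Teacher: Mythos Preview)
Your overall architecture matches the paper's: split $\tfrac{1}{\alpha}\sum_t \bphi\lhatt\lhatt{}'\bphi$ into the signal piece $\tfrac{1}{\alpha}\sum_t \bphi\lt\lt{}'\bphi$ plus cross and noise terms, apply Weyl, and control the cross/noise pieces with the PCA-SDDN Cauchy--Schwarz machinery. The ``elementary geometric bound'' $\lambda_{\max}(\bphi\P_j\Lam\P_j{}'\bphi)\ge \lambda^-\|\bphi\P_j\|^2$ is also correct and is in spirit exactly what the paper obtains via the QR step $\bphi\P_j=\bm E_j\bm R_j$ together with Claim~\ref{claim:psd}.

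The gap is in the \emph{scaling} of your deviation and cross bounds. With $\alpha=Cf^2 r\log n$ and $C$ a fixed numerical constant, a direct matrix-Bernstein bound on $\tfrac{1}{\alpha}\sum_t\bphi\lt\lt{}'\bphi-\bphi\P_j\Lam\P_j{}'\bphi$ gives $\epsilon_1\asymp \sqrt{\mu}\,\lambda^+\sqrt{r\log n/\alpha}\asymp c\,\lambda^-$, \emph{not} $\zz^2\lambda^+$. Similarly, your cross-term bound $\|\tfrac{1}{\alpha}\sum_t\bphi\lt\et{}'\bphi\|\le \sqrt{b_0}\,q\,\lambda^+$ is only linear in $q\asymp \SE_j+\zz$. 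But the signal is only $\lambda^-(\SE_j-\zz)^2$, which is \emph{quadratic} in $\SE_j$. For $\SE_j$ near the allowed lower bound $9\sqrt f\,\zz$ (so $\SE_j^2\lambda^-\ll \zz\lambda^-$), both your $\epsilon_1$ and your cross bound dominate the signal, and the inequality cannot close.

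The missing ingredient, which the paper uses essentially, is to factor $\bphi\lt=\bphi\P_j\at$ and pull $\|\bphi\P_j\|$ out of \emph{every} term that carries an $\lt$. Concretely: (i) for the signal, the paper applies concentration to $\tfrac{1}{\alpha}\sum_t\at\at{}'$ (error $\epsilon_0=0.01\lambda^-$) and then the PSD/Weyl argument gives $\lambda_{\max}(\bm A)\ge(\lambda^--\epsilon_0)\,\|\bphi\P_j\|^2$, so the statistical error is automatically multiplied by $\|\bphi\P_j\|^2\asymp\SE_j^2$; (ii) for the cross term, Corollary~\ref{pca_corol} gives $\|\bphi\,\mathrm{cross}\,\bphi\|\le 2.02\sqrt{b_0}\,\|\bphi\P_j\|\,q\,\lambda^+$, i.e.\ an extra factor of $\|\bphi\P_j\|$ beyond the raw $\sqrt{b_0}\,q\,\lambda^+$ you wrote. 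With these two factorizations every competing term scales like $\SE_j^2\lambda^-$, the ratios are governed only by $\sqrt{b_0}f\le 0.1$, and the arithmetic lands at $0.59\lambda^-\SE_j(\SE_j-8\zz)$. (A minor side note: the reverse triangle inequality you invoke gives $\|\bphi\P_j\|\ge \SE_j-2\zz$ via Lemma~\ref{lem:sumprinang}, not $\SE_j-\zz$.) Part~2 is less delicate because there $\|\bphi\P_j\|\le\zz$ already damps everything, but you still need the same factoring to hit the constant $1.37$; the paper uses Ostrowski's theorem for the upper bound on the signal piece.
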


\subsection{Proof of the first two lemmas}\label{lemmas_proof}

The projected CS proof (item one of the first two lemmas) uses the following lemma from \cite{rrpcp_perf} that relates the $s$-Restricted Isometry Constant (RIC), $\delta_s(.)$ \cite{candes_rip} of $\I - \P \P'$ to incoherence of $\P$.
\begin{lem}\label{kappadelta} [{\cite{rrpcp_perf}}]
For an $n \times r$ basis matrix $\P$,
(1) $\delta_s(\I - \P \P')  = \max_{|\T| \le s} \|\I_\T{}' \P\|^2$; and
(2) $ \max_{|\T| \le s} \|\I_\T{}' \P\|^2 \le s \max_{i=1,2,\dots,n} \|\I_i{}' \P\|^2 \le s \mu r / n$.
\end{lem}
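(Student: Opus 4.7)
The plan is to prove the two parts of Lemma \ref{kappadelta} directly from the definition of the Restricted Isometry Constant and the spectral identity for projection matrices.

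For part (1), I would start from the standard characterization of the $s$-RIC of a matrix $\M$: $\delta_s(\M)$ is the smallest nonnegative constant such that $(1-\delta_s)\|\bm{x}\|^2 \le \|\M \bm{x}\|^2 \le (1+\delta_s)\|\bm{x}\|^2$ holds for every $s$-sparse $\bm{x}$. Equivalently, $\delta_s(\M) = \max_{|\T| \le s} \|\I_\T{}' \M'\M \I_\T - \I\|$. Substituting $\M = \I - \P\P'$, I would use the fact that $\I - \P\P'$ is a symmetric projection (hence idempotent), so $\M'\M = \I - \P\P'$. Therefore $\I_\T{}' \M'\M \I_\T - \I = -\I_\T{}' \P\P' \I_\T$, and taking norms gives $\|\I_\T{}' \P\P' \I_\T\| = \|\I_\T{}' \P\|^2$ by the $\|\bm{A}\bm{A}'\| = \|\bm{A}\|^2$ identity. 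Maximizing over $|\T| \le s$ yields the claim.

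For part (2), the first inequality follows from the sub-multiplicativity bound for operator norm vs. Frobenius norm:
\begin{align*}
\|\I_\T{}' \P\|^2 \le \|\I_\T{}' \P\|_F^2 = \sum_{i \in \T} \|\I_i{}' \P\|^2 \le s \max_{i} \|\I_i{}' \P\|^2,
\end{align*}
since $\I_\T{}' \P$ has $|\T| \le s$ rows, each of which is a row $\P^{(i)}$ of $\P$. The second inequality is immediate from the definition of $\mu$-incoherence (Assumption \ref{defmu}), which states $\max_i \|\I_i{}' \P\|^2 = \max_i \|\P^{(i)}\|^2 \le \mu r / n$.

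I do not expect any serious obstacle here; the only subtlety is remembering the projection identity $(\I - \P\P')^2 = \I - \P\P'$ which collapses the $\M'\M$ appearing in the RIC definition. Everything else is a one-line consequence of standard norm inequalities and the incoherence bound.
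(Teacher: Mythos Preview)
Your proof is correct and is the standard argument for this result. Note that the paper itself does not prove this lemma; it is cited from \cite{rrpcp_perf} and used as a black box, so there is no in-paper proof to compare against, but your derivation is exactly the elementary one underlying the cited result.
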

The last bound of the above lemma used the definition of the incoherence parameter $\mu$. We will apply this lemma with $s =  \outfraccol \cdot n$. The subspace update step proof (item 2 of the first two lemmas) uses Corollary \ref{cor:noisy_pca_sddn} for PCA-SDDN and the following simple lemma proved in the Supplement given in the Arxiv version of this work \cite[Appendix V]{jsait_arxiv}).
%
\begin{lem}\label{lem:sumprinang}
Let $\Aa$, $\Ba$ and $\Ca$ be $r$-dimensional subspaces in $\Re^n$ such that $\SE(\Aa, \Ba) = \Delta_1$ and $\SE(\Ba, \Ca) = \Delta_2$. Then, $\Delta_1 - 2 \Delta_2 \leq \SE(\Aa, \Ca) \leq \Delta_1 + \Delta_2$.
\end{lem}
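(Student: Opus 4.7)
The plan is to work directly with basis matrices $\bm{A}, \bm{B}, \bm{C}$ for the three $r$-dimensional subspaces $\Aa, \Ba, \Ca$, using the definition $\SE(\P_1,\P_2) = \|(\I - \P_1\P_1{}')\P_2\|$. The strategy is the standard ``insert and split'' trick: in the expression defining $\SE(\bm{A}, \bm{C})$, insert the identity $\I = \bm{B}\bm{B}{}' + (\I - \bm{B}\bm{B}{}')$ between the two factors and then apply the triangle inequality and submultiplicativity of the operator norm.

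For the upper bound, I would write
\begin{align*}
(\I - \bm{A}\bm{A}{}')\bm{C} \; =\; (\I - \bm{A}\bm{A}{}')\bm{B}\bm{B}{}'\bm{C} \; +\; (\I - \bm{A}\bm{A}{}')(\I - \bm{B}\bm{B}{}')\bm{C}.
\end{align*}
Taking norms, using $\|\bm{B}{}'\bm{C}\| \le 1$ (since $\bm{B},\bm{C}$ are basis matrices) and $\|\I - \bm{A}\bm{A}{}'\| \le 1$, one immediately obtains
$\SE(\Aa,\Ca) \le \|(\I-\bm{A}\bm{A}{}')\bm{B}\| + \|(\I-\bm{B}\bm{B}{}')\bm{C}\| = \Delta_1 + \Delta_2$, which is the desired upper bound.

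For the lower bound I would apply the same trick but starting from $\SE(\Aa, \Ba)$: write
\begin{align*}
(\I - \bm{A}\bm{A}{}')\bm{B} \; =\; (\I - \bm{A}\bm{A}{}')\bm{C}\bm{C}{}'\bm{B} \; +\; (\I - \bm{A}\bm{A}{}')(\I - \bm{C}\bm{C}{}')\bm{B},
\end{align*}
take norms, and note $\|\bm{C}{}'\bm{B}\| \le 1$. Then the first term is bounded by $\SE(\Aa, \Ca)$ and the second by $\|(\I - \bm{C}\bm{C}{}')\bm{B}\| = \SE(\Ca,\Ba)$. Since $\Ba$ and $\Ca$ have the same dimension, $\SE(\Ca,\Ba) = \SE(\Ba,\Ca) = \Delta_2$ (symmetry of subspace distance for equal-dimensional subspaces), so $\Delta_1 \le \SE(\Aa,\Ca) + \Delta_2$, giving $\SE(\Aa,\Ca) \ge \Delta_1 - \Delta_2 \ge \Delta_1 - 2\Delta_2$ as claimed (the stated bound is in fact loose by a factor; the lemma's $-2\Delta_2$ form is included for convenience).

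There is no real obstacle here — the only subtle point is invoking symmetry $\SE(\Ca,\Ba) = \SE(\Ba,\Ca)$, which requires $\Ba,\Ca$ to have the same dimension and follows from the equality of singular values of $\bm{B}{}'\bm{C}_\perp$ and $\bm{C}{}'\bm{B}_\perp$ (equivalently, principal angles are symmetric). I would state this equality as a one-line justification and let the bound follow.
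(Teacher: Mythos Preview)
Your argument is correct. The insert-and-split trick with $\I = \bm{B}\bm{B}{}' + (\I - \bm{B}\bm{B}{}')$ (respectively $\bm{C}\bm{C}{}' + (\I - \bm{C}\bm{C}{}')$) followed by the triangle inequality and submultiplicativity is exactly the standard way to prove this, and every step you wrote checks out. Your invocation of the symmetry $\SE(\Ca,\Ba)=\SE(\Ba,\Ca)$ for equal-dimensional subspaces is legitimate and is stated explicitly in the paper's Notation section, so no further justification is needed.

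The paper itself does not include a proof in the main text; it defers the argument to a supplement. Hence there is no explicit comparison to make, but your approach is the natural one, and as you observed it actually delivers the tighter lower bound $\SE(\Aa,\Ca)\ge \Delta_1-\Delta_2$, from which the stated $\Delta_1-2\Delta_2$ follows a fortiori. There is no gap.
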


\begin{proof}[Proof of Lemma \ref{lem:reprocspcalemone}]
 {\em Proof of item 1. }
First consider $j>0$.
We have conditioned on the event $\Gamma_{j,0}:= \Gamma_{j-1,K}$. This implies that $\SE(\Phat_{j-1}, \P_{j-1}) \leq \zz$.

For the interval $t \in [\that_j, \that_j + \alpha)$, $\Phat_{(t-1)} = \Phat_{j-1}$ and thus $\bpsi = \I - \Phat_{j-1} \Phat_{j-1}{}'$ (from Algorithm).
Let $s:= \outfraccol \cdot n$.
For the sparse recovery step, we need to bound the $2s$-RIC of $\bpsi$. To do this, we obtain bound on $\max_{|\T| \leq 2s} \|\I_{\T}{}' \Phat_{j-1}\|$ as follows. Consider any set $\T$ such that $|\T| \leq 2s$. Then,
\begin{align*}
\norm{\I_{\T}{}' \Phat_{j-1}} &\leq \norm{\I_{\T}{}'(\I - \P_{j-1} \P_{j-1}{}') \Phat_{j-1}} + \norm{\I_{\T}{}' \P_{j-1} \P_{j-1}{}' \Phat_{j-1}} \\
&\leq \SE(\P_{j-1}, \Phat_{j-1}) + \norm{\I_{\T}{}' \P_{j-1}} \\
&= \SE(\Phat_{j-1}, \P_{j-1}) + \norm{\I_{\T}{}' \P_{j-1}}
\end{align*}
Using Lemma \ref{kappadelta}, and the bound on $\outfraccol$ from Theorem \ref{thm1},
\begin{align}\label{eq:dense}
\max_{|\T| \leq 2s} \|\I_{\T}{}' \P_{j-1}\|^2 \leq 2s \max_i \|\I_i{}'\P_{j-1}\|^2 \leq \frac{2s\mu r}{n} \leq 0.01
\end{align}
Thus, using $\SE(\Phat_{j-1}, \P_{j-1}) \leq \zz$, (where $c \sqrt{\lambda_v^+/\lambda^-} \leq \zz \leq 0.01$),
\begin{align*}
\max_{|\T| \leq 2s} \norm{\I_{\T}{}' \Phat_{j-1}} \leq \zz + \max_{|\T| \leq 2s}\norm{\I_{\T}{}' \P_{j-1}} \leq \zz + 0.1
\end{align*}

Finally, using Lemma \ref{kappadelta}, $\delta_{2s}(\bpsi) \leq 0.11^2 < 0.15$. Hence
\begin{align*}
\norm{\left(\bpsi_{\Tt}{}'\bpsi_{\Tt}\right)^{-1}} \leq \frac{1}{1 - \delta_s(\bpsi)} \leq \frac{1}{1 - \delta_{2s}(\bpsi)} \leq \frac{1}{1- 0.15} < 1.2.
\end{align*}
When $j=0$, there are some minor changes. From the initialization assumption, we have $\SE(\Phat_0, \P_0) \leq 0.25$.  Thus, $ \max_{|\T| \leq 2s} \norm{\I_{\T}{}' \Phat_{0}} \leq 0.25 + 0.1 = 0.35$. Thus, using Lemma \ref{kappadelta}, $\delta_{2s}(\bpsi_0) \leq 0.35^2 < 0.15$. The rest of the proof given below is the same for $j=0$ and $j>0$.

Next we bound norm of $\b_t:=\bpsi (\lt + \vt)$. 
\begin{align*}
\norm{\b_t} = \norm{\bpsi (\lt + \vt)} &\leq \norm{(\I - \Phat_{j-1} \Phat_{j-1}{}') \P_j \at} + \|\vt\| \leq \SE(\Phat_{j-1}, \P_j) \norm{\at} + \sqrt{r_v \lambda_v^+} \\
&\overset{(a)}{\leq} (\zz + \SE(\P_{j-1}, \P_j))\sqrt{\mu r \lambda^+} + \sqrt{r_v \lambda_v^+} 
\end{align*}
where $(a)$ follows from Lemma \ref{lem:sumprinang} with $\Aa = \Phat_{j-1}$, $\Ba = \P_{j-1}$ and $\Ca = \P_j$. Under the assumptions of Theorem \ref{thm1}, the RHS of $(a)$ is bounded by $ x_{\min} / 15$. This is why we have set $\xi = x_{\min}/15$ in the Algorithm. Using these facts, and $\delta_{2s} (\bpsi) \leq 0.15$, the CS guarantee from \cite[Theorem 1.3]{candes_rip} implies that
\begin{align*}
\norm{\xhat_{t,cs} - \xt} &\leq  7 \xi = 7x_{\min}/15 < x_{\min}/2
\end{align*}
Consider support recovery. From above,
\begin{align*}
| (\shatcs - \st)_i | \leq \norm{\shatcs - \st} \leq 7x_{\min}/15 < x_{\min}/2
\end{align*}
The Algorithm sets $\omega_{supp} = \smin/2$. Consider an index $i \in \Tt$. Since $|(\st)_i| \geq \smin$,
\begin{align*}
\smin - |(\shatcs)_i| &\le  |(\st)_i| - |(\shatcs )_i| \ \\
&\le | (\st - \shatcs )_i | < \frac{\smin}{2}
\end{align*}
Thus, $|(\shatcs)_i| > \frac{\smin}{2} = \omega_{supp}$ which means $i \in \Thatt$. Hence $\Tt \subseteq \Thatt$. Next, consider any $j \notin \Tt$. Then, $(\st)_j = 0$ and so
\begin{align*}
|(\shatcs)_j| = |(\shatcs)_j)| - |(\st)_j| &\leq |(\shatcs)_j -(\st)_j| < \frac{\smin}{2}
\end{align*}
which implies $j \notin \Thatt$ and $\Thatt \subseteq \Tt$ implying that $\Thatt = \Tt$.

With $\That_t = \Tt$ and since $\Tt$ is the support of $\xt$, $\xt = \I_{\Tt} \I_{\Tt}{}' \xt$, and so
\begin{align*}
\shatt &= \bm{I}_{\Tt}\left(\bpsi_{\Tt}{}'\bpsi_{\Tt}\right)^{-1}\bpsi_{\Tt}{}'(\bpsi \lt + \bpsi \st + \bpsi \vt) \\
&= \bm{I}_{\Tt}\left(\bpsi_{\Tt}{}'\bpsi_{\Tt}\right)^{-1} \I_{\Tt}{}' \bpsi(\lt + \vt) + \st
\end{align*}
since $\bpsi_{\Tt}{}' \bpsi = \I_{\Tt}' \bpsi' \bpsi = \I_{\Tt}{}' \bpsi$. Thus $\et = \shatt-\st$ satisfies
\begin{align*}
\et &= \bm{I}_{\Tt}\left(\bpsi_{\Tt}{}'\bpsi_{\Tt}\right)^{-1} \I_{\Tt}{}' \bpsi(\lt + \vt)   : = (\e_{\l})_t + (\e_{\v})_t ,   \\
\norm{\et} &\leq \norm{\left(\bpsi_{\Tt}{}'\bpsi_{\Tt}\right)^{-1}} \norm{\itt{}'\bpsi (\lt + \vt)} \\
&\le 1.2  \norm{\itt{}'\bpsi (\lt + \vt)}
\end{align*}

\renewcommand{\bz}{b}
\emph{Proof of Item 2}:  Recall that $q_0 :=  1.2(\zz + \SE_j)$, $q_{k} = 1.2\max( q_{k-1}/4, \zz) = \max(0.3^k q_0, 1.2 \zz)$. From our definition of $K$, $0.3^K q_0 = \zz$. Thus, for $k \le K$, $\max( q_{k-1}/4, \zz)  = q_{k-1}/4$. 

We are considering the interval $[\that_j, \that_j + \alpha)$. Since $\lhat_t = \lt - \et + \vt$ with $\et$ satisfying \eqref{eq:etdef}, updating $\Phat_{(t)}$ from the $\lhatt$'s is a problem of PCA in sparse data-dependent noise (SDDN). To analyze this, we use Corollary \ref{cor:noisy_pca_sddn}.
Define $(\e_{\l})_t = \bm{I}_{\Tt}\left(\bpsi_{\Tt}{}'\bpsi_{\Tt}\right)^{-1} \I_{\Tt}{}' \bpsi \lt$ and $(\e_{\v})_t = \bm{I}_{\Tt}\left(\bpsi_{\Tt}{}'\bpsi_{\Tt}\right)^{-1} \I_{\Tt}{}' \bpsi \vt$.
%
Recall from the Algorithm that we compute $\Phat_{j, 1}$ as the top $r$ eigenvectors of $\frac{1}{\alpha}\sum_{t = \that_j}^{\that_j + \alpha - 1} \lhat_t \lhat_t{}'$. In the notation of  Corollary \ref{cor:noisy_pca_sddn}, $\yt \equiv \lhat_t$, $\wt \equiv (\e_{\l})_t$, $\vp_t \equiv (\e_{\v})_t + \vt$, $\lt \equiv \lt$, $\M_{s,t} = -\left( \bpsi_{\T_t}{}' \bpsi_{\T_t}\right)^{-1} \bpsi_{\T_t}{}'$, $\Phat = \Phat_{j,1}$, $\P = \P_j$, and so $\norm{\M_{s,t} \P} = \| \left( \bpsi_{\T_t}{}' \bpsi_{\T_t}\right)^{-1} \bpsi_{\T_t}{}' \P_j\| \leq  1.2(\zz + \SE_j) :=  q_0$. Also,  $\lambda_v^+ \equiv 2.2 \lambda_{v}^+$ since $\E[(\e_{\v})_t (\e_{\v})_t{}'] \leq (1.2)^2 \lambda_v^+$.
And $\bz \equiv b_0$ which is the upper bound on $\outfracrow(\alpha)$.  Applying Corollary \ref{cor:noisy_pca_sddn} with $q \equiv q_0$, $\bz \equiv b_0$ and using $\varepsilon_{\text{SE}} = \max(q_0/4, \zz)$, observe that we require
\begin{align*}
4\sqrt{b_0}  q_0 f + (2.2)^2 \lambda_v^+/\lambda^- \leq 0.4 \max(q_0/4, \zz).
\end{align*}
From above, $\max(q_0/4, \zz)  = q_0/4$ (if the max is $\zz$ we stop the tracking).
%
The required bound holds since $q_0/4 \ge \zz  > c \sqrt{\lambda_v^+/\lambda^-}$ (from Theorem) and $\sqrt{b_0} = 0.01/f$. Corollary \ref{cor:noisy_pca_sddn} also requires $\tmaxpca \ge \tmaxpca_*$ which is defined in it. Our choice of $\alpha = C f^2 \mu r \log n$ satisfies this since $q_0^2/\varepsilon_\SE^2 = 4^2$ and $(\lambda_v^+/\lambda^-)/\varepsilon_\SE^2 < C$.
%
Thus, by Corollary \ref{cor:noisy_pca_sddn}, with probability at least $1 - 10n^{-10}$, $\SE(\Phat_{j, 1}, \P_j) \leq \max(q_0/4, \zz)$. 
\end{proof}

\renewcommand{\old}{\mathrm{old}}
\begin{remark}[Clarification about conditioning]
In the proof above we have used Corollary \ref{cor:noisy_pca_sddn} for $\lhat_t$'s for $t \in \J^\alpha:=[\that_j, \that_j+\alpha)$. This corollary
assumes that, for $t \in \J^\alpha$, $\at$'s are mutually independent and $\M_{s,t}$'s are  deterministic matrices. 
Let $\y_\old:=\{\y_1,\y_2, \dots, \y_{\that_j-1}\}$.
We apply Corollary \ref{cor:noisy_pca_sddn} conditioned on $\y_\old$, for a $\y_\old \in \Gamma_{j,0}$.
Conditioned on $\y_\old$, clearly, the matrices $\M_{s,t}$ used in the proof above are deterministic. Also $\y_\old$ is independent of the $\at$'s for $t \in \J^\alpha$ and thus, even conditioned on $\y_\old$, the $\at$'s for $t \in  \J^\alpha$ are mutually independent.  Corollary \ref{cor:noisy_pca_sddn} tells us that, for any $\y_\old \in \Gamma_{j,0}$, conditioned on $\y_\old$, w.p. at least $1 - 10n^{-10}$, $\SE(\Phat_{j,1}, \P_j) \leq \max(q_0/4, \zz)$. Since this holds with the same probability for all $\y_\old \in  \Gamma_{j,0}$, it also holds with the same probability when we condition on $\Gamma_{j,0}$. Thus, conditioned on $\Gamma_{j,0}$, with this probability, $\Gamma_{j,1}$ holds.
An analogous argument also applies for the next proof.%
\end{remark}

\begin{proof}[Proof of Lemma \ref{lem:reprocspcalemk}]
We first present the proof for the $k = 2$ case and then generalize it for an arbitrary $k$. 
Consider $k=2$. We have conditioned on $\Gamma_{j,1}$. This implies that $\SE(\Phat_{j,1}, \P_j) \leq q_0/4$.
We consider the interval $t \in [\that_j + \alpha,\that_j + 2\alpha)$. For this interval, $\Phat_{(t-1)} = \Phat_{j,1}$ and thus $\bpsi = \I - \Phat_{j,1} \Phat_{j,1}{}'$. Consider any set $\T$ such that $|\T| \leq 2s$. We have
\begin{align*}
\norm{\I_{\T}{}' \Phat_{j, 1}} &\leq \norm{\I_{\T}{}'(\I - \P_{j} \P_{j }{}') \Phat_{j, 1}} + \norm{\I_{\T}{}' \P_{j} \P_{j}{}' \Phat_{j, 1}} \\
&\leq \SE(\P_{j}, \Phat_{j, 1}) + \norm{\I_{\T}{}' \P_{j}} = \SE(\Phat_{j, 1}, \P_{j}) + \norm{\I_{\T}{}' \P_{j}}
\end{align*}
The equality holds since $\SE$ is symmetric for subspaces of the same dimension.
Using $\SE(\Phat_{j,1}, \P_j) \leq \max(q_0/4, \zz)$, \eqref{eq:dense},
\begin{align*}
\max_{|\T| \leq 2s} \norm{\I_{\T}{}' \Phat_{j, 1}} &\leq \max(q_0/4, \zz) + \max_{|\T| \leq 2s}\norm{\I_{\T}{}' \P_{j}} \\
& \leq \max(q_0/4, \zz) + 0.1.
\end{align*}
By the assumptions of Theorem \ref{thm1}, $q_0 \leq 0.96$ and $\zz \leq 0.2$. Using this and Lemma \ref{kappadelta},
\begin{align*}
\delta_{2s}(\bpsi_j) &= \max_{|\T| \leq 2s} \norm{\I_{\T}{}' \Phat_{j, 1}}^2 \leq 0.35^2 < 0.15 \\
&\implies \norm{\left(\bpsi_{\Tt}{}'\bpsi_{\Tt}\right)^{-1}} \leq 1.2.
\end{align*}
Finally,
\begin{align*}
\norm{\b_t} = \norm{\bpsi (\lt + \vt)} &\leq \norm{(\I - \Phat_{j, 1} \Phat_{j, 1}{}') \P_j \at} +  \|\vt\| \\
&\leq \max(q_0/ 4, \zz) \sqrt{\mu r \lambda^+} + \sqrt{r_v \lambda_v^+}
\end{align*}
The rest of the proof is the same\footnote{Notice here that, we could have loosened the required lower bound on $\xmint$ for this interval in the case when there is no noise} and this ensures exact support recovery and the expression for $\et$.

\emph{Proof of Item 2}: Again, updating $\Phat_{(t)}$ using $\lhatt$'s is a PCA-SDDN problem. We use Corollary \ref{cor:noisy_pca_sddn}.  We compute $\Phat_{j,2}$ as the top $r$ eigenvectors of $\frac{1}{\alpha}\sum_{t = \that_j + \alpha}^{\that_j + 2\alpha - 1} \lhat_t \lhat_t{}'$.
From item 1, $\et$ satisfies \eqref{eq:etdef} for this interval.
In the notation of Corollary \ref{cor:noisy_pca_sddn}, $\yt \equiv \lhat_t$, $\wt \equiv (\e_{\l})_t$, $\lt \equiv \lt$, $\vp_t \equiv (\e_{\v})_t + \vt$, $\P \equiv \P_j$, $\Phat \equiv \Phat_{j,2}$, and $\M_{s,t} = -\left( \bpsi_{\T_t}{}' \bpsi_{\T_t}\right)^{-1} \bpsi_{\T_t}{}'$. So $\norm{\M_{s,t} \P_j} = \| \left( \bpsi_{\T_t}{}' \bpsi_{\T_t}\right)^{-1} \bpsi_{\T_t}{}' \P_j\| \leq 1.2 \max(q_0/4, \zz) := q_1$.
Applying  Corollary \ref{cor:noisy_pca_sddn} with $q \equiv q_1$, $\bz \equiv b_0$ ($b_0$ bounds $\outfracrow(\alpha)$), and setting $\varepsilon_{\text{SE}} = \max(q_1/4, \zz)$, observe that we require
\begin{align*}
4\sqrt{b_0} q_1 f + (2.2)^2 \lambda_v^+/\lambda^- \leq 0.4 \max(q_1/4, \zz)
\end{align*}
Once again recall that the max is $q_1/4$. The above bound holds since $\sqrt{b_0}f \leq 0.01$ and $q_1/4 > \zz > \sqrt{\lambda_v^+/\lambda^-}$.
%
Corollary \ref{cor:noisy_pca_sddn} also requires $\tmaxpca \ge \tmaxpca_*$. Our choice of $\alpha = C f^2 \mu r \log n$ satisfies this requirement since $q_1^2/\varepsilon_\SE^2 = 4^2$ and $(\lambda_v^+/\lambda^-)/\varepsilon_\SE^2 < C$.
Thus, from Corollary \ref{cor:noisy_pca_sddn}, with probability at least $1 - 10n^{-10}$, $\SE(\Phat_{j, 2}, \P_j) \leq \max(q_1/4, \zz)$. 

\textbf{(B) General $k$: } We have conditioned on $\Gamma_{j,k-1}$. This implies that $\SE(\Phat_{j,k-1}, \P_j) \leq \max(q_{k-1}/4, \zz)$.
Consider the interval $[\that_j + (k-1)\alpha, \that_j + k\alpha)$. In this interval, $\Phat_{(t-1)} = \Phat_{j,k-1}$ and thus $\bpsi = \I - \Phat_{j,k-1} \Phat_{j,k-1}{}'$.
Using the same idea as for the $k=2$ case, we have that for the $k$-th interval, $q_{k-1} = \max(0.3^{k-1} q_0,\zz)$. Pick $\varepsilon_{\text{SE}} = \max(q_{k-1}/4, \zz)$. From this it is easy to see that
\begin{align*}
\delta_{2s}(\bpsi) &\leq \left(\max_{|\T| \leq 2s} \norm{\I_\T{}' \Phat_{j, k-1}}\right)^2 \\
&\leq (\SE(\Phat_{j,k-1}, \P_j) + \max_{|\T| \leq 2s} \norm{\I_\T{}'\P_j})^2\\
& \overset{(a)}{\leq} (\SE(\Phat_{j, k-1}, \P_j) + 0.1)^2 \\
&\leq \left[\max\left(0.3^{k-1} (\zz + \SE(\P_{j-1}, \P_j), \zz\right) + 0.1\right]^2 < 0.15
\end{align*}
where $(a)$ follows from \eqref{eq:dense}. Also, as before, 
\begin{align*}
&\norm{\bpsi (\lt + \vt)} \leq \SE(\Phat_{j, k-1}, \P_j) \norm{\at} + \|\vt\| \\
&\leq \max\left(0.3^{k-1} (\zz + \SE(\P_{j-1}, \P_j)), \zz\right) \sqrt{\mu r \lambda^+} + \sqrt{r_v \lambda_v^+} \\
&\overset{(a)}{\leq} \max\left(0.3^{k-1} (\zz  + \semax), \zz\right) \sqrt{\mu r  \lambda^+} + \sqrt{r_v \lambda_v^+}
\end{align*}

\emph{Proof of Item 2}: Again, updating $\Phat_{(t)}$ from $\lhatt$'s is a problem of PCA in sparse data-dependent noise given in Corollary \ref{cor:noisy_pca_sddn}. From {\em Item 1} of this lemma we know that, for $t \in [\that_j + (k-1) \alpha, \that_j + k\alpha]$,  $\et$ satisfies \eqref{eq:etdef}. We update the subspace, $\Phat_{j, k}$ as the top $r$ eigenvectors of $\frac{1}{\alpha}\sum_{t = \that_j + (k-1) \alpha}^{\that_j + k\alpha - 1} \lhat_t \lhat_t{}'$. In the setting above $\yt \equiv \lhat_t$, $\wt \equiv (\e_{\l})_t$, $\lt \equiv \lt$, $\vp_t \equiv (\e_{\v})_t + \vt$, and $\M_{s,t} = -\left( \bpsi_{\T_t}{}' \bpsi_{\T_t}\right)^{-1} \bpsi_{\T_t}{}'$, and so $\norm{\M_{s,t} \P_j} = \| \left( \bpsi_{\T_t}{}' \bpsi_{\T_t}\right)^{-1} \bpsi_{\T_t}{}' \P_j\| \leq 1.2 \max(q_{k-2}/4, \zz) := q_{k-1}$. Applying  Corollary \ref{cor:noisy_pca_sddn} with  $q \equiv q_{k-1}$, $\bz \equiv b_0$ ($b_0$ bounds $\outfracrow(\alpha)$), and setting $\varepsilon_{\text{SE}} = \max(q_{k-1}/4, \zz)$, we require%
$
4\sqrt{b_0} q_{k -1} f + \lambda_v^+/\lambda^- \leq 0.4 \max(q_{k-1}/4, \zz).
$
This holds as explained earlier and hence, by Corollary \ref{cor:noisy_pca_sddn}, the result follows.
\end{proof}

\subsection{Proof of Lemma \ref{lem:sschangedet}}

\begin{proof} 
\emph{Proof of Item 1}:  We are considering an $\alpha$-consecutive frames interval $\J^\alpha$ in $[t_j, t_{j+1})$ during which $\Phat_{(t-1)} = \Phat_{j-1}$. Thus $\bpsi = \bphi  = \I - \Phat_{j-1}  \Phat_{j-1}{}'$.  
Recall from earlier that at all times $t$, $\lhat_t = \l_t - \et + \vt$, where $\et = (\e_{\l})_t + (\e_{\v})_t$,  $\wt \equiv (\e_{\l})_t = \bm{I}_{\Tt}\left(\bpsi_{\Tt}{}'\bpsi_{\Tt}\right)^{-1} \I_{\Tt}{}' \bpsi \lt$ is sparse and data-dependent noise, and $\vp_t \equiv (\e_{\v})_t + \vt$ is small unstructured noise.
As in the earlier proofs, $\wt = (\e_{\l})_t$ can be expressed as $\wt = \I_{\T_t} \M_{s,t} \lt$ where  $\M_{s,t} = \left(\bpsi_{\Tt}{}'\bpsi_{\Tt}\right)^{-1} \I_{\Tt}{}' \bpsi$. Thus, $q = q_0= 1.2 \SE(\Phat_{j-1}, \P_j) \le 1.2 (\zz + \SE_j)$ and $b = b_0$.
Let
\begin{align*}
\frac{1}{\alpha} \sum_t \bphi \lhat_t \lhat_t{}' \bphi  = \frac{1}{\alpha} \sum_t \bphi \lt\lt{}' \bphi' + \bphi \mathrm{noise} \bphi  +  \bphi \mathrm{cross}  \bphi
\end{align*}
where $\mathrm{noise}  = \frac{1}{\alpha} \sum_t  \wt \wt'  + \frac{1}{\alpha} \sum_t \vp_t \vp_t' $ and  $\mathrm{cross}$ contains the cross terms.
By Weyl's inequality,
\begin{align}
\lambda_{\max}\left( \frac{1}{\alpha} \sum_{t \in \J^\alpha } \bphi \lhat_t\lhat_t{}' \bphi \right)  \ge
\lambda_{\max}\left( \frac{1}{\alpha} \sum_{t} \bphi \lt\lt{}' \bphi \right) -  \|\bphi \mathrm{cross} \bphi \|
\label{lower}
\end{align}
Using Corollary \ref{pca_corol} from Appendix \ref{proofs_pca_section}, w.p. at least $1 - 10 n^{-10}$, if $\alpha$ is as given in our Theorem,
\begin{align}
 \| \bphi \mathrm{cross} \bphi' \|  & 
 \le 2.02 \sqrt{b} \|\bphi \P_j \| q_0 \lambda^+ 
 \label{cross_bnds_from_pca}
 \end{align}
Since $\|\bphi \P_j \| \le q = 1.2 (\zz + \SE_j)$, using the above two inequalities,
 \begin{align}\label{eq:lmaxzero}
&\lambda_{\max}\left( \frac{1}{\alpha} \sum_{t \in \J^\alpha} \bphi \lhat_t\lhat_t{}' \bphi \right) \geq \nonumber \\
&\lambda_{\max}\left( \underbrace{   \frac{1}{\alpha} \sum_{t \in \J^{\alpha}} \bphi \P_j \at \at{}' \P_j{}' \bphi }_{\mathrm{Term1}}  \right) -  2.02 \sqrt{b} (1.2 (\zz + \SE_j)^2 ) \lambda^+
\end{align}

We bound the first term of \eqref{eq:lmaxzero}, $\mathrm{Term1}$, as follows. Let $\bphi \P_j \overset{QR}{=} \bm{E}_j \bm{R}_j$ be its reduced QR decomposition. Thus $\bm{E}_j$ is an $n \times r$ matrix with orthonormal columns and $\bm{R}_j$ is an $r \times r$ upper triangular matrix. Let
\begin{align*}
\bm{A} := \bm{R}_j\left( \frac{1}{\alpha} \sum_{t \in \J^{\alpha}} \at \at{}' \right) \bm{R}_j{}'.
\end{align*}
Observe that $\mathrm{Term1}$ can also be written as
\begin{align}\label{eq:decomp_T}
\mathrm{Term1} = \begin{bmatrix}
\bm{E}_j & \bm{E}_{j, \perp}
\end{bmatrix}
\diagmat{\bm{A}}{\bm{0}}
\begin{bmatrix}
\bm{E}_j{}' \\ \bm{E}_{j, \perp}{}'
\end{bmatrix}
\end{align}
and thus $\lambda_{\max}(\bm{A}) = \lambda_{\max}(\mathrm{Term1})$. We work with $\lambda_{\max}(\bm{A})$ in the sequel. We will use the following simple claim.
\begin{claim}\label{claim:psd}
If $\bm{X} \succeq 0$ (i.e., $\bm{X}$ is a p.s.d matrix), where $\bm{X} \in \Re^{r \times r}$, then $\bm{R} \bm{X} \bm{R}{}' \succeq 0$ for all $\bm{R} \in \Re^{r \times r}$.
\end{claim}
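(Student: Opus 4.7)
The plan is to give the standard one-line verification via the definition of positive semi-definiteness. I would fix an arbitrary vector $\bm{v} \in \Re^r$ and show that $\bm{v}' (\bm{R} \bm{X} \bm{R}') \bm{v} \ge 0$, which suffices since $\bm{R} \bm{X} \bm{R}'$ is manifestly symmetric (its transpose equals itself because $\bm{X}$ is symmetric).

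Concretely, I would set $\bm{u} := \bm{R}' \bm{v} \in \Re^r$ and observe that
\begin{align*}
\bm{v}' \bm{R} \bm{X} \bm{R}' \bm{v} = (\bm{R}' \bm{v})' \bm{X} (\bm{R}' \bm{v}) = \bm{u}' \bm{X} \bm{u} \ge 0,
\end{align*}
where the last inequality is exactly the definition of $\bm{X} \succeq 0$ applied to the vector $\bm{u}$. Since this holds for every $\bm{v}$, the matrix $\bm{R} \bm{X} \bm{R}'$ is p.s.d.

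There is no real obstacle here; the only thing to be slightly careful about is symmetry of the resulting matrix, which follows immediately from $(\bm{R}\bm{X}\bm{R}')' = \bm{R} \bm{X}' \bm{R}' = \bm{R}\bm{X}\bm{R}'$ using symmetry of $\bm{X}$. Alternatively, one could argue via a symmetric factorization: write $\bm{X} = \bm{Y} \bm{Y}'$ for some $\bm{Y}$ (which exists because $\bm{X} \succeq 0$), so that $\bm{R}\bm{X}\bm{R}' = (\bm{R}\bm{Y})(\bm{R}\bm{Y})'$ is patently p.s.d. I would use the first (quadratic form) argument since it is the shortest.
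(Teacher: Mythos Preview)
Your proposal is correct and is essentially identical to the paper's proof: the paper also fixes an arbitrary vector, substitutes $\bm{y}=\bm{R}'\bm{z}$, and concludes $\bm{z}'\bm{R}\bm{X}\bm{R}'\bm{z}\ge 0$ from $\bm{X}\succeq 0$. The only minor addition in your write-up is the explicit symmetry remark, which the paper leaves implicit.
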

\begin{proof}
Since $\bm{X}$ is p.s.d., $\bm{y}{}' \bm{X} \bm{y} \geq 0$ for any vector $\bm{y}$. Use this with $\bm{y} = \bm{R}{}' \bm{z}$ for any $\bm{z} \in \Re^{r}$. We get $\bm{z}{}' \bm{R} \bm{X} \bm{R}{}' \bm{z} \geq 0$. Since this holds for all $\bm{z}$, $\bm{R}\bm{X}\bm{R}{}' \succeq 0$.
\end{proof}
\noindent By Lemma \ref{hp_bnds} from Appendix \ref{proofs_pca_section},  with $\epsilon_0 = 0.01 \lambda^-$,
\begin{align*}
\Pr\left( \frac{1}{\alpha} \sum_t \at \at{}' - (\lambda^- - \epsilon_0) \bm{I} \succeq 0 \right) \geq 1 - 2n^{-10}
\end{align*}

By Claim \ref{claim:psd} from above, with probability $ 1- 2n^{-10}$,
\begin{align*}
&\bm{R}_j \left( \frac{1}{\alpha}\sum_t \at \at{}' - (\lambda^- - \epsilon_0) \bm{I} \right) \bm{R}_j{}' \succeq 0 \\
& \implies \lambda_{\min}\left(\bm{R}_j \left( \frac{1}{\alpha}\sum_t \at \at{}' - (\lambda^- - \epsilon_0) \bm{I} \right) \bm{R}_j{}'\right) \geq 0
\end{align*}
Using Weyl's inequality, with the same probability,
\begin{align*}
\lambda_{\min}\left(\bm{R}_j \left( \frac{1}{\alpha}\sum_t \at \at{}' - (\lambda^- - \epsilon_0) \bm{I} \right) \bm{R}_j{}'\right) \\
\leq \lambda_{\max}\left(\bm{R}_j \left( \frac{1}{\alpha}\sum_t \at \at{}'\right) \bm{R}_j{}'\right) -(\lambda^- - \epsilon_0)\lambda_{\max}\left(\bm{R}_j\bm{R}_j{}'\right)
\end{align*}
and so,
\begin{align}
\lambda_{\max}(\bm{A}) \geq (\lambda^- - \epsilon_0)\lambda_{\max}(\bm{R}_j\bm{R}_j{}').
\label{lammax_A}
\end{align}

Using Lemma \ref{lem:sumprinang} and since $\SE(\Phat_{j-1}, \P_{j-1}) \leq \zz$ we get
\begin{align}\label{eq:nine}
\lambda_{\max}(\bm{R}_j \bm{R}_j{}') = \|\bm{R}_j\|^2 = \SE^2(\Phat_{j-1}, \P_j)
&\geq (\SE_j - 2 \zz)^2
\end{align}
Thus, combining \eqref{eq:lmaxzero}, \eqref{eq:decomp_T}, \eqref{lammax_A}, \eqref{eq:nine}, w.p. at least $1 - 10n^{-10}$,
\begin{align*}
&\lambda_{\max} \left( \frac{1}{\alpha} \sum_{t \in \J^{\alpha}} \bphi \lhatt \lhatt{}' \bphi \right) \\
\geq &\ 0.99 \lambda^-(\SE_j - 2\zz)^2 -  2.02 \sqrt{b_0} (1.2 (\zz + \SE_j)^2 ) \lambda^+ \\
\geq &  0.99 \lambda^- \SE_j (0.6 \SE_j - 4.8 \zz)  = 0.59 \lambda^- \SE_j (\SE_j - 8 \zz)
\end{align*}
In the above, we used $\sqrt{b_0} f = 0.1$.
Since $\SE_j > 9\sqrt{f} \zz$, $0.59 \lambda^- \SE_j (\SE_j - 8 \zz) > 5 \lambda^+ \zz^2 > \omega_{evals}$.%

%

\noindent \emph{Proof of Item 2}:
We proceed as in the proof of item 1 except that now  $\bphi = \bpsi = \I - \Phat_j \Phat_j'$. Thus, $q = q_K=  \zz$ and $\|\bphi \P_j\| \le q_K$.
Using Weyl's inequality and Corollary \ref{pca_corol} from Appendix \ref{proofs_pca_section}, w.p. at least $1 - 10 n^{-10}$,
\begin{align*}
&\lambda_{\max}\left( \frac{1}{\alpha} \sum_{t \in \J^\alpha } \bphi \lhat_t\lhat_t{}' \bphi \right) \\
& \le \lambda_{\max}\left( \frac{1}{\alpha} \sum_t \bphi \lt\lt{}' \bphi \right) + \|\bphi \mathrm{cross} \bphi \| + \lambda_{\max}(\bphi \mathrm{noise} \bphi ) \\
& \le \lambda_{\max}\left( \underbrace{\frac{1}{\alpha} \sum_{t \in \J^{\alpha}} \bphi \P_j \at \at{}' \P_j{}' \bphi}_{\mathrm{Term1}} \right)  \\
&+  2.02 \sqrt{b} \|\bphi \P_j \| q_K \lambda^+  + 1.01 \sqrt{b} q_K^2 \lambda^+  + \zz^2 \lambda^-
 \end{align*}
%

Proceeding as before to bound $\lambda_{\max}(\mathrm{Term1})$, define $\bphi \P_j \overset{QR}{=} \bm{E}_j \bm{R}_j$, define $\bm{A}$ as before, we know $\lambda_{\max}(\mathrm{Term1}) = \lambda_{\max}(\bm{E}_j{}' ( \mathrm{Term1} ) \bm{E}_j) = \lambda_{\max}(\bm{A})$. Further,
\begin{align*}
\lambda_{\max}(\bm{A}) &= \lambda_{\max}\left( \bm{R}_j \left( \frac{1}{\alpha} \sum_{t \in \J^{\alpha}} \at \at{}' \right) \bm{R}_j{}' \right) \\
&\overset{(a)}{\leq} \lambda_{\max}\left( \frac{1}{\alpha} \sum_{t \in \J^{\alpha}} \at \at{} \right) \lambda_{\max}(\bm{R}_j \bm{R}_j{}')
\end{align*}
where $(a)$ uses Ostrowski's theorem \cite[Theorem 5.4.9]{hornjohnson}.
We have
\begin{align*}
\lambda_{\max}(\bm{R}_j \bm{R}_j{}') = \sigma_{\max}^2(\bm{R}_j) = \|\bphi \P_j\|^2 \leq \zz^2
\end{align*}
and we can bound $\lambda_{\max}(\frac{1}{\alpha} \sum_{t \in \J^{\alpha}} \at \at{}')$ using the first item of Lemma \ref{hp_bnds}.
Combining all of the above, and using $\|\bphi \P_j\| \le q_K \leq \zz$ and $b_0 f^2 = 0.01$,
w.p. at least $1 - 10n^{-10}$,
\begin{align*}
\lambda_{\max}\left( \frac{1}{\alpha} \sum_{t \in \J^{\alpha}} \bphi \lhatt \lhatt{}' \bphi \right) 
{\leq} 1.37 \zz^2 \lambda^+
\end{align*}
Recall that $\lthres = 2\zz^2\lambda^+$ and thus, $1.37 \zz^2 \lambda^+ < \lthres$.
\end{proof}

\pgfplotstableread[col sep = comma]{figures/final_files/MO_online_finalSE_TIT.dat}\ltmodata
\pgfplotstableread[col sep = comma]{figures/final_files/bern_online_finalSE_TIT.dat}\ltmodatabern

\begin{figure*}[t!]
\centering
\begin{tikzpicture}
    \begin{groupplot}[
        group style={
            group size=2 by 1,
            y descriptions at=edge left,
        },
        my stylecompare,
        enlargelimits=false,
        width = .5\linewidth,
        height=4cm,
    ]
       \nextgroupplot[
            my legend style compare,
            legend style={at={(0,1.5)}},
            legend columns = 7,
            xlabel={\small{$t$}},
            ylabel={\small{$\log(\SE(\Phat_{(t)}, \P_{(t)}))$}},
        ]
	        \addplot table[x index = {0}, y index = {1}]{\ltmodata};
	        \addplot table[x index = {0}, y index = {2}]{\ltmodata};
	        \addplot table[x index = {0}, y index = {3}]{\ltmodata};
	        \addplot table[x index = {0}, y index = {4}]{\ltmodata};
	        \addplot table[x index = {0}, y index = {5}]{\ltmodata};
	               \nextgroupplot[
            ymode=log,
            xlabel={\small{$t$}},
        ]
	        \addplot table[x index = {0}, y index = {1}]{\ltmodatabern};
	        \addplot table[x index = {0}, y index = {2}]{\ltmodatabern};
	        \addplot table[x index = {0}, y index = {3}]{\ltmodatabern};
	        \addplot table[x index = {0}, y index = {4}]{\ltmodatabern};
	        \addplot table[x index = {0}, y index = {5}]{\ltmodatabern};
    \end{groupplot}
\end{tikzpicture}
\\
\begin{center}
\resizebox{0.9\linewidth}{!}{
\begin{tabular}{cccccccc} \toprule
Outlier Model & GRASTA & s-ReProCS & ORPCA & {\bf NORST} & RPCA-GD & AltProj & \bf{smoothing-NORST} \\
  & ($0.02$ ms) & ($0.9$ ms) & ($1.2$ms) & ($\bm{0.9}$ {\bf ms}) & ($7.8$ms) & ($4.6$ms) & ($\bm{1.7}$\bf{ms}) \\ \toprule
Moving Object & $0.630$ & $0.598$ & $0.861$ & $\bm{4.23 \times 10^{-4}}$ & $4.283$ & $4.441$ & $\bm{8.2 \times 10^{-6}}$ \\
Bernoulli & $6.163$ & $2.805$ & $1.072$ & $\bm{0.002}$ & $0.092$& $0.072$ & $\bm{2.3 \times 10^{-4}}$ \\ \bottomrule
\end{tabular}
}
\end{center}
\caption{{\small \textbf{Top:} Left plot illustrates the $\lt$ error for outlier supports generated using Moving Object Model and right plot illustrates the error under the Bernoulli model. The values are plotted every $k\alpha - 1$ time-frames. \textbf{Bottom:} Comparison of $\|\Lhat - \L\|_F/\|\L\|_F$ for Online and offline RPCA methods. Average time for the Moving Object model is given in parentheses. The offline (batch) methods are performed once on the complete dataset.
}}
\vspace{-0.1in}
\label{fig:Comparison}
\end{figure*}
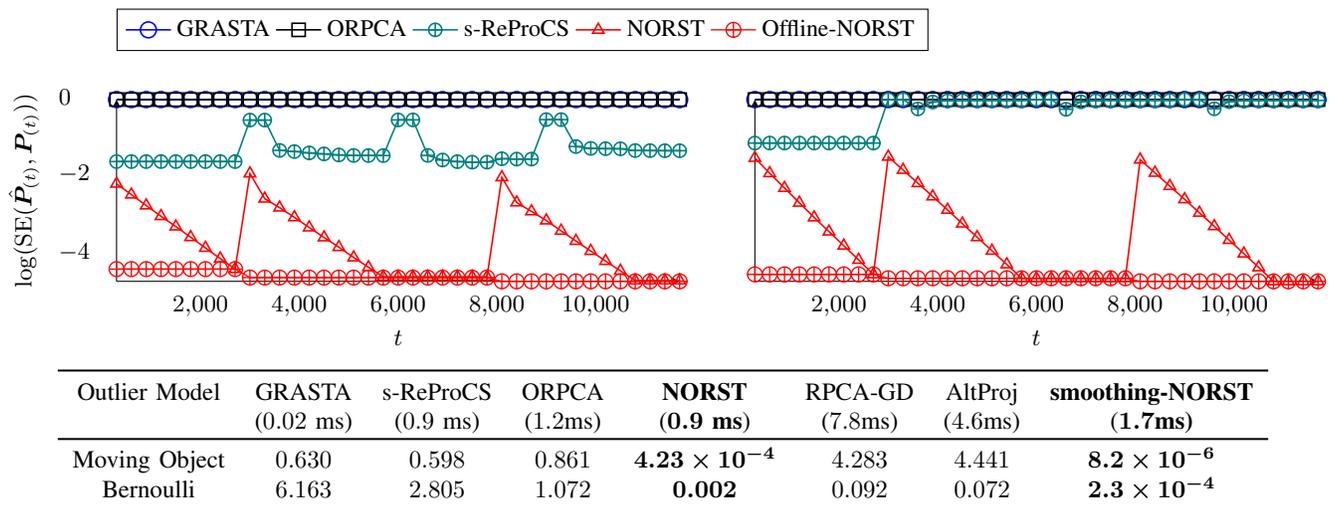

\section{Empirical Evaluation}\label{sec:sims}
In this section we present numerical experiments on synthetic and real data to validate our theoretical claims. Extra experimental details are presented in the Supplementary Material.


\subsubsection{Synthetic Data}
First we compare the results of NORST and smoothing-NORST with RST, Online RPCA, and static RPCA methods. We generate the changing subspaces, $\P_j = e^{\gamma_j \bm{B}_j} \P_{j-1}$ as done in \cite{grass_undersampled} where $\gamma_j$ controls the amount of subspace change and $\bm{B}_j$'s are skew-symmetric matrices. In the first experiment we used the following parameters. $n = 1000$, $d = 12000$, $J = 2$, $t_1 = 3000$, $t_2 = 8000$, $r = 30$, $\gamma_1 = 0.001$, $\gamma_2 = \gamma_1$. We set $\alpha= 300$. Next, we generate the coefficients $\at \in \mathbb{R}^{r}$ as independent zero-mean, bounded random variables. They are $(\at)_i \overset{i.i.d}{\sim} unif[-q_i, q_i]$ where $q_i = \sqrt{f} - \sqrt{f}(i-1)/2r$ for $i = 1, 2, \cdots, r - 1$ and $q_{r} = 1$. thus the condition number is $f$ and we selected $f=50$. For the sparse supports, we considered two models according to which the supports are generated. First we use Model G.24 \cite{rrpcp_dynrpca} which simulates a moving object pacing in the video. For the first $t_{\train} = 100$ frames,  we used a smaller fraction of outliers with parameters $s/n = 0.01$, $b_0 = 0.01$. For $t > t_\train$ we used $s/n = 0.05$ and $b_0 = 0.3$. Secondly, we used the Bernoulli model to simulate sampling uniformly at random, i.e., each entry of the matrix, is independently selected with probability $\rho = 0.01$ for the first $t_{\train}$ frames and with probability $\rho = 0.3$ for subsequent frames. The sparse outlier magnitudes for both support models are generated uniformly at random from the interval $[x_{\min}, x_{\max}]$ with $x_{\min} = 10$ and $x_{\max} = 20$.

We initialized the s-ReProCS and NORST algorithms using AltProj applied to $\Y_{[1,t_\train]}$ with $t_\train=100$. For the parameters to AltProj we used used the true value of $r$, $15$ iterations and a threshold of $0.01$. This, and the choice of $\gamma_1$ and $\gamma_2$ ensure that $\SE(\Phat_\init, \P_0) \approx \SE(\P_1, \P_0) \approx \SE(\P_2, \P_1) \approx 0.01$. The other algorithm parameters are set as mentioned in the theorem, i.e., $K = \lceil \log(c/\varepsilon) \rceil = 8$, $\alpha = C r \log n = 300$, $\omega = x_{\min}/2 = 5$ and $\xi = 7 x_{\min}/15 = 0.67$, $\lthres = 2 \varepsilon^2 \lambda^+ = 7.5 \times 10^{-4}$. For the other online methods we implement the algorithms without modifications. The regularization parameter for ORPCA was set as with $\lambda_1 = 1 / \sqrt{n}$ and $\lambda_2 = 1 / \sqrt{d}$ according to \cite{xu_nips2013_1}. Wherever possible we set the tolerance as $10^{-6}$ and $100$ iterations to match that of our algorithm. As shown in Fig. \ref{fig:Comparison}, NORST is significantly better than all the RST methods - s-ReProCS \cite{rrpcp_dynrpca}, and two popular heuristics - ORPCA \cite{xu_nips2013_1} and GRASTA \cite{grass_undersampled}.

We also provide a comparison of smoothing techniques in Fig \ref{fig:Comparison}. To ensure a valid time comparison, we implement the static RPCA methods on the entire data matrix $\Y$. Although, we could implement the static techniques on disjoint batches of size $\alpha$, we observed that this did not yield significant improvement in terms of reconstruction accuracy, while being considerably slower, and thus we report only the latter setting. As can be seen, smoothing NORST outperforms all static RPCA methods, both for the moving object and the Bernoulli models. For the batch comparison we used PCP, AltProj and  RPCA-GD. We set the regularization parameter for PCP $1/\sqrt{n}$ in accordance with \cite{rpca}. The other known parameters, $r$ for Alt-Proj, outlier-fraction for RPCA-GD, are set the ground truth. For all algorithms we set the threshold as $10^{-6}$ and the number of iterations to $100$. All results are averaged over $100$ independent runs.

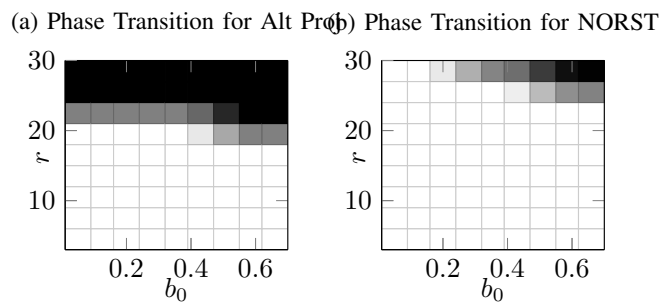
\begin{figure}[t!]
\begin{center}
\resizebox{\linewidth}{!}{%
\begin{tikzpicture}
    \begin{groupplot}[
        group style={
            group size=2 by 1,
            horizontal sep=1.2cm
        },
        width = .5\linewidth,
        height = 4cm
    	]
        \nextgroupplot[
		        view={0}{90},
				xlabel={$b_0$},
               	ylabel={$r$},
                colormap/blackwhite,
                title={\small{(a) Phase Transition for Alt Proj}},
				x label style={at={(axis description cs:0.5,-0.1)},anchor=north},
				y label style={at={(axis description cs:-0.1,0.4)},anchor=west},
        ]
        \node [text width=1em,anchor=north west] at (rel axis cs: 0,1)
                {\subcaption{\label{fig:phasetransvsnbnd}}};

                	\addplot3[surf] file {figures/final_files/PhaseTransAltProj.dat};
                			
		            \nextgroupplot[
		       view={0}{90},
               xlabel={$b_0$},
               ylabel={$r$},
               colormap/blackwhite,
               title={\small{(b) Phase Transition for NORST}},
               scaled y ticks=false, tick label style={/pgf/number format/fixed},
               x label style={at={(axis description cs:0.5,-0.1)},anchor=north},
               y label style={at={(axis description cs:-0.1,0.4)},anchor=west},
        ]
    \node [text width=1em,anchor=north west] at (rel axis cs: 0,1)
                {\subcaption{\label{fig:phasetransvsngausr}}};

        	\addplot3[surf] file {figures/final_files/PhaseTransNORST.dat};
        	
        \end{groupplot}
\end{tikzpicture}%
}
\end{center}%
\caption{Empirical probability that $\|\Lhat - \L\|_F / \|\L\|_F < 0.5$ for AltProj and for smoothing NORST. Note that NORST indeed has a much higher tolerance to outlier fraction per row as compared to AltProj. Black denotes $0$ and white denotes $1$.}
\vspace{-0.5cm}
\label{fig:phase_trans}
\end{figure}

In Fig. \ref{fig:phase_trans} we validate our claim of  NORST admitting a higher fraction of outliers per row. We only compare with AltProj since it is has the highest tolerance among other methods. We chose $10$ different values of each of $r$ and $b_0$ (we slightly misuse notation here to let $b_0 := \outfracrow$ for this section only). For each pair of $b_0$ and $r$ we implemented NORST and ALtProj over $100$ independent trials and computed the relative error, $\|\Lhat - \L\|_F / \|L\|_F$ for each run. We illustrate the fraction of times the error seen by each algorithm is less than a threshold, $0.5$. We chose this threshold since for smaller values, AltProj consistently failed. As can be seen, NORST is able to tolerate a much larger fraction of outlier-per-row as compared to AltProj.


\pgfplotstableread[col sep = comma]{figures/final_files/xmin_variation.dat}\xmindata
\pgfplotsset{every axis title/.append style={at={(.5,1.15)}}}
\begin{figure*}[t!]
\centering
\begin{tikzpicture}
    \begin{groupplot}[
        group style={
            group size=2 by 1,
            y descriptions at=edge left,
        },
        my stylecompare,
        enlargelimits=false,
        width = .5\linewidth,
        height=4cm,
        ymin=-5, ymax = 1
    ]
       \nextgroupplot[
            legend entries={
					$x_{\min} = 0.5$,
            		$x_{\min} = 5$,
            		$x_{\min} = 10$,	
            		},
            legend style={at={(1.5,1.25)}},
            legend columns = 3,
            xlabel=$t$,
            ylabel={\small{$\log(\SE(\Phat_{(t)}, \P_{(t)}))$}},
            title={\small AltProj},
        ]

        	        \addplot[red, line width=1.6pt, mark=square,mark size=3.5pt, mark repeat=3] table[x index = {0}, y index = {7}]{\xmindata};
	        \addplot[cyan, line width=1.6pt, mark=o,mark size=3.5pt, mark repeat=3] table[x index = {0}, y index = {8}]{\xmindata};
	        \addplot[olive, line width=1.6pt, mark=Mercedes star,mark size=3pt, mark repeat=3] table[x index = {0}, y index = {9}]{\xmindata};

	               \nextgroupplot[
            xlabel=$t$,
            title={\small smoothing NORST}
        ]
	   	        \addplot[red, line width=1.6pt, mark=square,mark size=3.5pt, mark repeat=3] table[x index = {0}, y index = {4}]{\xmindata};
	        \addplot[cyan, line width=1.6pt, mark=o,mark size=3.5pt, mark repeat=3] table[x index = {0}, y index = {5}]{\xmindata};
	        \addplot[olive, line width=1.6pt, mark=Mercedes star,mark size=3pt, mark repeat=3] table[x index = {0}, y index = {6}]{\xmindata};

    \end{groupplot}
\end{tikzpicture}
\caption{In the above plots we show the variation of the subspace errors for varying $x_{\min}$. In particular, we set all the non-zero outlier values to $x_{\min}$. 
The results are averaged over $100$ independent trials.}
\vspace{-0.3in}
\label{fig:xmin_var}
\end{figure*}
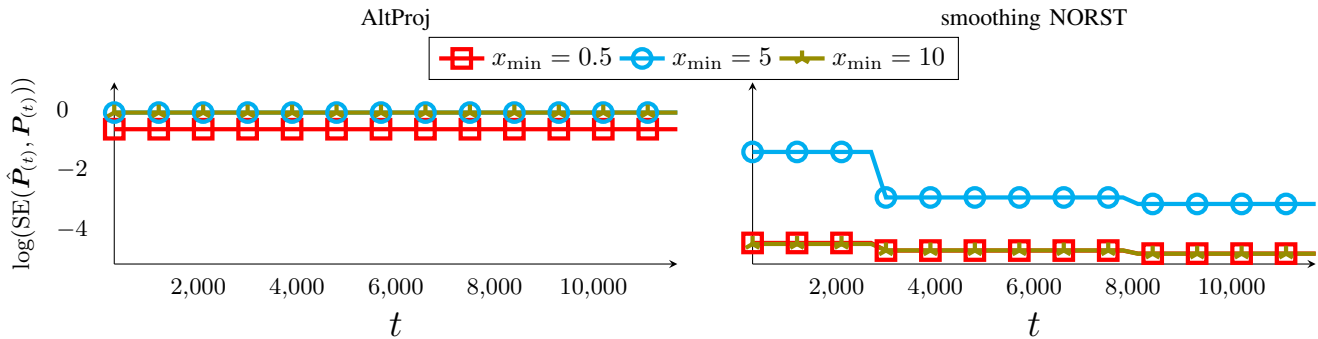

In the third experiment we analyze the effect of the lower bound on the outlier magnitude $x_{\min}$ with the performance of NORST and AltProj. We show the results in Fig. \ref{fig:xmin_var}. The only change in data generation is that we now choose three different values of $x_{\min} = \{0.5, 5, 10\}$, and we set all the non-zero entries of the sparse matrix to be equal to $x_{\min}$. This is actually harder than allowing the sparse outliers to take on any value since for a moderately low value of $x_{\min}$ the outlier-lower magnitude bound of Theorem \ref{thm1} is violated. This is indeed confirmed by the numerical results presented in Fig. \ref{fig:xmin_var}. (i) When $x_{\min} = 0.5$, NORST works well since now all the outliers get classified as the small unstructured noise $\vt$. (ii) When $x_{\min} = 10$, NORST still works well because now $\xmint$ is large enough so that the outlier support is mostly correctly recovered. (iii) But when $x_{\min} = 5$ the NORST reconstruction error stagnates around $10^{-3}$. All AltProj errors are much worse than those of NORST because the outlier fraction per row is the same as in the first experiment and thus the effect of varying $\xmint$ is not pronounced. 

\subsubsection{Real Data}
We also evaluate our algorithm for the task of Background Subtraction. For the AltProj algorithm we set $r = 40$. The remaining parameters were used with default setting.
For NORST, we set $\alpha = 60$, $K = 3$, $\xi_t = \|\bm{\Psi} \hat{\bm{\ell}}_{t-1}\|_2$. We found that these parameters work for most videos that we verified our algorithm on. For RPCA-GD we set the ``corruption fraction'' $\alpha = 0.2$ as described in their paper. 

We use two standard datasets, the Meeting Room (MR) and the Lobby (LB) sequences. LB is a relatively easy sequence since the background is static for the most part, and the foreground occlusions are small in size. As can be seen from Fig. \ref{fig:mr_full} (first two rows), most algorithms perform well on this dataset.  MR is a challenging data set since the color of the foreground (person) is very similar to the background curtains, and the size of the object is very large. Thus, NORST is able to outperform all methods, while being fast.  


\begin{figure*}[t!]
\begin{center}
\resizebox{.8\linewidth}{!}{
\begin{tabular}{@{}c@{}c@{}c@{}c@{}c@{}c@{}}
\\    \newline
	\includegraphics[width=0.11\linewidth, height=1.5cm]{./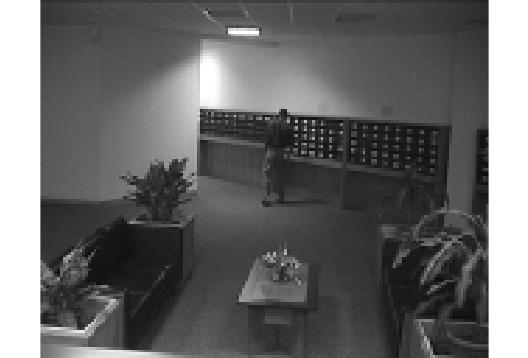}
&
	\includegraphics[width=0.11\linewidth, height=1.5cm, trim={.7cm, 0cm, .7cm, 0cm}, clip]{./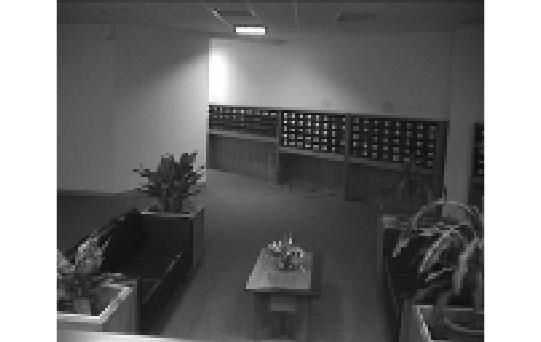}
&
	\includegraphics[width=0.11\linewidth, height=1.5cm, trim={.7cm, 0cm, .7cm, 0cm}, clip]{./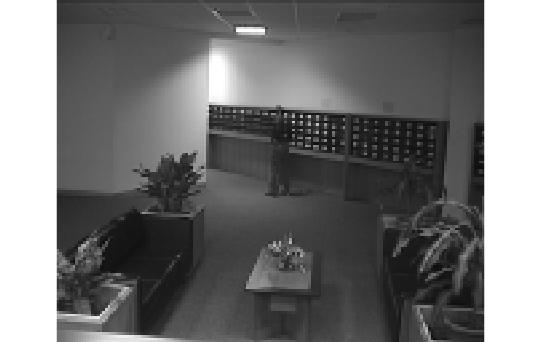}
&
	\includegraphics[width=0.11\linewidth, height=1.5cm, trim={.7cm, 0cm, .7cm, 0cm}, clip]{./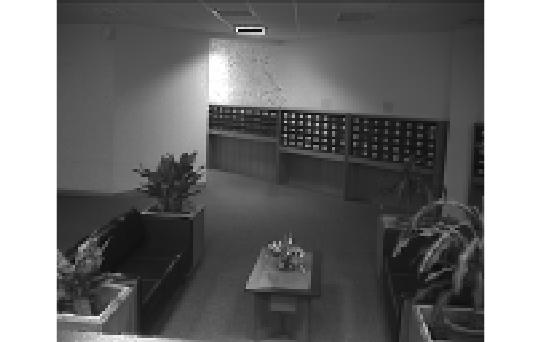}
&
	\includegraphics[width=0.11\linewidth, height=1.5cm, trim={.7cm, 0cm, .7cm, 0cm}, clip]{./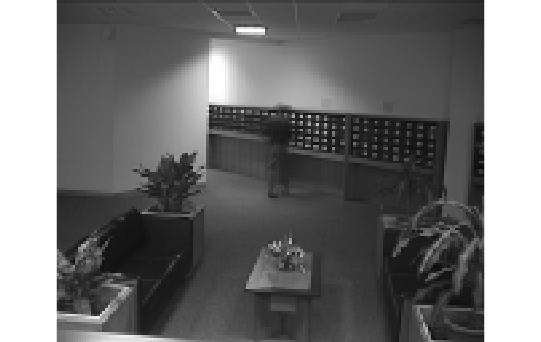}
&
	\includegraphics[width=0.11\linewidth, height=1.5cm, trim={1.25cm, 0cm, 1.2cm, 0cm}, clip]{./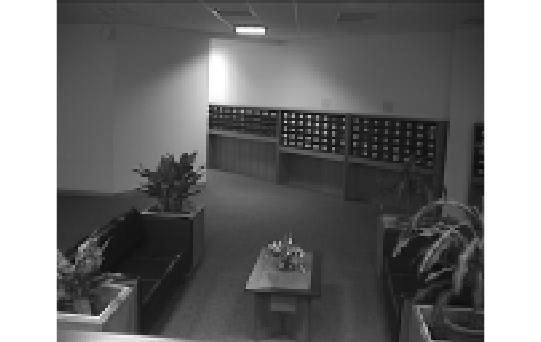}
\\    \newline
	{\includegraphics[width=0.11\linewidth, height=1.5cm, trim={.8cm, 0cm, 1cm, 0cm}, clip]{./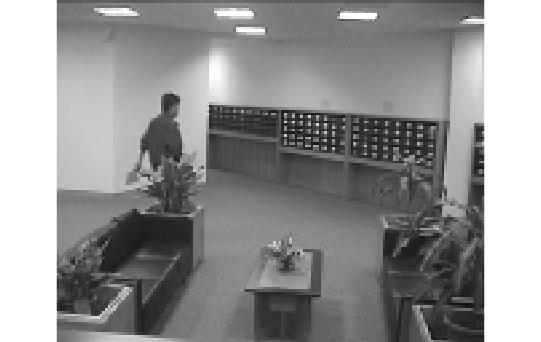}}
&
	{\includegraphics[width=0.11\linewidth, height=1.5cm, trim={.7cm, 0cm, .7cm, 0cm}, clip]{./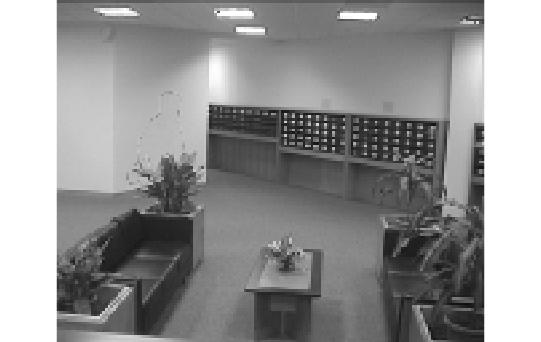}}
&
	{\includegraphics[width=0.11\linewidth, height=1.5cm, trim={1.25cm, 0cm, 1.2cm, 0cm}, clip]{./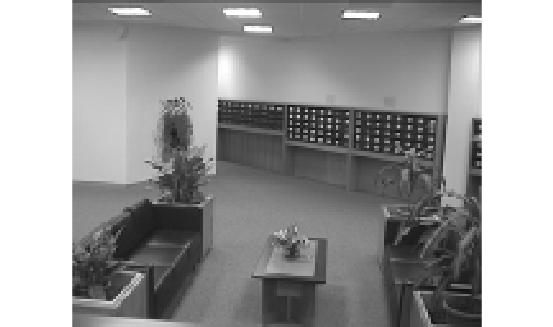}}
&
	{\includegraphics[width=0.11\linewidth, height=1.5cm, trim={1.2cm, 0cm, 1.2cm, 0cm}, clip]{./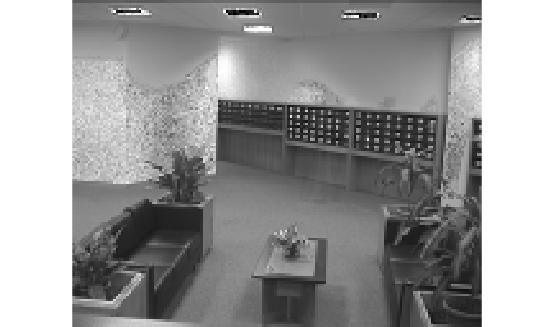}}
&
	{\includegraphics[width=0.11\linewidth, height=1.5cm, trim={1.2cm, 0cm, 1.25cm, 0cm}, clip]{./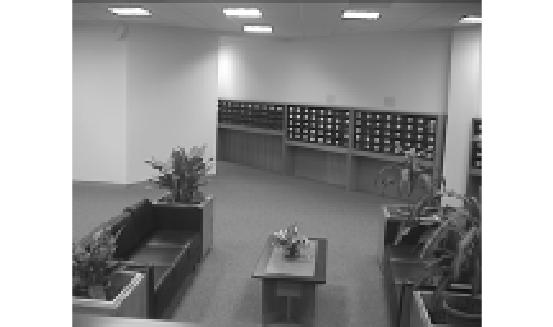}}
&
	{\includegraphics[width=0.11\linewidth, height=1.5cm, trim={.7cm, 0cm, .7cm, 0cm}, clip]{./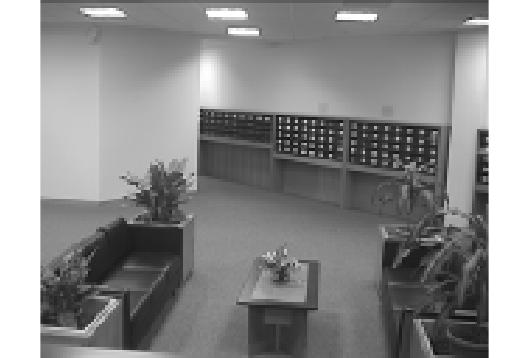}}
		\\ \newline 
			\tiny{Original} & \tiny{NORST($16.5$ms)} & \tiny{AltProj($26.0$ms)} & \tiny{RPCA-GD($29.5$ms)} & \tiny{GRASTA($2.5$ms)} & \tiny{PCP ($44.6$ms)} \\ \newline		
		\includegraphics[width=0.11\linewidth, height=1.5cm, trim={1.2cm, 0cm, 1.25cm, 0cm}, clip]{./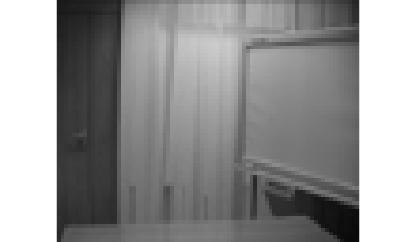}
&
	\includegraphics[width=0.11\linewidth, height=1.5cm, trim={1.2cm, 0cm, 1.25cm, 0cm}, clip]{./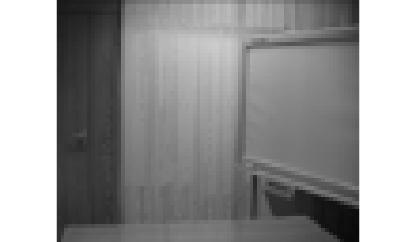}
&
	\includegraphics[width=0.11\linewidth, height=1.5cm, trim={1.1cm, 0cm, 1.15cm, 0cm}, clip]{./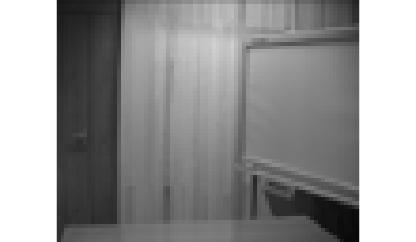}
&
	\includegraphics[width=0.11\linewidth, height=1.5cm, trim={1.05cm, 0cm, 1.25cm, 0cm}, clip]{./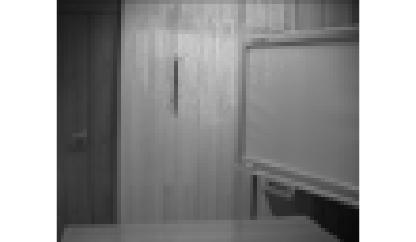}
&
	\includegraphics[width=0.11\linewidth, height=1.5cm, trim={0.38cm, 0cm, 0.4cm, 0cm}, clip]{./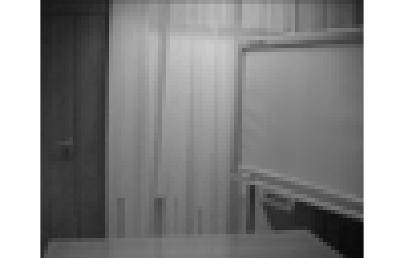}
&
	\includegraphics[width=0.11\linewidth, height=1.5cm, trim={2.1cm, 0cm, 2.2cm, 0cm}, clip]{./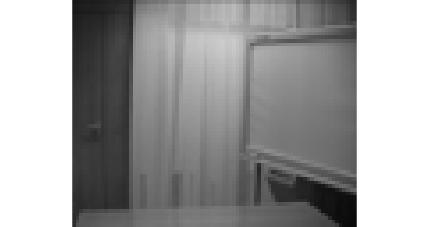}
\\    \newline
	{\includegraphics[width=0.11\linewidth, height=1.5cm, trim={1.8cm, 0cm, 1.8cm, 0cm}, clip]{./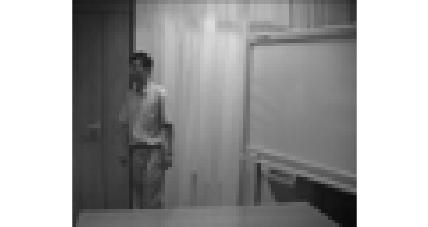}}
&
	{\includegraphics[width=0.11\linewidth, height=1.5cm, trim={1.2cm, 0cm, 1.25cm, 0cm}, clip]{./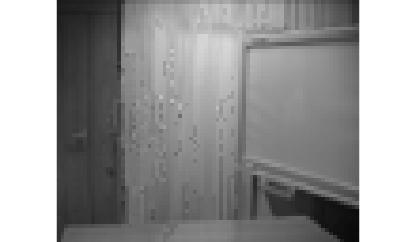}}
&
	{\includegraphics[width=0.11\linewidth, height=1.5cm, trim={1.7cm, 0cm, 1.7cm, 0cm}, clip]{./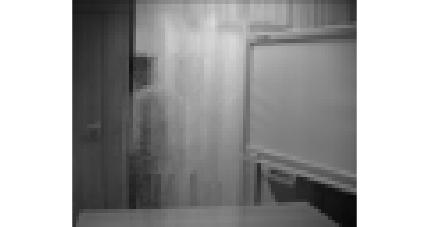}}
&
	{\includegraphics[width=0.11\linewidth, height=1.5cm, trim={1.65cm, 0cm, 1.8cm, 0cm}, clip]{./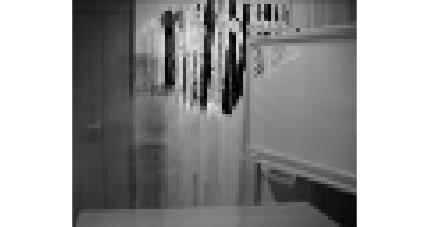}}
&
	{\includegraphics[width=0.11\linewidth, height=1.5cm, trim={1cm, 0cm, 1.15cm, 0cm}, clip]{./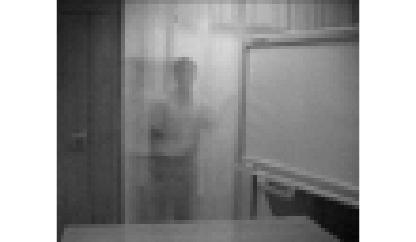}}
&
	{\includegraphics[width=0.11\linewidth, height=1.5cm, trim={1.5cm, 0cm, 1.6cm, 0cm}, clip]{./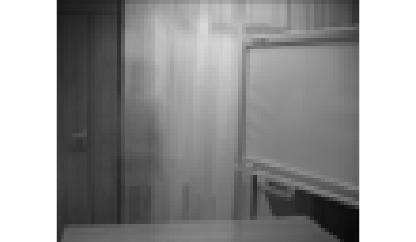}}  \\ \newline
	\tiny{Original} & \tiny{NORST($72.5$ms)} & \tiny{AltProj($133.1$ms)} & \tiny{RPCA-GD($113.6$ms)} & \tiny{GRASTA($18.9$ms)}  & \tiny{PCP($240.7$ms)} \\
\end{tabular}
}

\caption{\footnotesize Comparison of visual performance in Foreground Background separation. The first two rows are for the LB dataset and the last two rows are for the MR dataset. The time taken by each algorithm (per frame) in milliseconds is provided in parenthesis.}
\label{fig:mr_full}
\end{center}
\vspace{-1cm}
\end{figure*}

\section{Conclusions and Future Directions} \label{conclude}
In this work we developed a fast and (nearly) delay optimal robust subspace tracking solution that we called NORST. NORST is a mini-batch algorithm with memory complexity that is also nearly optimal. It detects subspace changes and tracks them to $\zz$ accuracy with a delay that is more than the subspace dimension $r$ by only log factors: the delay is order $r \log n \log(1/\zz)$. The memory complexity is $n$ times this number while $nr$ is the amount of memory required to store the output subspace estimate.
Our guarantee for NORST needs assumptions similar to those needed by standard robust PCA solutions. Different from standard robust PCA, we need slow subspace change, we replace right singular vectors' incoherence by a statistical version of it, but we need a weaker bound on outlier fractions per row.

Slow subspace change is a natural assumption for background images of static camera videos (with no sudden scene changes).
Our statistical assumptions on $\at$ are mild and can be relaxed further. As already explained, the identically distributed requirement can be relaxed. In the video application, the zero mean assumption  can be approximately satisfied if we estimate the mean background image by computing the empirical average of the first $t_\train$ frames, $\Lhat_{[1:t_\train]}$, obtained using AltProj.  Mutual independence of $\at$'s models the fact that the changes in each background image w.r.t. a ``mean'' background are independent, when conditioned on their subspace. This is valid, for example, if the background changes are due to illumination variations or due to moving curtains (see Fig. \ref{fig:mr_full}). Mutual independence can be relaxed to instead assuming an autoregressive model on the $\at$'s: this will require using the matrix Azuma inequality \cite{tropp} to replace matrix Bernstein. We believe the zero mean requirement can also be eliminated.

\bibliographystyle{IEEEbib}
{\footnotesize
\bibliography{tipnewpfmt_kfcsfullpap}
}

\appendices

\renewcommand\thetheorem{\arabic{section}.\arabic{theorem}}

\counterwithin{theorem}{section}

\section{Proofs for Sec. \ref{pca_section}} \label{proofs_pca_section}


\subsection{Proof of Theorem \ref{mainthm_pca}}
%

\begin{proof}[Proof of Theorem \ref{mainthm_pca}]
This uses the Davis-Kahan sin theta theorem \cite{davis_kahan}:
\begin{lem}[Davis-Kahan $\sin \theta$ theorem]\label{sintheta}
Let $\D_0$ be a Hermitian matrix whose span of top $r$ eigenvectors equals $\Span(\P)$. Let $\D$ be the Hermitian matrix with top $r$ eigenvectors  $\Phat$. Then,
\begin{align}
\SE(\Phat,\P) &\le \frac{\|(\D-\D_0)\P\|}{\lambda_r(\D_0) - \lambda_{r+1}(\D)} \nonumber \\
& \le  \frac{\|\D-\D_0\|}{\lambda_r(\D_0) - \lambda_{r+1}(\D_0) - \lambda_{\max}(\D-\D_0)}
\label{sintheta_bnd_2}
\end{align}
as long as the denominator is positive. The second inequality follows from the first using Weyl's inequality.%
\end{lem}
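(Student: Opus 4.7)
The plan is to prove the Davis--Kahan inequality by reducing it to a Sylvester-type equation between the spectral subspaces. Let $\Phat_\perp$ be an $n\times(n-r)$ matrix whose columns form an orthonormal basis for the orthogonal complement of $\Span(\Phat)$. Then by definition $\SE(\Phat,\P) = \|(\I-\Phat\Phat')\P\| = \|\Phat_\perp \Phat_\perp' \P\| = \|\Phat_\perp' \P\|$, so the goal reduces to bounding $\|\Phat_\perp' \P\|$.

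Next I would use the eigenvalue equations. Since $\Span(\P)$ is the top-$r$ eigenspace of $\D_0$, there is a Hermitian matrix $\Lam_1$ (diagonal in the eigenbasis, with eigenvalues $\ge \lambda_r(\D_0)$) with $\D_0 \P = \P \Lam_1$. Similarly, since $\Span(\Phat_\perp)$ is the bottom-$(n-r)$ eigenspace of $\D$, there is a Hermitian $\Lam_2$ with eigenvalues $\le \lambda_{r+1}(\D)$ satisfying $\Phat_\perp' \D = \Lam_2 \Phat_\perp'$. Forming the expression $\Phat_\perp'(\D-\D_0)\P$ and using these two identities yields the Sylvester equation
\[
\Lam_2 (\Phat_\perp' \P) - (\Phat_\perp' \P)\Lam_1 \;=\; \Phat_\perp'(\D - \D_0)\P.
\]
The spectra of $\Lam_1$ and $\Lam_2$ are separated by the gap $\gamma := \lambda_r(\D_0) - \lambda_{r+1}(\D)$, which is positive under our hypothesis. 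A standard Sylvester-operator bound (diagonalizing $\Lam_1,\Lam_2$ and applying the $1/(\mu_i-\nu_j)$ factor entrywise, then taking the spectral norm) then gives $\|\Phat_\perp' \P\| \le \gamma^{-1}\|\Phat_\perp'(\D-\D_0)\P\|$. Finally, since $\Phat_\perp$ has orthonormal columns, $\|\Phat_\perp'(\D-\D_0)\P\| \le \|(\D-\D_0)\P\|$, which yields the first stated inequality.

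For the second inequality I would apply Weyl's inequality to replace the data-dependent quantity $\lambda_{r+1}(\D)$ in the denominator: $\lambda_{r+1}(\D) \le \lambda_{r+1}(\D_0) + \lambda_{\max}(\D-\D_0)$, which enlarges the denominator's negative part and hence only weakens the bound. In the numerator, I would use $\|(\D-\D_0)\P\| \le \|\D-\D_0\|\,\|\P\| = \|\D-\D_0\|$ since $\P$ has orthonormal columns. Combining these gives the second displayed inequality.

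The main technical obstacle is the spectral-norm bound on the solution of the Sylvester equation above: one must justify that $\|\Phat_\perp'\P\| \le \gamma^{-1}\|\Phat_\perp'(\D-\D_0)\P\|$ even though $\Lam_1,\Lam_2$ need not be scalar multiples of identity. The cleanest route is to diagonalize both $\Lam_i$'s (they are Hermitian), write the equation in matched eigenbases where it becomes entrywise $(\mu_i - \nu_j)(M)_{ij} = (N)_{ij}$ with $\mu_i - \nu_j \ge \gamma > 0$, and then invoke the classical fact that the linear map $M \mapsto \Lam_2 M - M\Lam_1$ has spectral-norm-to-spectral-norm operator norm $\ge \gamma$ on its inverse (equivalently, bound entrywise divisions by $\gamma^{-1}$ and use that the resulting diagonal scaling is a contraction in spectral norm). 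Everything else in the proof is routine algebra and an appeal to Weyl.
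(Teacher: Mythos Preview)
The paper does not supply its own proof of this lemma: it states the Davis--Kahan $\sin\theta$ theorem and cites \cite{davis_kahan}, then only remarks that the second inequality follows from the first via Weyl. So there is no paper proof to compare against beyond that one-line reduction, which you handle correctly.

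Your argument for the first inequality is the standard Sylvester-equation proof of Davis--Kahan and is correct. One soft spot: the final step, where you pass from the entrywise identity $(\mu_i-\nu_j)M_{ij}=N_{ij}$ to $\|M\|\le\gamma^{-1}\|N\|$ in \emph{spectral} norm, is not a simple ``diagonal scaling'' argument---entrywise domination does not in general control the operator norm, since the map $N\mapsto M$ is a Schur multiplier, not left/right multiplication by diagonals. The cleanest fix is to take the top singular pair $(u_1,v_1)$ of $M=\Phat_\perp'\P$, use $u_1'M=\sigma_1 v_1'$ and $Mv_1=\sigma_1 u_1$ to get
\[
\sigma_1\bigl(v_1'\Lam_1 v_1 - u_1'\Lam_2 u_1\bigr) \;=\; -\,u_1'\,\Phat_\perp'(\D-\D_0)\P\,v_1,
\]
and then bound the parenthesis below by $\lambda_r(\D_0)-\lambda_{r+1}(\D)$. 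Alternatively, after shifting so that $\Lam_1\succeq(\gamma/2)\I$ and $\Lam_2\preceq-(\gamma/2)\I$, the integral representation $M=-\int_0^\infty e^{t\Lam_2}Ne^{-t\Lam_1}\,dt$ gives the same bound. Either route closes the gap; the rest of your write-up (including the Weyl step for the second inequality) matches what the paper asserts.
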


For our proof, set $\bm{D}_0 = \frac{1}{\tmaxpca}\sum_t \lt\lt{}'$. Notice that this is a Hermitian matrix with $\P$ as the top $r$ eigenvectors.
Let $\bm{D} = \frac{1}{\tmaxpca}\sum_t \yt \yt{}'$. Recall that $\Phat$ is its matrix of top $r$ eigenvectors.
Observe
\begin{align*}
&\D - \D_0 = \frac{1}{\tmaxpca} \sum_t (\yt \yt{}' - \lt \lt{}') \\
&= \frac{1}{\tmaxpca} \sum_t  ( \wt \wt{}' + \bv \bv{}'  + \lt \wt{}'  + \bv \wt{}'  \\
&+ \lt \bv{}' + \wt \lt{}' + \wt \bv{}' + \bv \lt{}'  ) \\
&:= \noise_{\w} + \noise_{\bv} + \cross_{\l,\w}  +  \cross_{\l, \bv} \\
&+ \cross_{\bv,\w} +  \cross_{\l, \w}{}' + \cross_{\l, \bv}{}'  + \cross_{\bv,\w}{}' \\
&: = \noise + \cross + \cross{}'
\end{align*}
Also notice that $\lambda_{r+1}(\D_0) = 0$, $\lambda_r(\D_0) = \lambda_{\min}\left(\frac{1}{\tmaxpca} \sum_t \at \at{}'\right)$.
Now, applying Theorem \ref{sintheta},
\begin{align*}
\SE(\Phat,\P) \leq \frac{2\|\cross\| + \|\noise\|}{\lambda_{\min}\left(\frac{1}{\tmaxpca}\sum_t\at\at{}'\right) - \mathrm{numerator}}
\end{align*}

Now, we can bound $\|\cross\| \leq \|\E[\cross]\| + \|\cross - \E[\cross]\|$ and similarly for the $\noise$ term. We use the Cauchy-Schwartz inequality for bounding the expected values of both.

Recall that $\M_t = \M_{2,t} \M_{1,t}$ with $\bz:= \|\frac{1}{\tmaxpca} \sum_t \M_{2,t} \M_{2,t}{}'\|$ and $ q:=  \max_t \|\M_{1,t} \P\|$ with $q < 2$. Thus,
\begin{align}
&\|\E[\noise]\| \leq  \norm{\frac{1}{\tmaxpca} \sum_t \M_t \P \Lam \P{}'\M_{1,t}{}' \M_{2,t}{}'} + \norm{\Sigma_{\bv}} \nn \\
& \le  \sqrt{\norm{\frac{1}{\tmaxpca} \sum_t \M_t \P \Lam \P{}'\M_{1,t}{}'(\cdot){}'}  \norm{\frac{1}{\tmaxpca} \sum_t \M_{2,t} \M_{2,t}{}'} } + \lambda_v^+ \nn \\
& \le  \sqrt{\max_t \|\M_t\P \Lam \P{}'\M_{1,t}{}' \|^2  \ \bz} + \lambda_v^+ \le  \sqrt{\bz} q^2 \lambda^+ + \lambda_v^+
\label{bnd_avg_sig_noise_cor_bnd_2}
\end{align}
Similarly,
\begin{align}
&\|\E[\cross_{\l,\wt}]\| = \norm{\frac{1}{\tmaxpca} \sum_t \M_{2,t} \M_{1,t} \P \Lam \P{}'} \nn \\
& \le \sqrt{ \norm{\frac{1}{\tmaxpca} \sum_t \P \Lam \P{}'\M_{1,t}{}'\M_{1,t}\P \Lam \P{}'}   \norm{\frac{1}{\tmaxpca} \sum_t \M_{2,t} \M_{2,t}{}'} } \nn \\
& \le \sqrt{ \max_t \|\M_{1,t}\P \Lam \P{}'\|^2 \ \bz } \le  \sqrt{\bz} q \lambda^+
\label{bnd_avg_sig_noise_cor_bnd_1}
\end{align}

Since $\bv$ is uncorrelated noise, $\E[\cross_{\l,\bv}] = 0$ and $\E[\cross_{\wt,\bv}] = 0$.
We now lower bound $\lambda_{\min}\left(\frac{1}{\tmaxpca}\sum_t \at \at{}'\right)$ as
\begin{align*}
\lambda_{\min}\left(\frac{1}{\tmaxpca}\sum_t \at \at{}' \right) &= \lambda_{\min}\left( \Lam - \left(\frac{1}{\tmaxpca}\sum_t \at \at{}' - \Lam\right) \right) \\
 &\geq \lambda_{\min}(\Lam) -  \lambda_{\max}\left(\frac{1}{\tmaxpca}\sum_t \at \at{}'- \Lam \right) \\
&\geq \lambda^- - \norm{\frac{1}{\tmaxpca}\sum_t \at \at{}' - \Lam}
\end{align*}
and thus we have
\begin{align*}
&\SE(\Phat, \P) \\
&\leq \frac{4 \sqrt{\bz}q \lambda^+ + \lambda_v^+ + 2\|\cross - \E[\cross]\| + \|\noise - \E[\noise]\|}{\lambda^- - \norm{\frac{1}{\tmaxpca}\sum_t\at \at{}' - \Lam} - \mathrm{numerator}}
\end{align*}
\subsubsection{Concentration bounds}
Now we only need to bound $\|\noise - \E[\noise]\|$ and $\|\cross - \E[\cross]\|$. These are often referred to as  ``statistical error'', while the error due to nonzero  $\| \E[\cross]\|$ or $\| \E[\noise]\|$ is called the ``bias''.  We use concentration bounds from Lemma \ref{hp_bnds}. 
\begin{align*}
& \|\noise - \E[\noise]\| + 2 \|\cross - \E[\cross]\| \\
 \leq & \norm{\frac{1}{\tmaxpca}\sum_t(\wt \wt{}' - \E[\wt\wt{}'])]} + \norm{\frac{1}{\tmaxpca}\sum_t(\bv \bv{}' - \E[\bv \bv{}'])]} \\
 &+ 2\norm{\frac{1}{\tmaxpca}\sum_t(\lt \wt{}' - \E[\lt\wt{}'])]} \\
& + 2\norm{\frac{1}{\tmaxpca}\sum_t \lt \bv{}'} + 2\norm{\frac{1}{\tmaxpca}\sum_t \wt \bv{}'} \\
 \leq &
C\sqrt{\eta} q^2  f  \sqrt{\frac{r \log n}{\tmaxpca}}   \lambda^- +
C \sqrt{\eta}  \frac{\lambda_v^+}{\lambda^-}  \sqrt{\frac{r \log n}{\tmaxpca}}   \lambda^-  \\
&+
C \sqrt{\eta} q f  \sqrt{\frac{r \log n}{\tmaxpca}}     \lambda^- \\
& +  C   \sqrt{\eta}  \sqrt{\frac{\lambda_v^+}{\lambda^-} f} \sqrt{\frac{r \log n}{\tmaxpca}} \lambda^-  +  C   \sqrt{\eta} q \sqrt{\frac{\lambda_v^+}{\lambda^-} f}  \sqrt{\frac{r \log n}{\tmaxpca}} \lambda^- \\
  \leq & C   \sqrt{\eta}   q f \sqrt{\frac{r \log n}{\tmaxpca}} \lambda^-   +  C   \sqrt{\eta}  \sqrt{\frac{\lambda_v^+}{\lambda^-} f}  \sqrt{\frac{r \log n}{\tmaxpca}} \lambda^-
:= \epsbnd\lambda^-
\end{align*}
where the last line follows from using $q^2 < 2q $ and $\lambda_v^+ \leq \lambda^+$.

In case we only need to bound $\|\noise - \E[\noise]\|$, we can get a tighter bound that contains only the first two terms and not all five. Clearly, we have
\begin{align*}
& \|\noise - \E[\noise]\| \\
& \le C\sqrt{\eta} q^2  f  \sqrt{\frac{r \log n}{\tmaxpca}}   \lambda^- +
C \sqrt{\eta}  \frac{\lambda_v^+}{\lambda^-}  \sqrt{\frac{r \log n}{\tmaxpca}}   \lambda^-  +\\
& : = H_{noise}(\alpha)
\end{align*}

The bound on $\|\frac{1}{\tmaxpca} \sum_t \at \at{}' - \Lam\|_2$ follows directly from the first item of Lemma \ref{hp_bnds}.
\end{proof}

\subsection{A useful corollary that follows from above proof}
From the above proof, we can write out a bound for   $\| \bphi\mathrm{cross} \bphi' \| $  for a projection matrix $\bphi$ by noticing that
each term of $\cross $ is of the form $\sum_t \lt (.)' = \P \sum_t \at (.)'$. Thus $\| \P' \mathrm{cross} \|  = \|\cross\|$. Thus,
$\| \bphi\mathrm{cross} \bphi' \| \le  \|\bphi \P\| \|\mathrm{cross}\| \le \|\bphi \P\| ( \|\E[\mathrm{cross}]\|  +  \|\cross - \E[\cross]\| )$.
Similarly, we can also get a bound on $\lambda_{\max}( \mathrm{noise} ) = \|\mathrm{noise}\|$.

Assume $b = 0.01/f^2$, $ q> \zz > \sqrt{g}$.
Consider $\mathrm{cross}$. If $\alpha \ge C \max\left( \frac{q^2 f^2}{\epsilon_1^2} r \log n, \frac{g f}{\epsilon_1^2} \max(r_v,r) \log n\right)$, then $\epsbnd \le \epsilon_1 \lambda^-$.
If we set $\epsilon_1 = 0.002 \max(  \sqrt{b} q, \sqrt{b} \zz)$ and $b = 0.01/f^2$ (bound on $\outfracrow(\alpha)$), then, since $\zz > \sqrt{g}$ , $\alpha = C f^2 \max(r_v,r) \log n$ suffices. Since $q \ge \zz$, then, $\epsilon_1 = 0.002 \sqrt{b} q$. Thus,
\[
 \| \bphi\mathrm{cross} \bphi' \|   \le  \|\bphi \P\| (2 \sqrt{b} q \lambda^+  + \epsbnd \lambda^-)  \le 2.02 \sqrt{b} \|\bphi \P\| q \lambda^+
\]

Consider $\noise$.  We will use $H_{noise}(\alpha)$ for this. If $\alpha > C \max\left( \frac{q^4 f^2}{\epsilon_2^2} r \log n, \frac{g^2}{\epsilon_2^2}  \max(r_v,r) \log n\right)$, then  $H_{noise}(\alpha) \le \epsilon_2 \lambda^-$. If we set $\epsilon_2 = 0.002 \sqrt{b} \max(   q^2, \zz^2)$, then since $\zz^4 > g^2$, thus, $\alpha = C f^2 \max(r_v,r) \log n$ suffices. Since $q> \zz$, $\epsilon_2 = 0.002 \sqrt{b}  q^2$. We have the following corollary.


\begin{corollary} \label{pca_corol}
If $\alpha = C f^2 \max(r_v,r) \log n$, and if $ q \ge \zz > \sqrt{g}$, then, w.p. $1 - 10 n^{-10}$,
\begin{align*}
 \| \bphi\mathrm{cross} \bphi' \|  & \le  \|\bphi \P\| (2 \sqrt{b} q \lambda^+  + \epsbnd \lambda^-)  \\
 &\le 2.02 \sqrt{b} \|\bphi \P\| q \lambda^+, 
 \\
 \lambda_{\max}( \bphi \mathrm{noise} \bphi)  \le \|\noise\|
 & \le  \sqrt{b} q^2 \lambda^+  + \lambda_v^+  + \epsbnd \lambda^- \\ &\le 1.01 \sqrt{b} q^2 \lambda^+  + \lambda_v^+ \\
& \le 1.01 \sqrt{b} q^2 \lambda^+  + \zz^2 \lambda^-
 \end{align*}
\end{corollary}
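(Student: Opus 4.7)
The plan is to assemble Corollary \ref{pca_corol} from the intermediate estimates already produced inside the proof of Theorem \ref{mainthm_pca}, specializing them to the regime $\alpha = C f^2 \max(r_v,r)\log n$, $q \ge \zz > \sqrt{g}$, and $b \le 0.01/f^2$. I would first exploit a structural observation: every summand in $\cross$ has $\lt = \P\at$ as one factor, so $\cross = \P\P'\cross$, which yields
\[
\|\bphi\,\cross\,\bphi'\| \le \|\bphi \P\|\,\|\cross\|,
\]
and then I would split $\|\cross\| \le \|\E[\cross]\| + \|\cross - \E[\cross]\|$. The bias bound is exactly what the theorem proof computed via Cauchy--Schwarz: $\|\E[\cross_{\l,\w}]\| \le \sqrt{b}\, q \lambda^+$, and the remaining two cross-terms involving the uncorrelated noise vanish in expectation, so $\|\E[\cross]\| \le 2 \sqrt{b}\, q \lambda^+$.

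For the statistical piece, the theorem proof furnishes, via matrix Bernstein (Lemma \ref{hp_bnds}), $\|\cross - \E[\cross]\| \le \epsbnd\,\lambda^-$ whenever $\alpha \ge C \max(q^2 f^2 r\log n / \epsilon_1^2,\; g f \max(r_v,r)\log n / \epsilon_1^2)$. Setting $\epsilon_1 = 0.002 \sqrt{b}\,q$ and using $b f^2 \le 0.01$ together with $q \ge \zz > \sqrt{g}$, both arguments of the $\max$ reduce to order $f^2 \max(r_v,r)\log n$, matching the hypothesized $\alpha$. Combining with the bias bound and absorbing $\lambda^- \le \lambda^+$ yields $\|\bphi\,\cross\,\bphi'\| \le \|\bphi\P\|(2\sqrt{b} q \lambda^+ + \epsbnd \lambda^-) \le 2.02\sqrt{b}\,\|\bphi\P\|\,q\,\lambda^+$.

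For $\noise$, the projector inequality $\lambda_{\max}(\bphi\,\noise\,\bphi) \le \|\noise\|$ is immediate. The bias was already bounded as $\|\E[\noise]\| \le \sqrt{b}\,q^2\lambda^+ + \lambda_v^+$ by Cauchy--Schwarz, and using the sharper ``noise-only'' Bernstein deviation $H_{noise}(\alpha)$ appearing in the theorem proof, the target $\epsilon_2 = 0.002 \sqrt{b}\,q^2$ is ensured by the same $\alpha$ since $\zz^4 \ge g^2$. Summing gives $\|\noise\| \le 1.01\sqrt{b}\,q^2\lambda^+ + \lambda_v^+$, and the last inequality in the corollary follows from $\lambda_v^+ = g\lambda^- \le \zz^2 \lambda^-$.

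The only real obstacle is bookkeeping: one must verify that with the stated $\alpha$ the matrix-Bernstein tails really produce deviations that are small constant multiples (the $0.002$, $1.01$, $2.02$) of the corresponding bias terms, while tracking all of the summands appearing in $\epsbnd$ and $H_{noise}(\alpha)$. All the needed absorptions are mechanical and follow from $q \ge \zz > \sqrt{g}$ and $b f^2 \le 0.01$.
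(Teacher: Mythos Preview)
Your proposal is correct and follows essentially the same approach as the paper's own derivation: the paper also uses the factorization $\cross = \P\P'\cross$ to pull out $\|\bphi\P\|$, then splits $\|\cross\|$ and $\|\noise\|$ into bias plus statistical error, bounds the bias via the Cauchy--Schwarz estimates \eqref{bnd_avg_sig_noise_cor_bnd_1}--\eqref{bnd_avg_sig_noise_cor_bnd_2}, and chooses $\epsilon_1 = 0.002\sqrt{b}\,q$ and $\epsilon_2 = 0.002\sqrt{b}\,q^2$ (using $q\ge\zz>\sqrt{g}$ and $b f^2 \le 0.01$) so that the stated $\alpha$ makes $\epsbnd$ and $H_{noise}(\alpha)$ small enough to yield the constants $2.02$ and $1.01$. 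The concluding step $\lambda_v^+ \le \zz^2\lambda^-$ from $\zz > \sqrt{g}$ is also identical.
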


%

\subsection{Main idea of the proof of Corollary \ref{pca_ssch}} \label{proof_pca_ssch_idea}
The key difference in this proof is our choice of $\D_0$. Since we want to bound $\SE(\Phat,\P)$, we need to pick it in such a way that its matrix of top $r$ singular vectors equals $\Span(\P)$. We pick
\[
\D_0 = \frac{1}{\alpha}  \P \left(  (\alpha-\alpha_0) \Lam +  \alpha_0 \P' \P_0  \Lam \P_0' \P \right) \P'
\]
Clearly, $\lambda_{r+1}(\D_0) = 0$. With this choice of $\D_0$, 
\begin{align*}
\D - \D_0 &= \cross + \cross{}' +  \noise  \\
&+ \left(\frac{1}{\alpha} \sum_t \lt \lt'  - \E[\frac{1}{\alpha} \sum_t \lt \lt'] \right) +  \left( \E[\frac{1}{\alpha} \sum_t \lt \lt'] - \D_0 \right)
\end{align*}
where $\cross, \noise$ are as defined earlier with the change that $\lt$ is now defined differently. Thus, the only thing that changes when bounding these is our definition of $q$. The last term in the expression above equals $c_0 \P_\perp \P_\perp' \P_0  \Lam \P_0' \P_\perp \P_\perp{}'  + c_0 \P_\perp \P_\perp' \P_0  \Lam \P_0' \P \P' + (.)'$ with $c_0: = \frac{\alpha_0}{\alpha}$. This is what  generates the extra $4 \Delta f$ term in our $\SE$ bound.
A complete proof is provided in the Supplement given in the ArXiv version of this work \cite[Appendix IV-A]{jsait_arxiv}).

\subsection{Concentration Bounds}
We state the lemma below so that it can also be used in proving the most general PCA result given in the Supplement given in the ArXiv version of this work \cite[Appendix IV-A]{jsait_arxiv}).
Let
$\Lam_t = \E[\at \at{}']$, $\Lamb = \frac{1}{\tmaxpca}\sum_t \Lam_t$, $\lambda_{\max}^+ := \max_t \|\Lam_t\|$, $\lambda^-_{\avg} = \lambda_{\min}(\Lamb)$, $f = \lambda_{max}^+/\lambda^-_{\avg}$, $\lambda_{v,\max}^+ := \max_t \|\E[\vp_t\vp_t{}']\|$ and $\nois = \lambda_{v,\max}^+/\lambda^-_{\avg}$.

To use the lemma under the simpler i.i.d. assumption used in the main paper, remove the $_{\max}, _{\avg}$ subscripts from all terms, e.g., replace $\lambda_{\max}^+$ by $\lambda^+$, $\lambda^-_{\avg}$ by $\lambda^-$ and so on.

\begin{lem}\label{hp_bnds}
With probability at least $1 - 10n^{-10}$, 
\begin{align*}
&  \left\|\frac{1}{\tmaxpca}\sum_t \at \at{}' -  \Lamb \right\| \le    C \sqrt{\eta} f \sqrt{\frac{r \log n}{\tmaxpca}}  \lambda^-_{\avg}, \\
& \left\|\frac{1}{\tmaxpca}\sum_t \lt \wt{}' - \frac{1}{\tmaxpca}\E\left[\sum_t \lt \wt{}'\right] \right\|_2 \le C\sqrt{\eta} q f  \sqrt{\frac{r \log n}{\tmaxpca}}     \lambda^-_{\avg}, \\ 
& \left\|\frac{1}{\tmaxpca}\sum_t \wt \wt{}' - \frac{1}{\tmaxpca}\E\left[\sum_t \wt \wt{}'\right] \right\|_2 \le  C\sqrt{\eta} q^2  f  \sqrt{\frac{r \log n}{\tmaxpca}}   \lambda^-_{\avg}, \\ 
& \left\|\frac{1}{{\tmaxpca}} \sum_t \lt {\vp_t}' \right\|_2 \le  C\sqrt{\eta} \sqrt{\nois f} \sqrt{\frac{ \max(r_v, r) \log n}{\tmaxpca}} \lambda^-_{\avg}, \\
& \left\|\frac{1}{{\tmaxpca}} \sum_t \wt {\vp_t}' \right\|_2 \le  C\sqrt{\eta} q \sqrt{\nois f} \sqrt{\frac{ \max(r_v, r) \log n}{\tmaxpca}} \lambda^-_{\avg}, \\
& \left\| \frac{1}{{\tmaxpca}} \sum_t  \vp_t {\vp_t}' - \frac{1}{{\tmaxpca}} \E\left[ \sum_t  \vp_t {\vp_t}' \right] \right\|_2 \le  C\sqrt{\eta}  \nois \sqrt{\frac{r_v \log n}{\tmaxpca}} \lambda^-_{\avg}. \\
\end{align*}
\end{lem}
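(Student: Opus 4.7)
The plan is to apply the matrix Bernstein inequality of Tropp \cite{tropp} to each of the six sums in turn. By Assumption \ref{def_right_incoh} the $\at$'s are independent, zero-mean, and bounded, so $\lt = \P \at$ and $\wt = \M_t \lt$ (with $\M_t$ deterministic) inherit these properties; by Assumption \ref{vt_assu} the $\vp_t$'s are independent, bounded, and satisfy $\E[\lt \vp_t'] = 0$, which also gives $\E[\wt \vp_t'] = 0$, so the fourth and fifth sums in the lemma are already centered, while the first, third, and sixth are centered by construction.

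For each sum $\sum_t Z_t$, I will compute the uniform almost-sure bound $L \ge \|Z_t\|$ and the matrix variance $v := \max\bigl(\|\sum_t \E[Z_t Z_t']\|,\ \|\sum_t \E[Z_t' Z_t]\|\bigr)$. Matrix Bernstein then gives failure probability $\le 2 n \exp(-\tau^2/(2(v + L\tau/3)))$ for deviation $\tau$, and setting $\tau = \epsilon \alpha$ with target probability $n^{-10}$ yields the threshold $\epsilon \gtrsim \sqrt{v \log n}/\alpha + (L \log n)/\alpha$; for the stated sample complexity the first (variance) term dominates. The essential deterministic inputs are $\|\at\|^2 \le \mu r \lambda^+_{\max}$, $\|\vp_t\|^2 \le r_v \lambda^+_{v,\max}$, $\|\M_{2,t}\| \le 1$ and $\|\M_{1,t}\P\| \le q$, which give $\|\lt\| \le \sqrt{\mu r \lambda^+_{\max}}$ and $\|\wt\| \le q\sqrt{\mu r \lambda^+_{\max}}$.

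The routine bookkeeping step, applicable to any $Z_t = \bm{x}_t \bm{y}_t' - \E[\bm{x}_t \bm{y}_t']$ with $\bm{x}_t,\bm{y}_t \in \{\at,\lt,\wt,\vp_t\}$, is to use $\E[Z_t Z_t'] \preceq \E[\|\bm{y}_t\|^2 \bm{x}_t \bm{x}_t']$ and combine the deterministic bound on $\|\bm{y}_t\|^2$ with the known covariance of $\bm{x}_t$. Carrying this out in each of the six cases will produce the six displayed rates: for example, for $Z_t = \lt \wt' - \E[\lt \wt']$ one obtains $L \lesssim q \mu r \lambda^+_{\max}$ and $v \lesssim q^2 \mu r (\lambda^+_{\max})^2 \alpha$, whence $\sqrt{v\log n}/\alpha \lesssim q f \sqrt{\mu r \log n/\alpha}\,\lambda^-_{\avg}$, matching the stated bound (recall $\eta = \mu$). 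A union bound over the six events (each failing with probability $\le 2 n^{-10}$) gives the claimed $10 n^{-10}$ total failure probability.

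The main technical point is to keep track of effective dimensions carefully so that the rates contain $r\log n$ or $r_v \log n$ and not $n \log n$: each $\lt$ and $\wt$ factors through the length-$r$ vector $\at$, so the deterministic norm $L$ scales with $\mu r$ rather than $n$, while the $\log n$ arises from matrix Bernstein's dependence on the ambient dimension. Note that the time-averaging parameter $b$ plays no role in this lemma; $b$ enters only later, through Cauchy--Schwarz, when bounding the \emph{expectations} $\|\E[\mathrm{cross}]\|$ and $\|\E[\mathrm{noise}]\|$ in the Davis--Kahan step of Theorem \ref{mainthm_pca}.
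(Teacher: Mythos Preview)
Your proposal is correct and follows essentially the same route as the paper: apply matrix Bernstein termwise, use the boundedness assumptions $\|\at\|^2 \le \mu r \lambda^+_{\max}$, $\|\wt\| \le q\sqrt{\mu r \lambda^+_{\max}}$, $\|\vp_t\|^2 \le r_v \lambda^+_{v,\max}$ to get $R$ and $\sigma^2$, and set the deviation threshold so that the variance term delivers the displayed rate (the paper likewise appeals to $\alpha \ge C f^2 r \log n$ so that the sub-exponential term is dominated). Your observation that $b$ is irrelevant here and only enters via Cauchy--Schwarz on the expected terms is exactly right.
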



\begin{proof}[Proof of Lemma \ref{hp_bnds}]  \
%
%
{\em $\at \at{}'$ term. } This and all other items use Matrix Bernstein for rectangular matrices, Theorem 1.6 of \cite{tail_bound}. This says the following. For a finite sequence of $d_1 \times d_2$ zero mean independent matrices $\Z_k$ with
\begin{align*}
& \|\Z_k\|_2 \le R, \text{ and}  \\
&\max(\|\sum_k \E[ \Z_k{}'\Z_k ]\|_2, \|\sum_k \E[ \Z_k\Z_k{}' ]\|_2) \le \sigma^2, 
\end{align*}
we have $\Pr(\|\sum_k \Z_k\|_2 \ge s ) \le (d_1+d_2) \exp\left(- \frac{s^2/2}{\sigma^2 + Rs / 3}\right) \leq (d_1+d_2) \exp\left(- c\min\left(\frac{s^2}{2\sigma^2}, \frac{3s}{2R}\right)\right)$. Let $\tilde{\Z}_t := \at \at{}'$ and we apply the above result to $\Z_t = \tilde{\Z}_t - \E[\tilde{\Z}_t]$. with $s = \epsilon \tmaxpca$. Now it is easy to see that $\|\Z_t\| \leq 2\|\at\at{}'\| \leq 2 \|\at\|_2^2 \leq 2 \eta r \lambda_{\max}^+ := R$ and similarly, $\|\E[\Z_t^2]\| = \|\E[\|\at\|_2^2 \at\at{}']\|\leq \tmaxpca \cdot  \max_{\at} \|\at\|_2^2 \cdot \max_t \E[\at\at{}'] \leq \tmaxpca \eta r (\lambda_{\max}^+)^2 := \sigma^2 $ and thus, w.p. at most $2 r \exp\left( - c \min\left( \frac{\epsilon^2 \tmaxpca}{r (\lambda_{\max}^+)^2}, \frac{\epsilon^2\tmax}{r \lambda_{\max}^+ \epsilon} \right) \right)$. Now we set $\epsilon = \epsilon_5 \lambda_{\min}^-$ with $\epsilon_5 = C \sqrt\eta f \sqrt{\frac{r\log n}{\tmaxpca}}$ to get our result.

{\em $\lt \wt'$ term. }
%
Let $\Z_t := \lt \wt{}'$. We apply this result to $\tilde{\Z}_t:= \Z_t - \E[\Z_t]$ with $s = \epsilon \tmaxpca$. To get the values of $R$ and $\sigma^2$ in a simple fashion, we use the facts that (i)
if $\|\Z_t\|_2 \le R_1$, then $\|\tilde{\Z}_t\| \le 2R_1$; and (ii) $\sum_t \E[\tilde{\Z}_t \tilde{\Z}_t{}'] \preccurlyeq \sum_t \E[\Z_t \Z_t{}']$. 
Thus, we can set $R$ to two times the bound on $\|\Z_t\|_2$ and  we can set $\sigma^2$ as the maximum of the bounds on $\|\sum_t \E[\Z_t \Z_t{}']\|_2$ and $\|\sum_t \E[\Z_t{}' \Z_t]\|_2$.

It is easy to see that  $R = 2 \sqrt{\eta  r \lambda^+_{\max}} \sqrt{\eta r  q^2 \lambda^+_{\max}} = 2 \eta r q \lambda^+_{\max}$. To get $\sigma^2$, observe that
\begin{align*}
\left\| \sum_t \E[\wt \lt{}' \lt \wt{}'] \right\|_2 & \le \tmaxpca (\max_{\lt} \|\lt\|^2) \cdot \max_t  \|\E[\wt \wt{}']\| \\
& \le \tmaxpca \eta  r \lambda^+_{\max} \cdot  q^2 \lambda^+  = \tmaxpca \eta r q^2 (\lambda^+_{\max})^2.
\end{align*}
Repeating the above steps, we get the same bound on $\| \sum_t \E[\Z_t \Z_t{}'] \|_2$. Thus,  $\sigma^2 = \tmaxpca \eta r q^2 (\lambda^+_{\max})^2$.

Thus, we conclude that,
$
\left\|\sum_t \lt \wt{}' - \E[\sum_t \lt \wt{}'] \right\|_2 \ge \epsilon \tmaxpca
$
w.p. at most
$2n \exp\left(- c \min \left(\frac{\epsilon^2 \tmaxpca}{\eta r q^2 (\lambda^+_{\max})^2}, \frac{\epsilon \tmaxpca}{\eta r q \lambda^+_{\max}}\right)\right)$
Set $\epsilon = \epsilon_0 \lambda^-$ with $\epsilon_0 = c q f \sqrt{\frac{r \log n}{\tmaxpca}}$ so that our bound holds w.p. at most $2n^{-10}$. This follows because $\tmaxpca \geq C f^2 r \log n$.

%
%

{\em $\wt \wt{}'$ , $\lt \vp_t'$, $\wt \vp_t'$ and $\vp_t \vp_t'$ terms. } Apply matrix Bernstein as done above.
\end{proof}

\section{Proof of Theorem \ref{thm1} and Corollary \ref{cor:thm1}} \label{proof_of_thm1}

\begin{proof}[Proof of Theorem \ref{thm1}]
The overall structure of this proof is similar to that in \cite{rrpcp_isit15,rrpcp_aistats}.
%
Define
\[
\that_{j-1,fin}: =  \that_{j-1} + K \alpha, \  t_{j,*}=\that_{j-1,fin} + \left \lceil \frac{t_j - \that_{j-1,fin}}{\alpha} \right \rceil \alpha
\]
Thus, $\that_{j-1,fin}$ is the time at which the $(j-1)$-th subspace update is complete; w.h.p., this occurs before $t_j$. With this assumption, $t_{j,*}$ is such that $t_j$ lies in the interval $[t_{j,*}-\alpha+1,t_{j,*}]$.

\noindent Recall from the algorithm that we increment $j$ to $j+1$ at $t= \that_j+K\alpha:= \that_{j,fin}$.
Define the events
\ben
\item $\mathrm{Det0}:= \{\that_j = t_{j,*} \} = \{\lambda_{\max}(\frac{1}{\alpha} \sum_{t= t_{j,*}-\alpha+1}^{t_{j,*}} \bphi \lhat_t \lhat_t'\bphi) > \lthres\}$ and
\\ $\mathrm{Det1}:= \{\that_j = t_{j,*} + \alpha\} = \{\lambda_{\max}(\frac{1}{\alpha} \sum_{t= t_{j,*} +1}^{t_{j,*}+\alpha} \bphi \lhat_t \lhat_t'\bphi) > \lthres\} $,
\item $\mathrm{\subup}:=\cap_{k=1}^K  \mathrm{\subup}_k$ where $\mathrm{\subup}_k:= \{\SE(\Phat_{j,k}, \P_{j}) \le q_k\}$,

\item $\mathrm{NoFalseDets}:= \{\text{for all $\J^\alpha \subseteq [\that_{j,fin}, t_{j+1})$, } \\ \lambda_{\max}(\frac{1}{\alpha} \sum_{t \in \J^\alpha} \bphi \lhat_t \lhat_t'\bphi) \le \lthres\}$
\item $\Gamma_{0,\ed}:= \{\SE(\Phat_0, \P_0) \le 0.25 \}$,
\item  $\Gamma_{j,\ed}:= \Gamma_{j-1,\ed} \cap
\big( (\mathrm{Det0} \cap \mathrm{\subup} \cap \mathrm{NoFalseDets}) \cup
(\overline{\mathrm{Det0}} \cap \mathrm{Det1} \cap \mathrm{\subup} \cap \mathrm{NoFalseDets}) \big)$.
\een

Let $p_0$ denote the probability that, conditioned on $\Gamma_{j-1,\ed}$, the change got detected at $t=t_{j,*}$, i.e., let
\[
p_0:= \Pr(\mathrm{Det0}|\Gamma_{j-1,\ed}).
\]
Thus, $\Pr(\overline{\mathrm{Det0}}|\Gamma_{j-1,\ed}) = 1- p_0$. It is not easy to bound $p_0$. However, as we will see, this will not be needed. Assume that $\Gamma_{j-1,\ed} \cap \overline{\mathrm{Det0}}$ holds. Consider the interval $\J^\alpha: = [t_{j,*}, t_{j,*}+\alpha)$. This interval starts at or after $t_j$, so, for all $t$ in this interval, the subspace has changed. For this interval, $\bphi = \I - \Phat_{j-1} \Phat_{j-1}{}'$. 
Applying the first item of Lemma \ref{lem:sschangedet}, w.p. at least $1-10n^{-10}$,
\[
\lambda_{\max} \left(\frac{1}{\alpha} \sum_{t \in \J^\alpha} \bphi \lhat_t \lhat_t'\bphi \right) \geq \lthres
\]
and thus $\that_j = t_{j,*} + \alpha$.
In other words,
\[
\Pr(\mathrm{Det1} | \Gamma_{j-1,\ed} \cap \overline{\mathrm{Det0}}) \ge 1 - 10n^{-10}.
\]

Conditioned on $\Gamma_{j-1,\ed} \cap \overline{\mathrm{Det0}} \cap \mathrm{Det1}$, the first SVD step is done at $t= \that_j + \alpha = t_{j,*} + 2\alpha$ and the subsequent steps are done every $\alpha$ samples. We can prove Lemma \ref{lem:reprocspcalemone} with $\Gamma_{j,0}$ replaced by $\Gamma_{j,\ed} \cap \overline{\mathrm{Det0}} \cap \mathrm{Det1}$ and Lemma \ref{lem:reprocspcalemk} with $\Gamma_{j,k-1}$ replaced by $\Gamma_{j,\ed} \cap \overline{\mathrm{Det0}} \cap \mathrm{Det1} \cap \subup_1 \cap \cdots \cap \subup_{k-1}$ and with the $k$-th SVD interval being $\J_k:=[\that_j+(k-1)\alpha, \that_j + k \alpha)$. Applying Lemmas \ref{lem:reprocspcalemone}, and \ref{lem:reprocspcalemk} for each $k$, we get
\[
\Pr(\subup |\Gamma_{j-1,\ed} \cap \overline{\mathrm{Det0}} \cap \mathrm{Det1}) \ge ( 1 - 10n^{-10})^{K+1}.
\]
We can also do a similar thing for the case when the  change is detected at $t_{j,*}$, i.e. when $\mathrm{Det0}$ holds. In this case, we replace $\Gamma_{j, 0}$ by $\Gamma_{j,\ed} \cap \mathrm{Det0}$ and $\Gamma_{j, k}$ by $\Gamma_{j,\ed} \cap \mathrm{Det0} \cap \subup_1 \cap \cdots \cap \subup_{k-1}$ and conclude that
\[
\Pr(\subup|\Gamma_{j-1,\ed} \cap \mathrm{Det0}) \ge ( 1 - 10n^{-10})^{K}.
\]

Finally consider the $\mathrm{NoFalseDets}$ event. First, assume that $\Gamma_{j-1,\ed} \cap \mathrm{Det0} \cap \subup$ holds.  Consider any interval $\J^\alpha \subseteq [\that_{j,fin}, t_{j+1})$. In this interval, $\Phat_{(t)} = \Phat_j$, $\bphi = \I -  \Phat_j \Phat_j{}'$ and $\SE(\Phat_j,\P_j) \le \zz$. Using the second part of Lemma \ref{lem:sschangedet} we conclude that w.p. at least $1- 10n^{-10}$,
\[
\lambda_{\max} \left(\frac{1}{\alpha} \sum_{t \in \J^\alpha} \bphi \lhat_t \lhat_t'\bphi \right)  < \lthres
\]
Since $\mathrm{Det0}$ holds, $\that_j = t_{j,*}$.
Thus, we have a total of $\lfloor \frac{t_{j+1} - t_{j,*} - K \alpha - \alphadel}{\alpha} \rfloor$ intervals $\J^\alpha$ that are subsets of $[\that_{j,fin}, t_{j+1})$. Moreover, $\lfloor \frac{t_{j+1} - t_{j,*} - K \alpha - \alphadel}{\alpha} \rfloor \le \lfloor \frac{t_{j+1} - t_j - K \alpha - \alphadel}{\alpha} \rfloor \le \lfloor \frac{t_{j+1} - t_j}{\alpha} \rfloor - (K+1)$ since $\alpha \le \alphadel$.
Thus,
\begin{align*}
\Pr(\mathrm{NoFalseDets} | \Gamma_{j-1,\ed} \cap \mathrm{Det0} \cap \subup) \\ \ge (1 - 10n^{-10})^{\lfloor \frac{t_{j+1} - t_j}{\alpha} \rfloor - (K)}
\end{align*}
On the other hand, if we condition on $\Gamma_{j-1,\ed} \cap \overline{\mathrm{Det0}} \cap \mathrm{Det1} \cap \subup$, then $\that_j = t_{j,*} + \alpha$. Thus,
\begin{align*}
\Pr(\mathrm{NoFalseDets} | \Gamma_{j-1,\ed} \cap \overline{\mathrm{Det0}} \cap \mathrm{Det1} \cap \subup) \\
\ge (1 - 10n^{-10})^{\lfloor \frac{t_{j+1} - t_j}{\alpha} \rfloor - (K+1)}
\end{align*}
We can now combine the above facts to bound $\Pr(\Gamma_{j,\ed}|\Gamma_{j-1,\ed})$. Recall that $p_0:= \Pr(\mathrm{Det0}|\Gamma_{j-1,\ed})$.
Clearly, the events $(\mathrm{Det0} \cap \subup \cap \mathrm{NoFalseDets})$ and $(\overline{\mathrm{Det0}} \cap \mathrm{Det1} \cap \subup \cap \mathrm{NoFalseDets})$ are disjoint. Thus,
\begin{align*}
& \Pr(\Gamma_{j,\ed}|\Gamma_{j-1,\ed}) \\
& = p_0 \Pr(\subup \cap \mathrm{NoFalseDets} |\Gamma_{j-1,\ed} \cap \mathrm{Det0})  \\
& + (1-p_0) \Pr(\mathrm{Det1}|\Gamma_{j-1,\ed} \cap \overline{\mathrm{Det0}}) \cdot \\
& \Pr(\subup \cap \mathrm{NoFalseDets} |\Gamma_{j-1,\ed}\cap \overline{\mathrm{Det0}} \cap \mathrm{Det1}) \\
& \ge p_0 ( 1 - 10n^{-10})^{K} (1 - 10n^{-10})^{\lfloor \frac{t_{j+1} - t_j}{\alpha} \rfloor - (K)} \\
& + (1-p_0) ( 1 - 10n^{-10}) \cdot \\& ( 1 - 10n^{-10})^{K}  (1 - 10n^{-10})^{\lfloor \frac{t_{j+1} - t_j }{\alpha} \rfloor - (K+1)}  \\
& =  ( 1 - 10n^{-10})^{\lfloor \frac{t_{j+1} - t_j}{\alpha} \rfloor}
\ge ( 1 - 10n^{-10})^{t_{j+1}-t_j}.
\end{align*}
Thus, since the events $\Gamma_{j,\ed}$ are nested,
$
\Pr(\Gamma_{J,\ed}|\Gamma_{0,\ed}) = \prod_j \Pr(\Gamma_{j,\ed}|\Gamma_{j-1,\ed}) \ge \prod_j ( 1 - 10n^{-10})^{t_{j+1}-t_j} = ( 1 - 10n^{-10})^d
\ge  1 - 10d n^{-10}.
$
\end{proof}


\begin{proof}[Proof of  Corollary \ref{cor:thm1}]
%
It should be noted that $basis(\M)$ is not a unique matrix, it refers to any matrix $\P$ that has orthonormal columns and whose span equals the span of $\M$. Thus $basis([\Phat_{j-1},\Phat_j]) \equiv basis( [ \Phat_{j-1}, \Phat_{j-1,\perp} \Phat_j]) \equiv basis( [ \Phat_{j}, \Phat_{j,\perp} \Phat_{j-1}])$. Let us denote any of these matrices by $\Phat_{j-1,j}$.

For $t \in [\that_{j-1} + K\alpha, t_{j})$,  $\Pt = \P_{j-1}$ while for $t \in [t_j, \that_j + K\alpha -1 )$, $\Pt = \P_j$. For all $t$ in these two intervals $\Phat_{(t)} = \Phat_{j-1,j}$.
The proof of this corollary is an easy consequence of this fact and the fact that, for two basis matrices $\P_1, \P_2$ that are mutually orthonormal, i.e., for which $\P_1{}' \P_2 = 0$,
\[
(\I - \P_1 \P_1' - \P_2 \P_2') = (\I - \P_1\P_1') (\I - \P_2\P_2').
\]
Thus, $\SE(\Phat_{j-1,j}, \P_{j-1}) \le \SE(\Phat_{j-1}, \P_{j-1}) \le \zz$ and $\SE(\Phat_{j-1,j}, \P_j) \le \SE(\Phat_j, \P_j) \le \zz$.
\end{proof}

\section{Proofs for Section \ref{sec:norst}: Time complexity derivation and Proof of Theorem \ref{thm1_newnorst}}\label{sec:proof}

\subsection{Time complexity derivation} \label{timecomp_derive}
Consider initialization.
To ensure that $\SE(\Phat_0, \P_0) \in O(1/\sqrt{r})$, we need to use $C \log r$ iterations of AltProj. Since there is no lower bound in the AltProj guarantee on the required number of matrix columns (except the trivial lower bound of rank) \cite{robpca_nonconvex}, we can use $t_\train = C r$ frames for initialization. Thus the initialization complexity is $O(n t_\train r^2 \log(\sqrt{r}) = O(n r^3 \log r)$ \cite{robpca_nonconvex}. The projected-CS step complexity is equal to the cost of a matrix vector multiplication with the measurement matrix times negative logarithm of the desired accuracy in solving the $l_1$ minimization problem. Since the measurement matrix for the CS step is $\I - \Phat_{(t-1)} \Phat_{(t-1)}{}'$, the cost per CS step (per frame) is $O(nr\log(1/\epsilon))$ \cite{l1_best} and so the total cost is $O((d-t_\train) nr\log(1/\epsilon))$.
 The subspace update involves at most $((d - t_\train)/ \alpha)$ rank $r$-SVD's on $n\times \alpha$ matrices all of which have constant eigen-gap (this is indirectly proved in the proofs of the second item of Lemmas \ref{lem:reprocspcalemone} and \ref{lem:reprocspcalemk}). Thus the total time for subspace update steps is at most $((d-t_\train)/\alpha)*O(n \alpha r \log(1/\epsilon)) = O((d-t_\train) n r \log(1/\epsilon))$ \cite{musco2015randomized}. Thus the running time of the complete algorithm is $O(ndr\log(1/\epsilon) + nr^3 \log r)$. As long as $r^2 \log r \le d \log(1/\epsilon)$, the time complexity of the entire algorithm is $O(n d r\log(1/\epsilon))$.


\subsection{Proof of Theorem \ref{thm1_newnorst} for NORST-NoDet}
In this algorithm we do not detect change. We just keep updating the subspace by $r$-SVD applied every $\alpha$ time instants on the last $\alpha$ $\lhat_t$'s, $\Lhat_{t;\alpha}$. For $\alpha$-intervals $\J$ for which $\Pt = \P_j$ for all $t \in \J$, there is no change to the analysis. 
We start at $t = t_0 = \that_0 = 1$ with initial subspace estimate $\Phat_0$ available. Let $\Delta_0 = \SE(\Phat_0, \P_0)$.
The first subspace update is done at $t = \alpha$, the second at $t=2\alpha$, and so on. By Lemma \ref{lem:reprocspcalemone} with $\Phat_{j,0} =\Phat_0$, we can show that after one update, the error reduces to $ 1.2 \max(\Delta_0/4, \zz )$. After this, by applying Lemma \ref{lem:reprocspcalemk} $K-1$ times, we can show that, after at most $K$ steps with $K = \log (\Delta_0/\zz)$, the error reduces to $1.2 \zz$.  Beyond this time, the error does not decrease further.
We know that $\Pt = \P_0$ for $t \in [t_0, (K+2)\alpha]$, but can change after that. 

Consider the $\alpha$-interval  $\J$ that contains the change time $t_1$.  The projected CS analysis for this interval remains exactly the same as above. But to analyze the subspace update for this interval we need to  use Corollary \ref{pca_ssch}. More generally consider the $j$-th change, and the interval $\J = [\lfloor t_j/\alpha \rfloor+1, \lfloor t_j/\alpha\rfloor + \alpha]$, which is the $\alpha$-frame interval that contains $t_j$.

For $t \in \J$, we have $\lhat_t = \yt - \xhat_t = \lt +\et + \vt$ where
\begin{align*}
\et &= \bm{I}_{\Tt}\left(\bpsi_{\Tt}{}'\bpsi_{\Tt}\right)^{-1} \I_{\Tt}{}' \bpsi(\lt + \vt):= (\e_{\l})_t + (\e_{\v})_t
\end{align*}
$\bpsi = \I - \Phat_{j-1}\Phat_{j-1}{}'$,  $\lt = \P_{j-1} \at $ for $t \in [\lfloor t_j/\alpha\rfloor, t_j)$ and $\lt = \P_{j} \at $ for $t \in [t_j, \lfloor t_j/\alpha \rfloor +\alpha)$. 

Let $\Phat_{j,0}$ denote the subspace estimate $\Phat_{(t)}$ computed for this interval.
We apply Corollary \ref{pca_ssch} with $\yt \equiv \lhat_t$, $\wt \equiv (\e_{\l})_t$, $\vp_t \equiv (\e_{\v})_t + \vt$, $\lt \equiv \lt$, $\M_{1,t} = -\left( \bpsi_{\T_t}{}' \bpsi_{\T_t}\right)^{-1} \bpsi_{\T_t}{}'$, $\Phat = \Phat_{j,0}$, $\P = \P_j$, $\P_0 = \P_{j-1}$.
Since
$\norm{\M_{1, t} \P_0} = \| \left( \bpsi_{\T_t}{}' \bpsi_{\T_t}\right)^{-1} \bpsi_{\T_t}{}' \P_j\| \leq 1.2 \zz $, $\norm{\M_{1, t} \P} = \| \left( \bpsi_{\T_t}{}' \bpsi_{\T_t}\right)^{-1} \bpsi_{\T_t}{}' \P_j\| \leq 1.2 (\zz + \SE(\P_{j-1}, \P_j))$, thus $q_{00} = 1.2 (\zz + \SE(\P_{j-1}, \P_j))$.
Also, $\bz \equiv b_0= 0.01/f^2$ which is the upper bound on $\outfracrow(\alpha)$, $\|\E[(\e_{\v})_t (\e_{\v})_t{}']\| \leq (1.2)^2 \lambda_v^+$.
Thus, with probability at least $1 - 10n^{-10}$,
\begin{align*}
\SE(\Phat_{j,0},\P_j) \le 2.5 (3 (\Delta f + 4 \cdot 0.1  \cdot 1.2 (\zz + \Delta) + \frac{\lambda_{v}^+ }{ \lambda^-} ) \le 10 \Delta
\end{align*}
Here we used $\frac{\lambda_{v}^+ }{ \lambda^-} = \zz^2 < \Delta$. 

Redefine $\that_j = \lfloor t_j/\alpha \rfloor + \alpha$ and $\Phat_{j,0}$ to denote the estimate from the change interval.
To analyze the next $\alpha$-interval for new-NORST, we apply Lemma \ref{lem:reprocspcalemone} with above re-definitions. Thus, $q_0 = 1.2 \cdot 10 \Delta$. We can conclude that $\SE(\Phat_{j,1}, \P_j) \le \max(0.3 q_0,\zz) = q_1$. For the next $K-1$ intervals, we apply  Lemma \ref{lem:reprocspcalemk} $K-1$ times with $q_{k} =1.2 \max(0.25 q_{k-1},\zz)$.

\section*{Author Biographies}

\begin{IEEEbiography}[]{Praneeth Narayanamurthy} (Email: pkurpadn@iastate.edu) (S' 18) is a Ph.D. student in the Department of Electrical and Computer Engineering at Iowa State University (Ames, IA). He previously obtained his B.Tech degree in Electrical and Electronics Engineering from National Institute of Technology Karnataka (Surathkal, India) in 2014. His research interests include the algorithmic and theoretical aspects of High-Dimensional Statistical Signal Processing, and Machine Learning.
\end{IEEEbiography}

\begin{IEEEbiography}[]{Namrata Vaswani}(Email: namrata@iastate.edu)
received a B.Tech from the Indian Institute of Technology (IIT), Delhi, in 1999 and a Ph.D. from UMD in 2004, both in electrical engineering. Since Fall 2005, she has been with the Iowa State University where she is currently the Anderlik Professor of Electrical and Computer Engineering. Her research interests lie in a data science, with a particular focus on Statistical Machine Learning, Signal Processing, and Computer Vision. She has served two terms as an Associate Editor for the IEEE Transactions on Signal Processing; as a lead guest-editor for a Proceedings of the IEEE Special Issue (Rethinking PCA for modern datasets); and is currently serving as an Area Editor for the  IEEE Signal Processing Magazine. Vaswani is a recipient of the Iowa State Early Career Engineering Faculty Research Award (2014), the Iowa State University Mid-Career Achievement in Research Award (2019), University of Maryland's ECE Distinguished Alumni Award (2019), as well as the 2014 IEEE Signal Processing Society Best Paper Award. This major award recognized the contributions of her 2010 IEEE Transactions on Signal Processing paper co-authored with her student Wei Lu on {\em Modified-CS: Modifying compressive sensing for problems with partially known support}. She is a Fellow of the IEEE  (class of 2019).
\end{IEEEbiography}






\end{document}